\newtheorem{observation}[theorem]{Observation}
\newcommand{\br}[1]{\left\{#1\right\}}
\newcommand{\comment}[1]{}
\newcommand{\REAL}{\ensuremath{\mathbb{R}}}
\newcommand{\ceil}[1]{\left \lceil #1 \right \rceil}
\newcommand{\smi}{\sum_{i=1}^n}
\newcommand{\norm}[1]{\left\lVert#1\right\rVert}
\newcommand{\eps}{\varepsilon}
\newcommand{\RidgeOurAlg}{\textsc{Ridgecv-Boost}}
\newcommand{\LassoOurAlg}{\textsc{Lassocv-Boost}}
\newcommand{\calcCoreset}{\textsc{LMS-Coreset}}
\newcommand{\LstsqOurAlg}{\textsc{LinReg-Boost}}
\newcommand{\folds}{m}
\newcommand{\ElasticOurAlg}{\textsc{Elasticcv-Boost}}
\newcommand{\caracord}{\textsc{Sparse-Caratheodory-Set}}
\newcommand{\caraf}{\textsc{Fast-Caratheodory-Set}}
\newcommand{\cova}{\textsc{Caratheodory-Matrix}}
\newcommand{\cordCova}{\textsc{Sparse-Caratheodory-Matrix}}
\newcommand{\tcara}{t}
\newcommand{\caras}{\textsc{Caratheodory}}\makeatletter
\newcommand{\nosemic}{\renewcommand{\@endalgocfline}{\relax}}
\newcommand{\dosemic}{\renewcommand{\@endalgocfline}{\algocf@endline}}
\let\oldnl\nl
\newcommand{\nonl}{\renewcommand{\nl}{\let\nl\oldnl}}
\newcommand{\lstsqq}{\texttt{LinearRegression}}
\newcommand{\ridge}{\texttt{RidgeCV}}
\newcommand{\lasso}{\texttt{LassoCV}}
\newcommand{\elastic}{\texttt{ElasticNetCV}}
\newcommand{\sklearn}{\texttt{sklearn.linear\_model}}
\newcommand{\scipy}{\texttt{scipy.linalg}}
\newcommand{\alphas}{\mathbb{A}}
\newif\iftree
\newif\ifproofs
\newif\ifqr
\begin{document}

\title{Fast and Accurate Least-Mean-Squares Solvers}

\author{\name Alaa Maalouf$\dagger$ \email                       
       alaamalouf12@gmail.com \\
       \name Ibrahim Jubran$\dagger$ \email 
       ibrahim.jub@gmail.com\\
        \name Dan Feldman \email 
       dannyf.post@gmail.com\\
       \addr \addr Robotics \& Big Data Labs\\
       Department of Computer Science\\
       University of Haifa\\
       Abba Khoushy Ave 199, Israel}

\thanks{$\dagger$ Those authors contributed equally to this work.\\ 
An extended abstract of this work was previously published at the Neural Information Processing Systems 2019 (NeurIPS'19)~\cite{maalouf2019fast}. In this extended version we provide faster algorithms which can support high dimensional data, extend our results to handle a wider range of problems, and conduct extensive new experimental results.}

\maketitle

\begin{abstract}
Least-mean squares (LMS) solvers such as Linear / Ridge / Lasso-Regression, SVD and Elastic-Net not only solve fundamental machine learning problems, but are also the building blocks in a variety of other methods, such as decision trees and matrix factorizations.

We suggest an algorithm that gets a finite set of $n$ $d$-dimensional real vectors and returns a weighted subset of $d+1$ vectors whose sum is \emph{exactly} the same. The proof in Caratheodory's Theorem (1907) computes such a subset in $O(n^2d^2)$ time and thus not used in practice. Our algorithm computes this subset in $O(nd+d^4\log{n})$ time, using $O(\log n)$ calls to Caratheodory's construction on small but ``smart'' subsets. This is based on a novel paradigm of fusion between different data summarization techniques, known as sketches and coresets.

For large values of $d$, we suggest a faster construction that takes $O(nd)$ time (linear in the input's size) and returns a weighted subset of $O(d)$ sparsified input points. Here, sparsified point means that some of its entries were replaced by zeroes.

As an example application, we show how it can be used to boost the performance of existing LMS solvers, such as those in scikit-learn library, up to x100. Generalization for streaming and distributed (big) data is trivial.
Extensive experimental results and complete open source code are also provided.
\end{abstract}

\begin{keywords}
  Regression, Least Mean Squares Solvers, Coresets, Sketches, Caratheodory's Theorem, Big Data
\end{keywords}

\section{Introduction and Motivation}
Least-Mean-Squares (LMS) solvers are the family of fundamental optimization problems in machine learning and statistics that include linear regression, Principle Component Analysis (PCA), Singular Value Decomposition (SVD), Lasso and Ridge regression, Elastic net, and many more~\cite{golub1971singular,jolliffe2011principal,hoerl1970ridge,seber2012linear,zou2005regularization,tibshirani1996regression,safavian1991survey}. See formal definition below. First closed form solutions for problems such as linear regression were published by e.g. Pearson~\cite{pearson1900x} around 1900 but were probably known before. Nevertheless, today they are still used extensively as building blocks in both academy and industry for normalization~\cite{liang2013distributed,kang2011scalable,afrabandpey2016regression}, spectral clustering~\cite{peng2015robust}, graph theory~\cite{zhang2018understanding}, prediction~\cite{copas1983regression,porco2015low}, dimensionality reduction~\cite{laparra2015dimensionality}, feature selection~\cite{gallagher2017cross} and many more; see more examples in~\cite{golub2012matrix}.

Least-Mean-Squares solver in this paper is an optimization problem that gets as input an $n\times d$ real matrix $A$, and another $n$-dimensional real vector $b$ (possibly the zero vector). It aims to minimize the sum of squared distances from the rows (points) of $A$ to some hyperplane that is represented by its normal or vector of $d$ coefficients $x$, that is constrained to be in a given set $X\subseteq\REAL^d$:
\begin{equation} \label{DefLMS}
\min_{x\in X}f(\norm{Ax-b}_2)+g(x).
\end{equation}
Here, $g$ is called a \emph{regularization term}. For example: in linear regression $X=\REAL^d$, $f(y)=y^2$ for every $y\in\REAL$ and $g(x)=0$ for every $x\in X$. In Lasso $f(y) = y^2$ for every $y\in\REAL$ and $g(x) = \alpha \cdot \norm{x}_1$ for every $x\in \REAL^d$ and $\alpha>0$.
Such LMS solvers can be computed via the covariance matrix $A^TA$. For example, the solution to linear regression of minimizing $\norm{Ax-b}_2$ is $(A^TA)^{-1}A^Tb$.

\subsection{Related work} \label{sec:relatedWork}
While there are many LMS solvers and corresponding implementations, there is always a trade-off between their accuracy and running time; see comparison table in~\cite{bauckhage2015numpy} with references therein. 
The reason is related to the fact that computing the covariance matrix of $A$ can be done essentially in one of two ways: (i) summing the $d\times d$ outer product $a_ia_i^T$ of the $i$th row $a_i^T$ of $A$ over every $i$, $1\leq i\leq n$. This is due to the fact that $A^TA=\sum_{i=1}^n a_ia_i^T$, or (ii) factorization of $A$, e.g. using SVD or the QR decomposition~\cite{golub1971singular}.

\noindent\textbf{Numerical issues. }Method (i) is easy to implement for streaming rows of $A$ by maintaining only $d^2$ entries of the covariance matrix for the $n$ vectors seen so far, or maintaining its inverse $(A^TA)^{-1}$ as explained e.g. in~\cite{golub2012matrix}. This takes $O(d^2)$ time for each vector insertion and requires $O(d^2)$ memory, which is the same as the desired output covariance matrix. However, every such addition may introduce another numerical error which accumulates over time. This error increases significantly when running the algorithms using 32 bit floating point representation, which is common for GPU computations; see Fig.~\ref{fig:accuracy} for example. This solution is similar to maintaining the set of $d$ rows of the matrix $DV^T$, where $A = UDV^T$ is the SVD of $A$, which is not a subset of the original input matrix $A$ but has the same covariance matrix $A^TA=VD^2V$.
A common problem is that to compute $(A^TA)^{-1}$, the matrix $A^TA$ must be invertible. This may not be the case due to numerical issues. In algorithms such as Lasso, the input cannot be a covariance matrix, but only a corresponding matrix whose covariance matrix is $A^TA$, that can be computed from the Cholesky decomposition~\cite{bjorck1967solving} that returns a left triangular matrix $A$ for the given covariance matrix $A^TA$. However, Cholesky decomposition can be applied only on positive-definite matrices, which is not the case even for small numerical errors that are added to $A^TA$. See Section~\ref{sec:ER} for more details and empirical evidence.

\noindent\textbf{Running-time issues. }Method (ii) above utilizes factorizations such as SVD, i.e., $A=UDV^T$ to compute the covariance matrix via $A^TA=VD^2V^T$ or the QR decomposition $A=QR$ to compute $A^TA=R^TQ^TQR^T=R^TR$. This approach is known to be much more stable. However, it is much more time consuming: while in theory the running time is $O(nd^2)$ as in the first method, the constants that are hidden in the $O(\cdot)$ notation are significantly larger.
Moreover, unlike Method (i), it is impossible to compute such factorizations exactly for streaming data~\cite{clarkson2009numerical}.

\noindent\textbf{Caratheodory's Theorem~\cite{caratheodory1907variabilitatsbereich} }states that every point contained in the convex hull of $n$ points in $\REAL^d$ can be represented as a convex combination of a subset of at most $d+1$ points, which we call the \emph{Caratheodory set}; see Section~\ref{sec:prelim} and Fig.~\ref{fig:booster}. This implies that we can maintain a weighted (scaled) set of $d^2+1$ points (rows) whose covariance matrix is the same as $A$, since $(1/n)\sum_i a_ia_i^T$ is the mean of $n$ matrices and thus in the convex hull of their corresponding points in $\REAL^{(d^2)}$; see Algorithm~\ref{covAlg}.
The fact that we can maintain such a small sized subset of points instead of updating linear combinations of all the $n$ points seen so far, significantly reduces the numerical errors as shown in Fig.~\ref{fig:accuracy}. Unfortunately, computing this set from Caratheodory's Theorem takes $O(n^2d^2)$ or $O(nd^3)$ time via $O(n)$ calls to an LMS solver. This fact makes it non-practical to use in an LMS solvers, as we aim to do in this work, and may explain the lack of software or source code for this algorithm on the web.

\noindent\textbf{Approximations via Coresets and Sketches. } In the recent decades numerous approximation and data summarization algorithms were suggested to \emph{approximate} the problem in~\eqref{DefLMS}; see e.g.~\cite{drineas2006sampling, jubran2019provable, clarkson2017low, maalouf2019tight} and references therein. One possible approach is to compute a small matrix $S$ whose covariance $S^TS$ approximates, in some sense, the covariance matrix $A^TA$ of the input data $A$. The term \emph{coreset} is usually used when $S$ is a weighted (scaled) subset of rows from the $n$ rows of the input matrix.
The matrix $S$ is sometimes called a \emph{sketch} if each rows in $S$ is a linear combination of few or all rows in $A$, i.e. $S=WA$ for some matrix $W\in\REAL^{s\times n}$.
However, those coresets and sketches usually yield $(1+\eps)$-multiplicative approximations for $\norm{Ax}_2^2$ by $\norm{Sx}_2^2$ where the matrix $S$ is of $(d/\eps)^{O(1)}$ rows and $x$ may be any vector, or the smallest/largest singular vector of $S$ or $A$; see lower bounds in~\cite{feldman2010coresets}.
Moreover, a $(1+\eps)$-approximation to $\norm{Ax}_2^2$ by $\norm{Sx}_2^2$ does not guarantee an approximation to the actual entries or eigenvectors of $A$ by $S$ that may be very different.

\noindent\textbf{Accurately handling big data. } The algorithms in this paper return \emph{accurate} coresets ($\eps=0$), which is less common in the literature; see~\cite{jubran2019introduction} for a brief summary. These algorithms can be used to compute the covariance matrix $A^TA$ via a scaled subset of rows from the input matrix $A$. Such coresets support unbounded stream of input rows using memory that is \emph{sub-linear} in their size, and also support dynamic/distributed data in parallel. This is by the useful merge-and-reduce property of coresets that allow them to handle big data; see details e.g. in~\cite{agarwal2004approximating}. Unlike traditional coresets that pay additional logarithmic multiplicative factors due to the usage of merge-reduce trees and increasing error, the suggested weighted subsets in this paper do not introduce additional error to the resulting compression since they preserve the desired statistics accurately. The actual numerical errors are measured in the experimental results, with analysis that explain the differences.

A main advantage of a coreset over a sketch is that it preserves sparsity of the input rows~\cite{feldmanmik}, which usually reduces theoretical running time. Our experiments show, as expected from the analysis, that coresets can also be used to significantly improve the numerical stability of existing algorithms. Another advantage is that the same coreset can be used for parameter tuning over a large set of candidates. In addition to other reasons, this significantly reduces the running time of such algorithms in our experiments; see Section~\ref{sec:ER}.

\subsection{Our contribution}
A natural question that follows from the previous section is: \emph{can we maintain the optimal solution for LMS problems both accurately and fast?} We answer this question affirmably by suggesting:
\begin{enumerate} [(i)]
\setlength{\itemsep}{-1pt}
\item  the first algorithm that computes the Caratheodory set of $n$ input points in $O(nd+d^4\log{n})$ time. This is by using a novel approach of coreset/skecthes fusion that is explained in the next section; see Algorithm~\ref{fastCara} and Theorem~\ref{fastCara}.

\item an algorithm that maintains a (``coreset'') matrix $S\in\REAL^{(d^2+1)\times d}$ such that: (a) its set of rows is a scaled subset of rows from $A\in\REAL^{n\times d}$ whose rows are the input points, and (b) the covariance matrices of $S$ and $A$ are the same, i.e.,  $S^TS=A^TA$; see Algorithm~\ref{covAlg} and Theorem~\ref{theorem:cov}.

\item a faster, yet potentially less numerically accurate, algorithm for computing a weaker variant of the Caratheodory set for high dimensional data; see Definition~\ref{def:cordCara} and Algorithm~\ref{cordAlg}. This algorithm runs in $O(nd)$ time, which is the optimal time for this task. Using this improved algorithm, a (``coreset'') matrix $S$ as in (ii) above, whose rows are not a scaled subset of rows from $A$, can be computed in a faster (optimal) time.

\item example applications for boosting the performance of \emph{existing} solvers by running them on the matrix $S$ above or its variants for Linear/Ridge/Lasso Regressions and Elastic-net.

\item extensive experimental results on synthetic and real-world data for common LMS solvers of Scikit-learn library with either CPython or Intel's distribution. Either the running time or numerical stability is improved up to two orders of magnitude.

\item open code~\cite{opencode} for our algorithms that we hope will be used for the many other LMS solvers and future research as suggested in our Conclusion section; see Section~\ref{sec:conclude}.
\end{enumerate}

\subsection{Novel approach: Coresets meet Sketches} \label{sec:novelty}
As explained in Section~\ref{sec:relatedWork}, the covariance matrix $A^TA$ of $A$ itself can be considered as a sketch which is relatively less numerically stable to maintain (especially its inverse, as desired by e.g. linear regression). The Caratheodory set, as in Definition~\ref{def:caraSet}, that corresponds to the set of outer products of the rows of $A$ is a coreset whose weighted sum yields the covariance matrix $A^TA$. Moreover, it is more numerically stable but takes much more time to compute; see Theorem~\ref{caraTheorem}.

To this end, we suggest a meta-algorithm that combines these two approaches: sketches and coresets. It may be generalized to other, not-necessarily accurate, $\eps$-coresets and sketches ($\eps>0$); see Section~\ref{sec:conclude}.

\textbf{The input }to our meta-algorithm is 1) a set $P$ of $n$ items, 2) an integer $k \in \br{1,\cdots,n}$ where $n$ is highest numerical accuracy but longest running time, and 3) a pair of coreset and sketch construction schemes for the problem at hand. \\ \textbf{The output } is a coreset for the problem whose construction time is faster than the construction time of the given coreset scheme; see Fig.~\ref{fig:booster}.

\noindent\textbf{Step I: }Compute a balanced partition $\br{P_1,\cdots,P_k}$ of the input set $P$ into $k$ clusters of roughly the same size. While the correctness holds for any such arbitrary partition (e.g. see Algorithm~\ref{theorem:fastCara}), to reduce numerical errors -- the best is a partition that minimizes the sum of loss with respect to the problem at hand.

\noindent\textbf{Step II: }Compute a sketch $S_i$ for each cluster $P_i$, where $i\in\br{1,\cdots,k}$, using the input sketch scheme. This step does not return a subset of $P$ as desired, and is usually numerically less stable.

\noindent\textbf{Step III: }Compute a coreset $B$ for the union $S=S_1\cup\cdots\cup S_k$ of sketches from Step II, using the input coreset scheme. Note that $B$ is not a subset (or coreset) of $P$.

\noindent\textbf{Step IV: }Compute the union $C$ of clusters in $P_1,\cdots,P_k$ that correspond to the selected sketches in Step III, i.e. $C= \bigcup_{S_i\in B } P_i$. By definition, $C$ is a coreset for the problem at hand.

\noindent\textbf{Step V: }Recursively compute a coreset for $C$ until a sufficiently small coreset is obtained. This step is used to reduce running time, without selecting $k$ that is too small.

We then run an existing solver on the coreset $C$ to obtain a faster accurate solution for $P$. Algorithm~\ref{fastCara} and~\ref{theorem:fastCara} are special cases of this meta-algorithm, where the sketch is simply the sum of a set of points/matrices, and  the coreset is the existing (slow) implementation of the Caratheodory set from Theorem~\ref{caraTheorem}.

\noindent\textbf{Paper organization.} In Section~\ref{sec:prelim} we give our notations, definitions and the current state-of-the-art result. Section~\ref{sec:fastCara} presents our main algorithms for efficient computation of the Caratheodory (core-)set and a subset that preserves the inputs covariance matrix, their theorems of correctness and proofs. Later, at section~\ref{sec:fastCaraNew}, we suggest an algorithm that computes a weaker variant of the Caratheodory set in a faster time, which also results in a faster time algorithm for computing a subset that preserves the inputs covariance.
Sections~\ref{sec:caraToLMS},~\ref{sec:cordCaraToLMS}, and~\ref{SVDpca} demonstrate the applications of those algorithms to common LMS solvers and dimensionality reduction algorithms, while Section~\ref{sec:ER} shows the practical usage of this work using extensive experimental results on both real-world and synthetic data via the Scikit-learn library with either CPython or Intel's Python distributions. We conclude the paper with open problems and future work in Section~\ref{sec:conclude}.

\begin{figure}
  \centering
  \includegraphics[width=\textwidth]{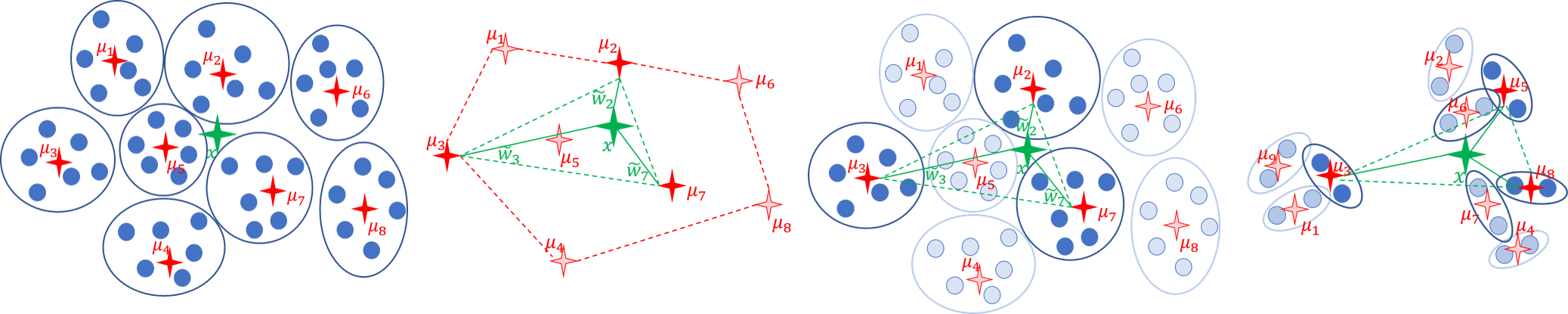}
  \caption{Overview of Algorithm~\ref{fastCara} and the steps in Section~\ref{sec:novelty}. Images left to right: Steps I and II (Partition and sketch steps): A partition of the input weighted set of $n=48$ points (in blue) into $k=8$ equal clusters (in circles) whose corresponding means are $\mu_,\ldots,\mu_8$ (in red). The mean of $P$ (and these means) is $x$ (in green). Step III (Coreset step): Caratheodory (sub)set of $d+1=3$ points (bold red) with corresponding weights (in green) is computed only for these $k=8\ll n$ means. Step IV (Recover step): the Caratheodory set is replaced by its corresponding original points (dark blue). The remaining points in $P$ (bright blue) are deleted. Step V (Recursive step): Previous steps are repeated until only $d+1=3$ points remain. This procedure takes $O(\log n)$ iterations for $k=2d+2$.}
  \label{fig:booster}
\end{figure}

\section{Notation and Preliminaries} \label{sec:prelim}
For a pair of integers $n,d\geq1$, we denote by $\REAL^{n\times d}$ the set of $n\times d$ real matrices, and $[n]=\br{1,\cdots,n}$. To avoid abuse of notation, we use the big $O$ notation where $O(\cdot)$ is a set~\cite{cormen2009introduction}. A \emph{weighted set} is a pair $(P,u)$ where $P=\br{p_1,\cdots,p_n}$ is an ordered finite set in $\REAL^d$, and $u:P \to [0,\infty)$ is a positive \emph{weights function}. We sometimes use a matrix notation whose rows contains the elements of $P$ instead of the ordered set notation.

Given a point $q$ inside the convex hull of a set of points $P$, Caratheodory's Theorem proves that there a subset of at most $d+1$ points in $P$ whose convex hull also contains $q$. This geometric definition can be formulated as follows.
\begin{definition} [Caratheodory set]\label{def:caraSet}
Let $(P,u)$ be a weighted set of $n$ points in $\REAL^d$ such that $\sum_{p\in P}u(p) = 1$.
A weighted set $(S,w)$ is called a \emph{Caratheodory Set} for $(P,u)$ if:
(i) $S \subseteq P$, (ii) its size is $|S|\leq d+1$, (iii) its weighted mean is the same, $\sum_{p\in S} w(p)\cdot p = \sum_{p \in P}u(p)\cdot p$, and (iv) its sum of weights is $\sum_{p\in S} w(p) = 1$.
\end{definition}

Caratheodory's Theorem suggests a constructive proof for computing this set in $O(n^2d^2)$ time~\cite{caratheodory1907variabilitatsbereich, cook1972caratheodory}; see Algorithm~\ref{caraAlg} along with an overview and full proof in Section~\ref{sec:CaraAlg} of the Appendix. 
However, as observed e.g. in~\cite{nasser2015coresets}, it can be computed only for the first $m=d+1$ points, and then be updated point by point in $O(md^2)=O(d^3)$ time per point, to obtain $O(nd^3)$ overall time. This still takes $\Theta(n)$ calls to a linear system solver that returns $x\in\REAL^d$ satisfying $Ax=b$ for a given matrix $A\in \REAL^{(d+1)\times d}$ and vector $b\in \REAL^{d+1}$, in $O(d^3)$ time per call.
\begin{theorem}[\cite{caratheodory1907variabilitatsbereich},~\cite{nasser2015coresets}\label{caraTheorem}]
A Caratheodory set $(S,w)$ can be computed for any weighted set $(P,u)$ where $\sum_{p\in P} u(p) = 1$ in $\tcara(n,d) \in O(1)\cdot \min\br{n^2d^2,nd^3}$ time.
\end{theorem}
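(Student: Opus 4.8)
The plan is to give a constructive algorithm that repeatedly discards one point at a time while preserving both the weighted mean and the total weight, until at most $d+1$ points remain. The engine of each reduction is the elementary fact that any collection of strictly more than $d+1$ points in $\REAL^d$ is affinely dependent, which furnishes a direction along which we can perturb the weights harmlessly.

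First I would isolate the following reduction step. Suppose the current weighted set has $m > d+1$ points $p_1,\ldots,p_m$ with weights $u_1,\ldots,u_m$ satisfying $\sum_i u_i = 1$ and $\sum_i u_i p_i = \mu$, where $\mu$ is the fixed target mean. Consider the homogeneous system asking for $\lambda = (\lambda_1,\ldots,\lambda_m)$ with $\sum_i \lambda_i p_i = 0$ and $\sum_i \lambda_i = 0$; this is $d+1$ equations in $m > d+1$ unknowns, hence it has a nonzero solution $\lambda$. For any scalar $\alpha$ set $u_i(\alpha) = u_i - \alpha \lambda_i$. The two invariants are then preserved automatically: $\sum_i u_i(\alpha) = 1 - \alpha\cdot 0 = 1$ and $\sum_i u_i(\alpha)\, p_i = \mu - \alpha\cdot 0 = \mu$. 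It remains to pick $\alpha$ so that all new weights stay nonnegative and at least one vanishes. Since $\lambda \neq 0$ and $\sum_i \lambda_i = 0$, at least one $\lambda_i$ is positive, so I would take $\alpha = \min\{u_i/\lambda_i : \lambda_i > 0\}$, attained at some index $j$. A direct sign check shows $u_i(\alpha) \geq 0$ for every $i$ (immediate when $\lambda_i \leq 0$, and by the choice of $\alpha$ when $\lambda_i > 0$) while $u_j(\alpha) = 0$. Deleting $p_j$ yields a weighted set of $m-1$ points with the same mean and unit total weight.

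Next I would iterate this reduction, starting from the input $n$ points, until the cardinality drops to $d+1$. Correctness against Definition~\ref{def:caraSet} is then immediate: the subset and size conditions hold by construction, and the mean and total-weight conditions are maintained throughout. For the running time, each reduction computes a null vector of a $(d+1)\times m$ matrix, which Gaussian elimination performs in $O(md^2)$ time, plus $O(m)$ for the ratio test; summing over the at most $n$ reductions with $m \le n$ gives the $O(n^2d^2)$ bound. To obtain the alternative $O(nd^3)$ bound, I would process points incrementally as noted after the theorem: maintain a Caratheodory set of at most $d+1$ points, insert one new input point at a time so the working set never exceeds $d+2$, and apply a single reduction after each insertion. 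Each insertion then costs only $O(d^3)$ (the null-vector computation on $d+2$ points in $\REAL^d$), for $O(nd^3)$ in total; taking the better of the two strategies yields the stated $\min\br{n^2d^2,nd^3}$.

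The main obstacle is not in any single computation but in checking that the reduction step meets all three requirements at once — preserving the mean, preserving the total weight, and keeping every weight nonnegative while forcing one to zero. The cleanest way to see these simultaneously is exactly the parametrization $u - \alpha\lambda$ together with the minimum-ratio choice of $\alpha$; the sign bookkeeping in that ratio test is the one place where care is genuinely needed. Once the reduction step is verified, both complexity bounds follow from routine accounting of the per-iteration linear-algebra cost against the number of iterations.
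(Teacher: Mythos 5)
Your proposal is correct and follows essentially the same route as the paper's proof in Appendix~\ref{sec:CaraAlg}: the same weight perturbation $u - \alpha\lambda$ along a null direction of an affine-dependence system (the paper obtains the identical system by centering at $p_1$ and setting $v_1 = -\sum_{i\geq 2} v_i$), the same minimum-ratio choice of $\alpha$ with the same sign check, and the same incremental point-by-point variant attributed to~\cite{nasser2015coresets} for the $O(nd^3)$ bound.
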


\section{Faster Caratheodory Set} \label{sec:fastCara}
In this section, we present our main algorithm that reduces the running time for computing a Caratheodory set from $O(\min\br{n^2d^2,nd^3})$ in Theorem~\ref{caraTheorem} to $O(nd)$ for sufficiently large $n$; see Theorem~\ref{theorem:fastCara}. A visual illustration of the corresponding Algorithm~\ref{fastCara} is shown in Fig.~\ref{fig:booster}. As an application, we present a second algorithm, called \cova, which computes a small weighted subset of a the given input that has the same covariance matrix as the input matrix; see Algorithm~\ref{covAlg}.

\begin{theorem}[Caratheodory-Set Booster] \label{theorem:fastCara}
Let $(P,u)$ be a weighted set of $n$ points in $\REAL^d$ such that $\sum_{p\in P}u(p)=1$, and $k\geq d+2$ be an integer.
Let $(C,w)$ be the output of a call to $\caraf(P,u,k)$; See Algorithm~\ref{fastCara}.
Let $\tcara(k,d)$ be the time it takes to compute a Caratheodory Set for $k$ points in $\REAL^d$, as in Theorem~\ref{caraTheorem}. Then $(C,w)$ is a Caratheodory set of $(P,u)$ that is computed in time $O\left(nd+\tcara(k,d)\cdot \frac{\log n}{\log (k/ d)}\right)$.
\end{theorem}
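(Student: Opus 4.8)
The plan is to prove the two claims separately: first that $(C,w)$ satisfies the four conditions of Definition~\ref{def:caraSet}, and then that its construction runs within the stated time. For correctness I would argue by induction on the recursion depth of $\caraf$ (Step V), where the invariant maintained at every level is that the current weighted set is a subset of $P$, has total weight $1$, and has weighted mean equal to $\sum_{p\in P}u(p)p$. Since each recursive call only shrinks the working set, it suffices to show that one iteration (Steps I--IV) preserves this invariant while strictly reducing the number of points, and that the base case returns a set of size at most $d+1$.

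For a single iteration, let $\br{P_1,\dots,P_k}$ be the balanced partition from Step I, and for each nonempty cluster set $u_i=\sum_{p\in P_i}u(p)>0$ and $\mu_i=u_i^{-1}\sum_{p\in P_i}u(p)p$. The key observation is that the weighted set of means $\br{\mu_1,\dots,\mu_k}$ with weights $u_1,\dots,u_k$ again has total weight $\sum_i u_i=\sum_{p\in P}u(p)=1$ and weighted mean $\sum_i u_i\mu_i=\sum_{p\in P}u(p)p$; this is just a regrouping of the same weighted sum. Applying Theorem~\ref{caraTheorem} to these $k$ means yields a Caratheodory set $(B,w')$ with $B\subseteq\br{\mu_1,\dots,\mu_k}$, $|B|\le d+1$, $\sum_{\mu_i\in B}w'(\mu_i)=1$, and $\sum_{\mu_i\in B}w'(\mu_i)\mu_i=\sum_{p\in P}u(p)p$. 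Writing $I=\br{i:\mu_i\in B}$, I would then set $C=\bigcup_{i\in I}P_i$ and define $w(p)=u(p)\,w'(\mu_i)/u_i$ for $p\in P_i$, $i\in I$. A direct computation gives $\sum_{p\in C}w(p)=\sum_{i\in I}w'(\mu_i)=1$ and $\sum_{p\in C}w(p)p=\sum_{i\in I}w'(\mu_i)\mu_i=\sum_{p\in P}u(p)p$, so all four properties are inherited. This re-weighting step, which redistributes each selected cluster's Caratheodory weight back onto its original points so as to preserve \emph{both} the mean and the total weight exactly, is the conceptual heart of the correctness argument.

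For termination and size, note that $|C|\le(d+1)\lceil n/k\rceil$, so with $k\ge d+2$ the cardinality contracts by a factor of roughly $(d+1)/k<1$ at each level; the recursion stops once the working set has at most $k$ points, where a single direct call to Theorem~\ref{caraTheorem} returns a set of size at most $d+1$. The number of levels is therefore $O\!\left(\log n/\log(k/d)\right)$, since reducing the count from $n$ to $O(d)$ by a per-level factor $\Theta(d/k)$ requires that many iterations; this is where the $\log(k/d)$ in the denominator enters.

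It remains to account for the running time, which I expect to be the main technical obstacle. Each level does $O(n_j d)$ work to form the clusters, compute their weighted means, and re-weight, plus one Caratheodory call costing $\tcara(k,d)$. Because the working-set sizes $n_j$ decrease geometrically, the linear terms $\sum_j O(n_j d)$ sum to $O(nd)$, while the Caratheodory calls contribute $\tcara(k,d)$ at each of the $O(\log n/\log(k/d))$ levels; adding these yields the claimed $O\!\left(nd+\tcara(k,d)\cdot\log n/\log(k/d)\right)$. The delicate points are verifying that the geometric sum of the linear work truly telescopes to $O(nd)$ rather than accruing an extra factor of $d$ (which requires care with the ceilings and with the precise value of the ratio $(d+1)/k$ when $k$ is close to $d+2$), and matching the level count exactly to $\log n/\log(k/d)$.
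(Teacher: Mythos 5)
Your proposal follows the paper's proof essentially step for step: the same regrouping of the weighted mean over the cluster means $\mu_i$ with weights $u'(\mu_i)=\sum_{p\in P_i}u(p)$, the identical re-weighting $w(p)=u(p)\,\tilde w(\mu_i)/u'(\mu_i)$ that preserves both the mean and the unit total weight, induction over the recursive calls, the size bound $|C|\le(d+1)\lceil n/k\rceil$, and the geometric-sum running-time analysis with $O\left(\log n/\log(k/d)\right)$ levels. The one ``delicate point'' you flag --- that the linear work telescopes to $O(nd)$ only when the per-level contraction factor $(d+1)/k$ is bounded away from $1$ --- is glossed over in the paper as well (its proof writes the per-level cost as $nd/2^{i-1}$, implicitly assuming $k\ge 2(d+1)$ rather than the stated $k\ge d+2$), so your account matches the published argument, subtlety included.
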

\begin{proof}
See full proof of Theorem~\ref{theorem:fastCara_proof} in the Appendix.
\end{proof}

\paragraph{Tuning Algorithm~\ref{fastCara} for the fastest running time. } To achieve the fastest running time in Algorithm~\ref{fastCara}, simple calculations show that when $t(k,d) = kd^3$, i.e., when applying the algorithm from~\cite{nasser2015coresets}, $k = ed$ is the optimal value (that achieves the fastest running time), and when $t(k,d) = k^2d^2$, i.e., when applying the original Caratheodory algorithm (Algorithm~\ref{caraAlg} in the Appendix), $k=\sqrt{e}d$ is the value that achieves the fastest running time.

\setcounter{AlgoLine}{0}
\begin{algorithm}
\caption{$\caraf(P,u,k)$; see Theorem~\ref{theorem:fastCara}\label{fastCara}}
\SetKwInOut{Input}{Input}
\SetKwInOut{Output}{Output}
\Input{A set $P$ of $n$ points in $\REAL^d$, a (weight) function $u:P\to [0,\infty)$ such that $\sum_{p\in P}u(p)=1$, and an integer (number of clusters) $k\in \br{1,\cdots,n}$ for the numerical accuracy/speed trade-off.}
\Output{A Caratheodory set of $(P,u)$; see Definition~\ref{def:caraSet}.}

$P := P \setminus \br{p \in P \mid u(p) = 0}$.\label{Line:removeZero}\quad\quad \tcp{Remove all points with zero weight.}
\If{ $|P|\leq d+1$ \label{stoprule}}
{\Return $(P,u)$ \label{returnInput} \quad\quad\tcp{$|P|$ is already small}}

$\br{P_1,\cdots,P_k}:=$ a partition of $P$ into $k$ disjoint subsets (clusters), each contains at most $\ceil{n/k}$ points. \label{compPartition}

\For{every $i\in\br{1,\cdots,k}$}
{
$\displaystyle \mu_i:=\frac{1}{\sum_{q\in P_i}u(q)} \cdot \sum_{p\in P_i}u(p)\cdot p$ \quad\quad \tcp{the weighted mean of $P_i$} \label{compMui}

$u'(\mu_i) := \sum_{p\in P_i}u(p)$ \label{compMuiWeight} \quad\quad \tcp{The weight of the $i$th cluster.}
}

$(\tilde{\mu},\tilde{w}) :=\caras(\br{\mu_1,\cdots,\mu_k},u')$ \label{compSlowCara} \\ \tcp{see Algorithm~\ref{caraAlg} in the Appendix.}

$\displaystyle C:=\bigcup_{\mu_i\in \tilde{\mu}} P_i$ \label{compC} \\\tcp{$C$ is the union over all clusters $P_i \subseteq P$ whose representative $\mu_i$ was chosen for $\tilde{\mu}$.}

\For {every $\mu_i \in \tilde{\mu}$ and $p\in P_i$\label{forRemainingClusters}}
{
$\displaystyle w(p):=\frac{\tilde{w}(\mu_i)u(p)}{\sum_{q\in P_i}u(q)}$ \label{compW} \quad\quad\tcp{assign weight for each point in $C$}
}

$(C,w):=\caraf(C,w,k)$\quad\quad \tcp{recursive call}

\Return $(C,w)$ \label{recursiveCall}
\end{algorithm}

\subsection{Caratheodory Matrix}

\setcounter{AlgoLine}{0}
\begin{algorithm}
\caption{$\cova(A,k)$; see Theorem~\ref{theorem:cov}\label{covAlg}}
\SetKwInOut{Input}{Input}
\SetKwInOut{Output}{Output}
\Input{A matrix $A = (a_1\mid\cdots\mid a_n)^T\in\REAL^{n\times d}$, and an integer $k\in \br{1,\cdots,n}$ for numerical accuracy/speed trade-off.}
\Output{A matrix $S\in\REAL^{(d^2+1)\times d}$ whose rows are scaled rows from $A$, and $\displaystyle A^TA=  S^TS$.}

\For{every $i\in \br{1\cdots,n}$\label{forPcreate}}
{
Set $p_i \in \REAL^{(d^2)}$ as the concatenation of the $d^2$ entries of $a_ia_i^T\in\REAL^{d\times d}$.\\ \tcp{The order of entries may be arbitrary but the same for all points.}
$u(p_i):=1/n$
}
$P:=\big\{p_i \mid i \in \br{1,\cdots,n}\big\}$ \label{forPcreateend} \quad \quad \tcp{$P$ is a set of $n$ vectors in $\REAL^{(d^2)}$.}

$(C,w):=\caraf(P,u,k)$ \label{compCU} \tcp{$C\subseteq P$ and $|C|=d^2+1$ by Theorem~\ref{theorem:fastCara}}
$S:= $ a $(d^2+1)\times d$ matrix whose $i$th row is $\sqrt{n\cdot w(p_i)}\cdot a_i^T$ for every ${p_i\in C}$. \label{compS}

\Return $S$
\end{algorithm}

\begin{theorem} \label{theorem:cov}
Let $A \in \REAL^{n\times d}$ be a matrix, and $k \geq d^2+2$ be an integer. Let $S \in \REAL^{(d^2+1)\times d}$ be the output of a call to $\cova(A,k)$; see Algorithm~\ref{covAlg}. Let $\tcara(k,d)$ be the computation time of \caras{} (Algorithm~\ref{caraAlg}) given $k$ points in $\REAL^{d}$. Then $A^TA = S^TS$. Furthermore, $S$ is computed in $O\left(nd^2+\tcara(k,d^2)\cdot\frac{\log{n}}{\log{(k/d^2))}}\right)$ time.
\end{theorem}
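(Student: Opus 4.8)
The plan is to separate the statement into the exact identity $A^TA=S^TS$ and the running-time bound, and to derive both from Theorem~\ref{theorem:fastCara} applied in the vectorized space $\REAL^{d^2}$. The single idea driving the correctness part is that preserving the covariance $A^TA=\sum_{i=1}^n a_ia_i^T$ is the same as preserving a weighted mean of the vectorized outer products, which is exactly the guarantee a Caratheodory set provides.

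First I would unwind the construction in Algorithm~\ref{covAlg}. Each point built in Line~\ref{forPcreate} is $p_i=\tovec(a_ia_i^T)\in\REAL^{d^2}$ with weight $u(p_i)=1/n$, so the weighted mean of $(P,u)$ equals $\sum_{p\in P}u(p)\,p=\tovec\!\big(\tfrac1n\sum_{i=1}^n a_ia_i^T\big)=\tovec(\tfrac1n A^TA)$. Because the ambient dimension here is $d^2$ and $k\ge d^2+2$, Theorem~\ref{theorem:fastCara} applies and certifies that $(C,w)=\caraf(P,u,k)$ is a Caratheodory set of $(P,u)$; property (iii) of Definition~\ref{def:caraSet} then yields $\sum_{p_i\in C}w(p_i)\,p_i=\sum_{p\in P}u(p)\,p$. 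Undoing the (linear, injective) vectorization on both sides turns this into the matrix identity $\sum_{p_i\in C}w(p_i)\,a_ia_i^T=\tfrac1n A^TA$.

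Next I would convert this weighted sum of outer products into the product $S^TS$. Since Line~\ref{compS} sets the row of $S$ corresponding to $p_i\in C$ to $\sqrt{n\,w(p_i)}\,a_i^T$, writing $S^TS$ as a sum of its rank-one row contributions gives $S^TS=\sum_{p_i\in C}\big(\sqrt{n\,w(p_i)}\,a_i\big)\big(\sqrt{n\,w(p_i)}\,a_i^T\big)=n\sum_{p_i\in C}w(p_i)\,a_ia_i^T=n\cdot\tfrac1n A^TA=A^TA$, using the identity just derived. This proves the exact covariance preservation, and the output size $|C|=d^2+1$ (hence $S\in\REAL^{(d^2+1)\times d}$) is immediate from Theorem~\ref{theorem:fastCara}.

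For the running time I would charge the three phases of Algorithm~\ref{covAlg} in turn: forming the $n$ vectorized outer products in Lines~\ref{forPcreate}--\ref{forPcreateend} costs $O(nd^2)$; the call to $\caraf$ on $n$ points in $\REAL^{d^2}$ costs $O\big(nd^2+\tcara(k,d^2)\cdot\frac{\log n}{\log(k/d^2)}\big)$ by Theorem~\ref{theorem:fastCara} with $d$ replaced by $d^2$; and rescaling the $d^2+1$ chosen rows in Line~\ref{compS} costs $O(d^3)$, which is dominated by the $nd^2$ term for $d\le n$. Adding these gives the claimed bound. There is no deep obstacle here: the only point demanding care is the bookkeeping of the vectorization, namely checking that the bijection between $\REAL^{d\times d}$ and $\REAL^{d^2}$ commutes with summation and respects the index-to-point correspondence used to build $S$ from $C$, so that property (iii) transfers verbatim to outer products; once this is in place the scaling factor $\sqrt{n\,w(p_i)}$ is the unique choice that makes $S^TS$ reproduce $A^TA$, and the remainder is algebra.
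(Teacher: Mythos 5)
Your proposal is correct and takes essentially the same route as the paper's own proof: both apply Theorem~\ref{theorem:fastCara} to the vectorized outer products in $\REAL^{d^2}$, pull the weighted-mean equality back to $\sum_{p_i\in C} w(p_i)\,a_ia_i^T=\frac{1}{n}A^TA$, and conclude $S^TS=A^TA$ via the rank-one decomposition of $S^TS$ with the $\sqrt{n\cdot w(p_i)}$ scaling. Your running-time accounting (the $O(nd^2)$ vectorization, the $\caraf$ call with $d$ replaced by $d^2$, and the $O(d^3)$ rescaling absorbed into $O(nd^2)$) likewise matches the paper's argument.
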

\begin{proof}
See full proof of Theorem~\ref{theorem:cov_proof} in the Appendix.
\end{proof}

\section{Sparsified Caratheodory} \label{sec:fastCaraNew}
The algorithms presented in the previous section managed to compute a lossless compression,  which is a subset of the input data that preserves its covariance. As the experimental results in Section~\ref{sec:ER} show, those algorithms also maintained a very low numerical error, which was either very close or exactly equal to zero. However, their running time  has a polynomial dependency on the dimension $d$, which makes them impractical for some use cases. Therefore, to support high dimensional data, in this section we provide new algorithms which reduce this dependency on $d$ in their running times, by possibly compromising the numerical accuracy.

Streaming data is widely common approach for reducing an algorithm's run time dependency on the number of points $n$, by simply applying the algorithm on chunks of the input, rather than on the entire input at once.
The new algorithms utilize the streaming fashion, but rather on the coordinates (dimension) of the input, rather than chunks of the input. On each such dimensions-subset, the algorithms from the previous section are applied.

The experiments conducted in Section~\ref{sec:ER} demonstrate the expected improvement in running time when using those new and improved algorithms. Fortunately, the numerical error in practice of those new algorithms was not much larger compared to their slower (older) version, which was much lower than the numerical error of the competing methods in most cases.

For an integer $d$ and an integer $k \leq d$, we define $\mathbb{I}_k \subseteq \REAL^{d\times d}$ to be the set of all diagonal matrices $M \in \br{0,1}^{d\times d}$ which contain only ones and zeros and have exactly $k$ ones and $d-k$ zeros along its diagonal.

A Caratheodory set $(C,w)$ of an input weighted set $(P,u)$ requires $C$ to be a subset of $P$; see Definition~\ref{def:caraSet}. In what follows we define a weaker variant called a $k$-Sparse Caratheodory Set. Now, $C$ is not necessarily a subset of the input set $P$. However, we require that every $c\in C$ can obtained by some $p\in P$ after setting $d-k$ of its entries to zero. A $d$-Sparse Caratheodory Set is a Caratheodory set.
\begin{definition} [$k$-Sparse Caratheodory Set] \label{def:cordCara}
Let $(P,u)$ be a weighted set of $n$ points in $\REAL^d$ such that $\sum_{p\in P}u(p) = 1$, and let $k \leq d$ be an integer.
A weighted set $(C,w)$ is called a \emph{$k$-Sparse Caratheodory set} for $(P,u)$ if: (i) for every $c \in C$ there is $p\in P$ and a diagonal matrix $\tilde{I} \in \mathbb{I}_k$ such that $c = \tilde{I}p$ (i.e., $c$ is simply $p$ with some coordinates set to zero), (ii) its size is $|C|\leq \ceil{\frac{d}{k}}\cdot (k+1)$, (iii) its weighted mean is the same, $\sum_{p\in C} w(p)\cdot p = \sum_{p \in P}u(p)\cdot p$, and (iv) its sum of weights is $\sum_{p\in S} w(p) = \ceil{d/k}$.
\end{definition}

\setcounter{AlgoLine}{0}
\begin{algorithm}
\caption{$\caracord(P,u,k_1,k_2)$; see Theorem~\ref{theorem:cordCara}\label{cordAlg}}
\SetKwInOut{Input}{Input}
\SetKwInOut{Output}{Output}
\Input{A set $P = \br{p_1,\cdots,p_n}\subseteq \REAL^{d}$, a weights function $u:P\to [0,\infty)$ such that $\sum_{p\in P} u(p) = 1$, and two integers $k_1,k_2$ for numerical accuracy/speed trade-off such that $k_1 \in \br{\ceil{\frac{d}{k_2}}+2,\cdots,n}$, and $k_2\in \br{1,\cdots,d}$.}
\Output{A $\ceil{d/k_2}$-Sparse Caratheodory set of $(P,u)$; see Definition~\ref{def:cordCara}.}

$\br{I_1,\cdots,I_{k_2}} :=$ a partition of the indices $\br{1,\cdots,d}$ into $k_2$ disjoint subsets, each containing at most $\ceil{d/k_2}$ indices. \label{Line:partitionCord}

For every $p\in P$ and $j\in [k_2]$ define $p^j \in \REAL^{|I_j|}$ as the point containing only the coordinates of $p\in P$ whose indices are in $I_j$. \\\tcp{$p^j$ contains a subset of the coordinates of $p$, whose indices are in $I_j$.}

$C := \emptyset$

\For{every $j\in \br{1,\cdots,k_2}$\label{line:dimIter}}
{

$P^j := \br{p^j \mid p \in P}$ \label{Line:compPj} \tcp{$P^j$ contains all the points of $P$, when taking only a subset of their coordinates.}





$u^j(p^j) = u(p)$ for every $p\in P$. \label{Line:compuj}

$(C^j,w^j) := \caraf(P^j,u^j,k_1)$. \label{Line:callBoost} \tcp{$C^j \subseteq P^j$ and $|C^j| \leq \ceil{\frac{d}{k_2}}+1$ by Theorem~\ref{theorem:fastCara}.}

For every $c \in C^j$ define $\hat{c} \in \REAL^d$ to be a vector of zeros in the coordinates $\br{1,\cdots,d}\setminus I_j$, and plug the coordinates of $c$ into indices $I_j$ of $\hat{c}$, and let $\hat{C}^j = \br{\hat{c} \mid c \in C^j}$. \label{Line:padVecs} \tcp{transform $c$ back into $\REAL^d$ by adding zeros in specific locations.}

$w(\hat{c}) := w^j(c)$ for every $c \in C^j$. \label{Line:compw} \tcp{set the weight of the padded vector to be the weight of the original vector.}

$C = C\cup \hat{C}^j$

}

\Return $(C,w)$
\end{algorithm}

\begin{theorem}\label{theorem:cordCara}
Let $(P,u)$ be a weighted set of $n$ points in $\REAL^d$ such that $\sum_{p\in P}u(p)=1$, and $k_1,k_2,d'$ be three integers such that $k_2 \in \br{1,\cdots,d}$, $d' = \ceil{\frac{d}{k_2}}$, and $k_1 \in \br{d'+2,\cdots,n}$.
Let $(C,w)$ be the output of a call to $\caracord(P,u,k_1,k_2)$; See Algorithm~\ref{cordAlg}.
Let $\tcara(k_1,d')$ be the time it takes to compute a Caratheodory Set for $k_1$ points in $\REAL^{d'}$, as in Theorem~\ref{caraTheorem}. Then $(C,w)$ is a $d'$-Sparse Caratheodory set of $(P,u)$ that is computed in time $O\left(nd+\tcara(k_1,d')\cdot \frac{k_2\log n}{\log (k_1/d')}\right)$.
\end{theorem}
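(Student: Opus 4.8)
The plan is to read Algorithm~\ref{cordAlg} as running the exact Caratheodory booster \caraf{} independently on each of the coordinate-blocks $I_1,\dots,I_{k_2}$ from Line~\ref{Line:partitionCord}, and then reassembling the per-block outputs by zero-padding and union. Correctness therefore reduces to two ingredients: that each per-block call returns a genuine Caratheodory set (this is exactly Theorem~\ref{theorem:fastCara}), and that the zero-pad-and-union step of Lines~\ref{Line:padVecs}--\ref{Line:compw} correctly recombines those sets into a single weighted set meeting the four conditions of Definition~\ref{def:cordCara}. First I would fix $j\in\br{1,\dots,k_2}$ and record that, since $P^j\subseteq\REAL^{|I_j|}$ with $|I_j|\le d'$ and $k_1\ge d'+2\ge |I_j|+2$, the precondition of Theorem~\ref{theorem:fastCara} holds, so $(C^j,w^j)$ is a Caratheodory set of $(P^j,u^j)$: it satisfies $C^j\subseteq P^j$, $|C^j|\le |I_j|+1\le d'+1$, the mean identity $\sum_{c\in C^j}w^j(c)\cdot c=\sum_{p\in P}u(p)\cdot p^j$, and $\sum_{c\in C^j}w^j(c)=\sum_{p\in P}u(p)=1$ (using $u^j(p^j)=u(p)$ from Line~\ref{Line:compuj}).

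Next I would verify the four conditions. Condition (i) is immediate from Line~\ref{Line:padVecs}: each $\hat c\in\hat C^j$ is obtained from its $p\in P$ by keeping only the coordinates in $I_j$ and zeroing the remaining $d-|I_j|\ge d-d'$ coordinates, so $\hat c=\tilde I p$ for $\tilde I$ the diagonal selector of $I_j$, matching the parenthetical requirement of Definition~\ref{def:cordCara} that $c$ be $p$ with some coordinates set to zero (when $|I_j|=d'$ this $\tilde I$ lies in $\mathbb{I}_{d'}$ exactly). The heart of the proof is condition (iii). Here I would use that the $I_j$ partition $\br{1,\dots,d}$ into disjoint index sets and that each $\hat c$ vanishes outside $I_j$: the block sum $\sum_{\hat c\in\hat C^j}w(\hat c)\cdot\hat c$ is a vector of $\REAL^d$ supported on $I_j$ whose restriction to $I_j$ equals $\sum_{c\in C^j}w^j(c)\cdot c=\sum_{p\in P}u(p)\cdot p^j$, i.e. the restriction of the target mean $\sum_{p\in P}u(p)\cdot p$ to $I_j$. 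Since the supports $I_1,\dots,I_{k_2}$ are disjoint and cover every coordinate exactly once, adding these block sums over $j$ reproduces $\sum_{p\in P}u(p)\cdot p$ coordinate by coordinate, which is precisely condition (iii).

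For the cardinality and weight conditions (ii) and (iv) I would simply sum the per-block bounds over the blocks: $|C|\le\sum_{j}|C^j|\le k_2(d'+1)$ and $\sum_{c\in C}w(c)=\sum_{j}\sum_{c\in C^j}w^j(c)=k_2$. The only care needed is to reconcile the factor $k_2$ with the $\ceil{d/d'}$ appearing in Definition~\ref{def:cordCara}; since $d'=\ceil{d/k_2}$ gives $\ceil{d/d'}\le k_2$, and since the tightest packing of $d$ indices into blocks of size $d'$ uses exactly $\ceil{d/d'}$ nonempty blocks, I would take the partition to consist of $\ceil{d/d'}$ (nonempty) blocks of size at most $d'$, which makes the bounds match the stated $|C|\le\ceil{d/d'}(d'+1)$ and $\sum_{c\in C}w(c)=\ceil{d/d'}$.

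Finally, for the running time I would charge the coordinate restrictions of Lines~\ref{Line:compPj}--\ref{Line:compuj} at $O(n|I_j|)$ per block, which sums to $O(n\sum_j|I_j|)=O(nd)$ since the $I_j$ partition $\br{1,\dots,d}$. Each call $\caraf(P^j,u^j,k_1)$ on $n$ points in $\REAL^{|I_j|}$ costs $O\!\left(n|I_j|+\tcara(k_1,d')\cdot\frac{\log n}{\log(k_1/d')}\right)$ by Theorem~\ref{theorem:fastCara} together with $|I_j|\le d'$ (which bounds the second term monotonically); the $n|I_j|$ terms again sum to $O(nd)$, and the $k_2$ copies of the $\tcara$ term sum to $O\!\left(\tcara(k_1,d')\cdot\frac{k_2\log n}{\log(k_1/d')}\right)$. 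Zero-padding the at most $k_2(d'+1)=O(d)$ output vectors into $\REAL^d$ costs $O(d^2)$, absorbed into $O(nd)$ for $n\ge d$, yielding the claimed bound. I expect the main obstacle to be the bookkeeping around condition~(i) and the exact constants in (ii) and (iv) when $d$ is not divisible by $k_2$ (unequal or would-be-empty blocks), rather than any conceptual difficulty; the weighted-mean identity (iii) is a clean disjoint-support computation built directly on Theorem~\ref{theorem:fastCara}.
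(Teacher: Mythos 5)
Your proposal is correct and follows essentially the same route as the paper's appendix proof (Theorem~\ref{theorem:cordCara2}): per-block invocation of Theorem~\ref{theorem:fastCara} on the coordinate restrictions $(P^j,u^j)$, the disjoint-support zero-pad-and-union argument for the weighted-mean identity, summing the per-block size and weight bounds, and the identical running-time accounting ($O(nd)$ for the restrictions plus $k_2$ calls to \caraf{} each costing $O\bigl(nd'+\tcara(k_1,d')\cdot\frac{\log n}{\log(k_1/d')}\bigr)$). If anything you are slightly more careful than the paper, which asserts $k_2(d'+1)\le\ceil{d/d'}(d'+1)$ and a weight sum of $\ceil{d/k_2}$ without comment even though $k_2$ can strictly exceed $\ceil{d/d'}$; your fix of taking exactly $\ceil{d/d'}$ nonempty blocks, and your remark that the selector lies in $\mathbb{I}_{d'}$ only when $|I_j|=d'$, patch these definitional wrinkles.
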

\begin{proof}
See full proof of Theorem~\ref{theorem:cordCara2} in the Appendix.
\end{proof}

\paragraph{Tuning Algorithm~\ref{fastCara} for the fastest running time. } To achieve the fastest running time in Algorithm~\ref{cordAlg}, simple calculations show that plugging, e.g., $t(k,d) =kd^3 $, i.e., when applying the algorithm from~\cite{nasser2015coresets}, $k_2 = d$ and $k_1=4$ yields the optimal running time of $O(nd)$.

\subsection{Sparsified Caratheodory Matrix}

Recall that the covariance $A^TA \in \REAL^{d\times d}$ of a matrix $A = (a_1\mid\cdots\mid a_n)^T\in\REAL^{n\times d}$ is equal to the sum $\sum_{i=1}^n a_ia_i^T$. Using the SVD $A^TA = UDV^T$ of the covariance matrix, one can compute a matrix $S = \sqrt{D} V^T \in \REAL^{d\times d}$ of only $d$ rows whose covariance is the same as $A$, i.e., $S^TS = A^TA$. Observe that this process requires computing the sum of $n$ matrices of size $d\times d$.

In this section, we provide an algorithm which computes such a matrix $S$ by summing over only $O(d^2)$ sparse $d\times d$ matrices. This algorithm requires the same computational time as the previous algorithm, but is more numerically stable due to summing over only a small number of sparse matrices; see Section~\ref{sec:ER} for such comparisons.

\setcounter{AlgoLine}{0}
\begin{algorithm}
\caption{$\cordCova(A,k_1,k_2)$; see Theorem~\ref{theorem:cordCov}\label{cordCovAlg}}
\SetKwInOut{Input}{Input}
\SetKwInOut{Output}{Output}
\Input{A matrix $A = (a_1\mid\cdots\mid a_n)^T\in\REAL^{n\times d}$, and two integers $k_1,k_2$ for numerical accuracy/speed trade-off such that $k_2 \in \br{1,\cdots,d^2}$ and $k_1\in \br{\ceil{\frac{d^2}{k_2}}+2,\cdots,n}$.}
\Output{A matrix $S\in\REAL^{d\times d}$ such that $A^TA=  S^TS$.}

\For{every $i\in \br{1\cdots,n}$\label{forPcreate}}
{
Set $p_i \in \REAL^{d^2}$ as the column stacking of the $d^2$ entries of $a_ia_i^T\in\REAL^{d\times d}$.\\ \tcp{The order of entries may be arbitrary but the same for all points.}
$u(p_i):=1/n$
}
$P:=\big\{p_i \mid i \in \br{1,\cdots,n}\big\}$ \label{forPcreateend} \quad \quad \tcp{$P$ is a set of $n$ vectors in $\REAL^{(d^2)}$.}

$(C,w):=\caracord(P,u,k_1,k_2)$ \label{compCU} \tcp{see Algorithm~\ref{cordAlg}. $C\subseteq P$ and $|C| \in O(d^2+k_2)$ by Theorem~\ref{theorem:cordCara} and Definition~\ref{def:cordCara}.}

$c' := n\cdot \sum_{c\in C} w(c)c \in \REAL^{d^2}$ \label{Line:compctag} \tcp{The weighted sum of $(C,w)$.}

Set $C' \in \REAL^{d\times d}$ as the matrix obtained by reshaping $c'$ into a matrix \label{Line:reshapectag}\\ \tcp{Inverse column-stacking operation.}

Set $S := \sqrt{D}V^T \in \REAL^{d\times d}$ where $C' = UDV^T$ is the thin Singular Value Decomposition of $C'$. \label{Line:compS}

\Return $S$
\end{algorithm}

\begin{theorem} \label{theorem:cordCov}
Let $A \in \REAL^{n\times d}$ be a matrix, and $k_1,k_2,d'$ be three integers such that $k_2 \in \br{1,\cdots,d^2}$, $d' = \ceil{\frac{d^2}{k_2}}$, and $k_1 \in \br{d'+2,\cdots,n}$. Let $S \in \REAL^{d\times d}$ be the output of a call to $\cordCova(A,k_1,k_2)$; see Algorithm~\ref{cordCovAlg}. Let $\tcara(k_1,d')$ be the time it takes to compute a Caratheodory Set for $k_1$ points in $\REAL^{d'}$, as in Theorem~\ref{caraTheorem}. Then $A^TA = S^TS$. Furthermore, $S$ is computed in $O\left(nd^2+\tcara(k_1,d')\cdot \frac{k_2\log n}{\log (k_1/d')}\right)$ time.
\end{theorem}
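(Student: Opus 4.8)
The plan is to mirror the proof of Theorem~\ref{theorem:cov}, replacing the exact Caratheodory set by the $d'$-Sparse Caratheodory set guaranteed by Theorem~\ref{theorem:cordCara} and appending a single symmetric factorization. The correctness claim $A^TA=S^TS$ then splits into two facts: that the scaled weighted sum computed inside Algorithm~\ref{cordCovAlg} reconstructs $\tovec(A^TA)$ \emph{exactly}, and that forming $S=\sqrt{D}V^T$ from the SVD of the reconstructed matrix produces a genuine square root of a symmetric positive semidefinite (PSD) matrix.

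First I would track the weighted mean through the algorithm. Since each $p_i=\tovec(a_ia_i^T)$ and $u(p_i)=1/n$, the weighted mean of $(P,u)$ is
\[
\sum_{p\in P}u(p)\,p=\frac1n\sum_{i=1}^n\tovec(a_ia_i^T)=\tovec\!\left(\frac1n\sum_{i=1}^n a_ia_i^T\right)=\tovec\!\left(\tfrac1n A^TA\right).
\]
By Theorem~\ref{theorem:cordCara}, the pair $(C,w):=\caracord(P,u,k_1,k_2)$ is a $d'$-Sparse Caratheodory set of $(P,u)$, so property (iii) of Definition~\ref{def:cordCara} gives $\sum_{c\in C}w(c)\,c=\sum_{p\in P}u(p)\,p$. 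Therefore the vector $c'=n\sum_{c\in C}w(c)c$ computed in Line~\ref{Line:compctag} equals $\tovec(A^TA)$, and the inverse column-stacking of Line~\ref{Line:reshapectag} yields $C'=A^TA$ exactly.

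It remains to handle the factorization in Line~\ref{Line:compS}. Because $C'=A^TA$ is symmetric PSD, I would argue that $S=\sqrt{D}V^T$ satisfies $S^TS=C'$ for \emph{any} SVD $C'=UDV^T$ returned by the solver, not only one with $U=V$. Indeed $S^TS=V\sqrt{D}\sqrt{D}V^T=VDV^T$, and from the SVD we have $(C')^2=C'^TC'=VD^2V^T=(VDV^T)^2$; since both $C'$ and $VDV^T$ are PSD and share the same square, uniqueness of the PSD square root forces $C'=VDV^T=S^TS=A^TA$. This is the only step that requires genuine care, since a black-box SVD need not return matching left and right singular vectors for repeated singular values, and I expect it to be the main (if modest) obstacle; the square-root argument is precisely what removes the dependence on the solver's internal sign and basis conventions.

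For the running time I would charge each block of Algorithm~\ref{cordCovAlg} separately. Forming the $n$ outer products $a_ia_i^T$ and stacking them into $P$ costs $O(nd^2)$. The call to $\caracord$ runs on $n$ points of $\REAL^{d^2}$, so substituting ambient dimension $d^2$ and $d'=\ceil{d^2/k_2}$ into Theorem~\ref{theorem:cordCara} yields $O\!\left(nd^2+\tcara(k_1,d')\cdot\frac{k_2\log n}{\log(k_1/d')}\right)$. The sum $c'$ is assembled from $|C|\in O(d^2+k_2)$ vectors, each supported on at most $d'$ coordinates lying inside one of the disjoint blocks $I_j$, costing $O(k_2(d')^2)=O(d^2d')$, which is dominated by the $O(nd^2)$ term because $d'<k_1\le n$; the thin SVD of the $d\times d$ matrix $C'$ costs $O(d^3)$, again absorbed into $O(nd^2)$ whenever $n\ge d$. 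Summing the surviving terms gives the claimed bound and completes the plan.
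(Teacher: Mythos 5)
Your proposal is correct and follows essentially the same route as the paper's proof: vectorize the outer products $a_ia_i^T$ with uniform weights, invoke Theorem~\ref{theorem:cordCara} to conclude $c'=\tovec(A^TA)$ and hence $C'=A^TA$ after reshaping, factor via the thin SVD, and charge the running time to the $O(nd^2)$ construction, the call with ambient dimension $d^2$, and the $O(d^3)$ SVD. Your one divergence is a small improvement: where the paper simply asserts $U=V$ because $C'$ is symmetric, your uniqueness-of-the-PSD-square-root argument shows $S^TS=VDV^T=C'$ for \emph{any} SVD the solver returns, which closes a minor gap since a black-box SVD of a symmetric PSD matrix need not satisfy $U=V$ when singular values repeat or vanish.
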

\begin{proof}
See full proof of Theorem~\ref{theorem:cordCov2} in the Appendix.
\end{proof}

\section{From Caratheodory to LMS Solvers}\label{sec:caraToLMS}

In this section, we first show how Algorithm~\ref{covAlg} can be used to boost the running time of LMS solvers (Lasso/Ridge/Linear/Elastic-net regression) without compromising the accuracy at all. Then, in Section~\ref{sec:cordCaraToLMS}, we show how to leverage Algorithm~\ref{cordCovAlg}, instead of Algorithm~\ref{covAlg}, to boost the running time of LMS solvers potentially even more, in the cost of a potential decrease in numerical accuracy. As the experimental results in Section~\ref{sec:ER} show, although in some cases Algorithm~\ref{cordCovAlg} introduces an additional small numerical error, it still outperforms the competing compression algorithms common used in practice, both as of running time and accuracy.

Before, we remind the reader that LMS solvers use cross validation techniques to select the best hyper parameter values, such as $\alpha$ and $\rho$ in table~\ref{appstable}. In what follows we first explain about the $m$-folds cross validation, then we show how to construct a coreset for different LMS solvers while supporting the the $m$-folds cross validation.

\paragraph{$m$-folds cross validation (CV). } We briefly discuss the CV technique which is utilized in common LMS solvers. Given a parameter $m$ and a set of real numbers $\alphas$, to select the optimal value $\alpha\in \alphas$ of the regularization term, the existing Python's LMS solvers partition the rows of $A$ into $m$ folds (subsets) and run the solver $m\cdot |\alphas|$ times, each run is done on a concatenation of $m-1$ folds (subsets) and $\alpha\in \alphas$, and its result is tested on the remaining ``test fold''. Finally, the cross validation returns the parameter ($\alpha\in \alphas$) that yield the optimal (minimal) mean value on the test folds; see~\cite{kohavi1995study} for details.

\paragraph{From Caratheodory Matrix to LMS solvers.}

As stated in Theorem~\ref{theorem:cov}, Algorithm~\ref{covAlg} gets an input matrix $A\in\REAL^{n\times d}$ and an integer $k>d+1$, and returns a matrix $S\in \REAL^{(d^2+1)\times d}$ of the same covariance $A^TA=S^TS$, where $k$ is a parameter for setting the desired numerical accuracy. To "learn" a given label vector $b\in\REAL^n$, Algorithm~\ref{LMSCoreset} partitions the matrix $A'=(A\mid b)$ into $m$ partitions, computes a subset for each partition that preserves its covariance matrix, and returns the union of subsets as a pair $(C,y)$ where $C\in \REAL^{(m(d+1)^2+m)\times d}$ and $y\in \REAL^{m(d+1)^2+m}$. For $m=1$ and every $x\in\REAL^d$,
\begin{equation}\label{eq1fold}
\begin{split}
\norm{Ax-b}
=\norm{A'(x \mid -1)^T}
=\norm{(C \mid y)(x \mid -1)^T}
=\norm{Cx-y},
\end{split}
\end{equation}
where the second and third equalities follow from Theorem~\ref{theorem:cov} and the construction of $C$, respectively. This enables us to replace the original pair $(A,b)$ by the smaller pair $(C,y)$ for the solvers in Table~\ref{appstable} as in Algorithms~\ref{LstsqOurAlg}--\ref{ElasticOurAlg}. A scaling factor $\beta$ is also needed in Algorithms~\ref{LassoOurAlg}--\ref{ElasticOurAlg}.

To support CV with $m>1$ folds, Algorithm~\ref{LMSCoreset} computes a coreset for each of the $m$ folds (subsets of the data) in Line~\ref{compCov} and concatenates the output coresets in Line~\ref{concatCoresets}. Thus,~\eqref{eq1fold} holds similarly for each fold (subset) when $m>1$.

\setcounter{AlgoLine}{0}
\begin{algorithm}
\caption{$\calcCoreset(A,b,\folds,k)$}\label{LMSCoreset}
{\begin{minipage}{\textwidth}
\begin{tabbing}
\textbf{Input:} \quad\=A matrix $A\in\REAL^{n\times d}$, a vector $b\in \REAL^n$, a number (integer) $m$ of cross-validation folds,\\\>and an integer $k\in \br{1,\cdots,n}$ that denotes accuracy/speed trade-off.\\
\textbf{Output:} \>A matrix $C\in\REAL^{O(md^2)\times d}$ whose rows are scaled rows from $A$, and a vector $y\in\REAL^d$.
\end{tabbing}\end{minipage}}

\nl $A':= (A\mid b )$ \quad\quad\tcp{A matrix $A' \in \REAL^{n\times (d+1)}$}
$\br{A'_1, \cdots ,A'_{\folds}}:=$ a partition of the rows of $A'$ into $\folds$ matrices, each of size $(\frac{n}{\folds}) \times (d+1)$

\For{every $i\in\br{1,\cdots,\folds}$} {

	$S_i:= \cova(A'_i,k)$ \label{compCov}\quad \tcp{see Algorithm~\ref{covAlg}} 

}
$S:=(S_1^T | \cdots | S_{\folds}^T)^T$ \label{concatCoresets} \tcp{concatenation of the $\folds$ matrices into a single matrix of $m(d+1)^2+m$ rows and $d+1$ columns}

$C:=$ \= the first $d$ columns of $S$

$y:=$ \= the last column of $S$

\Return $(C,y)$
\end{algorithm}
\begin{minipage}[t]{6.7cm}
  \setcounter{AlgoLine}{0}
  \begin{algorithm}[H]
    \caption{$\LstsqOurAlg(A,b,m,k)$}\label{LstsqOurAlg}
\nl $(C,y):=\calcCoreset(A,b,m,k)$

\nl $x^* := \lstsqq(C,y)$

\nl \Return $x^*$
  \end{algorithm}
\end{minipage}%
\hfill
\begin{minipage}[t]{7cm}
  \setcounter{AlgoLine}{0}
  \begin{algorithm}[H]
    \caption{$\RidgeOurAlg(A,b,\alphas,\folds,k)$}\label{RidgeOurAlg}
\nl $(C,y):=\calcCoreset(A,b,\folds,k)$

\nl  $(x,\alpha) := \ridge(C,y,\alphas,m)$

\nl \Return $(x,\alpha)$

  \end{algorithm}
\end{minipage}
\begin{minipage}[t]{6.7cm}
  \setcounter{AlgoLine}{0}
    \begin{algorithm}[H]
    \caption{$\LassoOurAlg(A,b,\alphas,\folds,k)$}\label{LassoOurAlg}
\nl $(C,y):=\calcCoreset(A,b,\folds,k)$

\nl $\beta := \sqrt{\big({\folds \cdot \big(d+1)^2+\folds\big)}/n}$

\nl  $(x,\alpha) := \lasso(\beta \cdot C, \beta \cdot y,\alphas,m)$

\nl \Return $(x,\alpha)$
  \end{algorithm}
\end{minipage}%
\hfill
\begin{minipage}[t]{7cm}
  \setcounter{AlgoLine}{0}
  \begin{algorithm}[H]
    \caption{$\ElasticOurAlg(A,b,\folds,\alphas,\rho,k)$}\label{ElasticOurAlg}
\nl $(C,y):=\calcCoreset(A,b,\folds,k)$

\nl $\beta := \sqrt{\big({\folds \cdot \big(d+1)^2+\folds\big)}/n}$

\nl $(x,\alpha):=\elastic(\beta \cdot C, \beta \cdot y,\alphas,\rho,m)$

\nl \Return $(x,\alpha)$
  \end{algorithm}
\end{minipage}

\section{From Sparse Caratheodory to LMS Solvers} \label{sec:cordCaraToLMS}
In this section, we replace Algorithm~\ref{LMSCoreset} from the previous section by Algorithms~\ref{LMSCoresetnew}, which utilizes Algorithm~\ref{cordCovAlg} instead of Algorithm~\ref{covAlg} to reduce the running time's polynomial dependency on $d$. The fastest running time for Algorithms~\ref{LMSCoresetnew}, after tuning its parameters, is $O(nd^2)$.  

Algorithm~\ref{LMSCoresetnew} also partitions the input matrix $A' = (A \mid b)$ from the previous section into $m$ folds. It then computes, for each fold, a set of only $d$ rows that maintains the covariance of this fold using Algorithm~\ref{cordCovAlg} (instead of the $(d+1)^2$ subset of rows from the previous section). The output is the union $(C,y)$ of all those subsets where $C \in \REAL^{md \times d}$ and $y \in \REAL^{md}$. Therefore, $C$ and $y$ here (i) satisfy~\eqref{eq1fold} for any $m\geq 1$, (ii) are smaller than those computed in the previous section, but (iii) they are not a subset of $A$ and $b$ respectively.

\newcommand{\calcCoresetnew}{\textsc{LMS-Coreset++}}
\setcounter{AlgoLine}{0}
\begin{algorithm}
\caption{$\calcCoresetnew(A,b,\folds,k_1,k_2)$}\label{LMSCoresetnew}
{\begin{minipage}{\textwidth}
\begin{tabbing}
\textbf{Input:} \quad\=A matrix $A\in\REAL^{n\times d}$, a vector $b\in \REAL^n$, a number (integer) $m$ of cross-validation folds,\\\>and two integers $k_1,k_2$ for numerical accuracy/speed trade-off such that \\\>$k_2 \in \br{1,\cdots,(d+1)^2}$ and $k_1\in \br{\ceil{\frac{(d+1)^2}{k_2}}+2,\cdots,n}$.\\
\textbf{Output:} \>A matrix $C\in\REAL^{O(md)\times d}$, and a vector $y\in\REAL^d$.
\end{tabbing}\end{minipage}}

\nl $A':= (A\mid b )$ \quad\quad\tcp{A matrix $A' \in \REAL^{n\times (d+1)}$}
$\br{A'_1, \cdots ,A'_{\folds}}:=$ a partition of the rows of $A'$ into $\folds$ matrices, each of size $(\frac{n}{\folds}) \times (d+1)$

\For{every $i\in\br{1,\cdots,\folds}$} {

	$S_i:= \cordCova(A'_i,k_1,k_2)$ \label{compCov}\quad \tcp{see Algorithm~\ref{cordCovAlg}} 

}
$S:=(S_1^T | \cdots | S_{\folds}^T)^T$ \label{concatCoresets} \tcp{concatenation of the $\folds$ matrices into a single matrix of $md$ rows and $d+1$ columns}

$C:=$ \= the first $d$ columns of $S$

$y:=$ \= the last column of $S$

\Return $(C,y)$
\end{algorithm}
\newcommand{\LstsqOurAlgnew}{\textsc{LinReg-Boost++}}
\newcommand{\LassoOurAlgnew}{\textsc{Lassocv-Boost++}}
\newcommand{\RidgeOurAlgnew}{\textsc{Ridgevc-Boost++}}
\newcommand{\ElasticOurAlgnew}{\textsc{Elasticv-Boost++}}
\begin{minipage}[t]{6.7cm}
  \setcounter{AlgoLine}{0}
  \begin{algorithm}[H]
    \caption{$\LstsqOurAlgnew(A,b,m,k_1,k_2)$}\label{LstsqOurAlgnew}
\nl $(C,y):=\calcCoresetnew(A,b,m,k_1,k_2)$

\nl $x^* := \lstsqq(C,y)$

\nl \Return $x^*$
  \end{algorithm}
\end{minipage}%
\hfill
\begin{minipage}[t]{7cm}
  \setcounter{AlgoLine}{0}
  \begin{algorithm}[H]
    \caption{$\RidgeOurAlgnew(A,b,\alphas,\folds,k_1,k_2)$}\label{RidgeOurAlgnew}
\nl $(C,y):=\calcCoresetnew(A,b,\folds,k_1,k_2)$

\nl  $(x,\alpha) := \ridge(C,y,\alphas,m,k_1,k_2)$

\nl \Return $(x,\alpha)$

  \end{algorithm}
\end{minipage}
\begin{minipage}[t]{6.7cm}
  \setcounter{AlgoLine}{0}
    \begin{algorithm}[H]
    \caption{$\LassoOurAlgnew(A,b,\alphas,\folds,k_1,k_2)$}\label{LassoOurAlgnew}
\nl $(C,y):=\calcCoresetnew(A,b,\folds,k_1,k_2)$

\nl $\beta := \sqrt{\frac{md}{n}}$

\nl  $(x,\alpha) := \lasso(\beta \cdot C, \beta \cdot y,\alphas,m)$

\nl \Return $(x,\alpha)$
  \end{algorithm}
\end{minipage}%
\hfill
\begin{minipage}[t]{7cm}
  \setcounter{AlgoLine}{0}
  \begin{algorithm}[H]
    \caption{$\ElasticOurAlgnew(A,b,\folds,\alphas,\rho,k_1,k_2)$}\label{ElasticOurAlgnew}
\nl $(C,y):=\calcCoresetnew(A,b,\folds,k_1,k_2)$

\nl $\beta := \sqrt{\frac{md}{n}}$

\nl $(x,\alpha):=\elastic(\beta \cdot C, \beta \cdot y,\alphas,\rho,m)$

\nl \Return $(x,\alpha)$
  \end{algorithm}
\end{minipage}

\section{Coresets for SVD and PCA}\label{SVDpca}

In this section, we show how to leverage Algorithm~\ref{covAlg} in order to construct coresets for dimensionality reduction algorithms such as the widely used Principal Component Analysis (PCA) and Singular Value Decomposition (SVD). We first briefly define the $j$-SVD and $j$-PCA problems. 
We then demonstrate how a coreset for the $j$-SVD problem can be obtained using Algorithm~\ref{covAlg}; see Observation~\ref{obesrv:svd}. Finally, we suggest a coreset construction algorithm for the $j$-PCA problem; see Algorithm~\ref{PCA-CORESET} and Observation~\ref{obesrv:pca}. 

LMS solvers usually support data which is not centralized around the origin. The PCA is closely related to this uncetralized-data case, since it aims to find an affine subspace (does not intersect the origin), which best fits the data. Therefore, a coreset for PCA, as presented in this section, can also serve as a coreset for LMS solvers with uncentralized data. In common coding libraries, such as SKlearn, this property is usually referred to by a flag called \emph{fit\_intercept}.

\paragraph{$j$-SVD.}
In the $j$-SVD problem, we are given an input matrix $A\in \REAL^{n\times d}$ and an iteger $j\geq 1$, and the goal is to compute the linear (non-affine) $j$-dimensional subspace that minimizes its sum of squared distances to the rows of $A$. 
Here, a matrix $C\in \REAL^{m \times d}$ is a coreset for the input matrix $A$ if it satisfies the following pair of properties: (i) The rows of $C$ are scaled rows of $A$, and (ii) the sum of the squared distances from every (non-affine) $j$-dimensional subspace to either the rows of $C$ or the rows of $A$ is approximately the same, up to some multiplicative factor.
For the coreset to be effective, we aim to compute such $C$ where $m\ll n$.

Formally, let $H$ be a (non-affine) $j$-dimensional subspace of $\REAL^d$. As explained at~\cite{maalouf2019tight}, every such subspace $H$ is spanned by the column space of a matrix $X\in \REAL^{d\times j}$ whose columns are orthonormal, i.e., $X^TX=I_j$. Given this matrix $X$, for every $i\in\br{1,\cdots, n}$ the squared distance from the $i$th row $a_i$ of $A$ to $H$ can be written as
\[
\norm{a_i^T-a_i^TXX^T}_2^2.
\]

Let $Y\in \REAL^{d\times(d-j)}$ to be a matrix whose columns are mutually orthogonal unit vectors that span the orthogonal complement subspace of $H$ (i.e., $Y^TY=I_{(d-j)}$ and $[X \mid Y]^T[X \mid Y]=I_d$). The squared distance from the $i$th row $a_i$ of $A$ to $H$ can now be written as $\norm{a_iY}^2_2$; See full details in Section~{3} at~\cite{maalouf2019tight}. Hence, the sum of squared distance from the rows of $A$ to the $j$-subspace $H$ is equal to
\begin{align}
 \smi \norm{a_iY}^2_2 = \norm{AY}_F^2. \label{useitinsens}
\end{align}


\paragraph{$j$-PCA.}
More generally, in the $j$-PCA problem, the goal is to compute the \emph{affine} $j$-dimensional subspace that minimizes its sum of squared distances to the rows of $A$, over every $j$-dimensional subspace that may be translated from the origin of $\REAL^d$. Formally, an affine $j$-dimensional subspace $H$ is represented by a pair $(X,\ell)$ where $X\in \REAL^{d\times j}$ is an orthogonal matrix, and $\ell$ is a vector in $\REAL^d$ that represents the translation of the subspace from the origin. Hence, the sum of squared distance from the rows of $A$ to the affine $j$-dimensional subspace $H$ is 
\begin{align}
\smi \norm{(a_i-\ell) -(a_i-\ell)XX^T}^2.    
\end{align}

As above, by letting $Y\in \REAL^{d\times(d-j)}$ be an orthogonal matrix whose rows span the orthogonal complement subspace of $H$, the sum of squared distances from the rows of $A$ to $H$ is now equal to
\begin{align*}
 \smi \norm{(a_i-\ell^T)Y}^2_2.
\end{align*}

\newcommand{\pcacoreset}{\textsc{PCA-CORESET}}
\newcommand{\smil}{\sum_{i=1}^l}
\setcounter{AlgoLine}{0}
\begin{algorithm}[H]
\caption{$\pcacoreset(A,{k})$}\label{PCA-CORESET}
\SetKwInOut{Input}{Input}
\SetKwInOut{Output}{Output}
\Input{A matrix $A\in\REAL^{n\times d}$, and an integer ${k}\in \br{1,\cdots,n}$ that denotes accuracy/speed trade-off.}
\Output{A matrix $C\in\REAL^{l \times d}$ whose rows are scaled rows in $A$, and a weights function $w$, where $l = (d+1)^2 +1$. See Observation~\ref{obesrv:pca}.}
$l = (d+1)^2 +1$

$A' := [ A\mid (1,\cdots ,1)^T]$\label{A'line}

$S' := \cova(A',{k})$

Identify the $i$th row of $S'$ by $s'_i = (s_i^T \mid z_i)$, where $s_i \in \REAL^{d}$ and $z_i \in \REAL$

Set $C \in \REAL^{l\times d}$ to be a matrix whose $i$th row is $c_i := s_i^T/z_i$.\label{Sdefff}

$w(c_i) := z_i^2$ for every $i \in l$.\label{wdefff} 

\Return $(C,w)$
\end{algorithm}

\begin{observation}[$j$-SVD coreset] \label{obesrv:svd}
Let $A \in \REAL^{n\times d}$ be a matrix, $j\in\br{1,\cdots,d-1}$ be an integer, and $k \geq d^2+2$. Let $S \in \REAL^{(d^2+1)\times d}$ be the output of a call to $\cova(A,{k})$; see Algorithm~\ref{covAlg}.  Then for every matrix $Y\in \REAL^{d\times(d-j)}$ such that $Y^TY=I_{(d-j)}$, we have that
$\norm{AY}^2_F = \norm{SY}^2_F$.
\end{observation}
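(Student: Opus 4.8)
The plan is to reduce the claimed Frobenius-norm identity to the covariance identity already guaranteed by Theorem~\ref{theorem:cov}, so that essentially no new geometric argument is required. First I would observe that the hypothesis $k\geq d^2+2$ is exactly the condition under which Theorem~\ref{theorem:cov} applies to the call $\cova(A,k)$; hence the output $S\in\REAL^{(d^2+1)\times d}$ satisfies $A^TA=S^TS$. This single fact is the entire substance on which the observation rests.

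Next I would rewrite both sides of the desired equality as traces of quadratic forms in the covariance matrix. For any matrix $M$ and any $Y$, the squared Frobenius norm obeys $\norm{MY}_F^2=\mathrm{tr}\!\left((MY)^T(MY)\right)=\mathrm{tr}\!\left(Y^TM^TMY\right)$. Applying this with $M=A$ and with $M=S$ yields $\norm{AY}_F^2=\mathrm{tr}(Y^TA^TAY)$ and $\norm{SY}_F^2=\mathrm{tr}(Y^TS^TSY)$, so each side has been expressed purely in terms of $A^TA$ and $S^TS$ respectively.

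Finally I would substitute $A^TA=S^TS$ into one of these two expressions; the two traces then coincide because they are the trace of the identical matrix $Y^T(A^TA)Y=Y^T(S^TS)Y$, giving $\norm{AY}_F^2=\norm{SY}_F^2$. Note that this argument never invokes the orthonormality constraint $Y^TY=I_{(d-j)}$ nor the integer $j$: the identity in fact holds for every $Y\in\REAL^{d\times(d-j)}$, and when specialized to orthonormal $Y$ it is precisely what certifies $S$ as an exact ($\eps=0$) coreset for the $j$-SVD objective of~\eqref{useitinsens}. The only point requiring any care is the bookkeeping of transposes (since rows of $A$ are written as $a_i^T$) so that the trace rewriting is dimensionally consistent; there is no genuine obstacle here, as the whole content of the observation is inherited directly from the covariance-preservation property of Algorithm~\ref{covAlg}.
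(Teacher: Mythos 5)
Your proposal is correct and follows essentially the same route as the paper's own proof: invoke Theorem~\ref{theorem:cov} to get $A^TA=S^TS$, then rewrite $\norm{AY}_F^2=\mathrm{tr}(Y^T(A^TA)Y)$ and substitute. Your added remark that the orthonormality of $Y$ is never used is accurate and consistent with the paper's argument, which likewise does not invoke it.
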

\begin{proof}
See full proof of Observation~\ref{obesrv:svd2} in the Appendix.
\end{proof}

\begin{observation}[$j$-PCA coreset] \label{obesrv:pca}
Let $A = (a_1 \mid \cdots \mid a_n)^T \in \REAL^{n\times d}$ be a matrix, and $j\in\br{1,\cdots,d-1}$, $l = (d+1)^2 + 1$, and $k \geq d^2+2$ be integers. Let $(C,w)$ be the output of a call to $\pcacoreset(A,{k})$; see Algorithm~\ref{PCA-CORESET}, where $C = (c_1 \mid \cdots \mid c_l)^T \in \REAL^{l \times d}$ and $w\in \REAL^{l}$.
 Then for every matrix $Y\in \REAL^{d\times(d-j)}$ such that $Y^TY=I$, and a vector $\ell \in \REAL^{d}$ we have that
\begin{align*}
 \smi \norm{(a_i-\ell^T)Y}^2_2= \sum_{i=1}^l w_i \norm{(c_i-\ell^T)Y}^2_2,
\end{align*}
\end{observation}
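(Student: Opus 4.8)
The plan is to collapse the whole statement into a single application of Theorem~\ref{theorem:cov}, by absorbing the affine translation $\ell$ into the homogeneous coordinate that Algorithm~\ref{PCA-CORESET} introduces in Line~\ref{A'line}. The guiding observation is that appending a column of ones to $A$ turns an \emph{affine} projection cost into a \emph{linear} (Frobenius) cost in one higher dimension, after which the exact covariance identity $(A')^TA'=(S')^TS'$ supplied by Theorem~\ref{theorem:cov} transfers the cost verbatim from $A'$ to $S'$. All that then remains is bookkeeping to translate back from $S'$ to the pair $(C,w)$ returned by the algorithm.

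Concretely, I would fix $Y$ and $\ell$ and define the augmented direction matrix
\[
\tilde{Y} := \begin{pmatrix} Y \\ -\ell^T Y \end{pmatrix} \in \REAL^{(d+1)\times(d-j)}.
\]
Writing the $i$th row of $A' = [A \mid (1,\dots,1)^T]$ as $(a_i^T \mid 1)$, a direct computation gives $(a_i^T \mid 1)\tilde{Y} = (a_i-\ell)^TY$, so that $\smi \norm{(a_i-\ell^T)Y}_2^2 = \norm{A'\tilde{Y}}_F^2$. Rewriting this Frobenius norm as $\mathrm{tr}\!\left(\tilde{Y}^T (A')^TA'\, \tilde{Y}\right)$ and invoking Theorem~\ref{theorem:cov} for the $(d+1)$-column matrix $A'$ (so that $(A')^TA' = (S')^TS'$) yields $\norm{A'\tilde{Y}}_F^2 = \norm{S'\tilde{Y}}_F^2$. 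This is the crux: the equality of costs for \emph{every} $\tilde{Y}$ is exactly what the covariance-preservation guarantee buys us, and it is the step that does all the real work.

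It then remains to identify $\norm{S'\tilde{Y}}_F^2$ with the right-hand side of the claim. Writing the $i$th row of $S'$ as $s'_i=(s_i^T \mid z_i)$ and using $c_i^T = s_i^T/z_i$ (Line~\ref{Sdefff}) together with $w_i = z_i^2$ (Line~\ref{wdefff}), a one-line computation gives $s'_i\tilde{Y} = (s_i^T - z_i\ell^T)Y = z_i (c_i-\ell)^TY$, hence $\norm{s'_i\tilde{Y}}_2^2 = z_i^2\norm{(c_i-\ell^T)Y}_2^2 = w_i\norm{(c_i-\ell^T)Y}_2^2$. Summing over the $l$ rows gives $\norm{S'\tilde{Y}}_F^2 = \sum_{i=1}^l w_i\norm{(c_i-\ell^T)Y}_2^2$, and chaining the three displayed equalities completes the argument.

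The one place that needs genuine care — rather than routine linear algebra — is the well-definedness of the division $c_i = s_i^T/z_i$, i.e. that $z_i\neq 0$. Here I would appeal to the construction in Algorithm~\ref{covAlg}: every row of $S'$ is a \emph{positively} scaled row of $A'$, and since the last column of $A'$ is identically one, the last coordinate $z_i$ of $s'_i$ equals that positive scaling factor and is therefore strictly positive. This in turn relies on the Caratheodory weights being positive, which holds once the zero-weight points are discarded in Algorithm~\ref{fastCara}. A secondary, purely cosmetic, point is the mild notational clash between treating $a_i, c_i, \ell$ as columns versus rows; I would fix the convention that $a_i^T, c_i^T$ denote the $i$th rows of $A, C$ and read $a_i-\ell^T$ as $(a_i-\ell)^T$ throughout, which renders every expression above dimensionally consistent.
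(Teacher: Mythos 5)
Your proposal is correct and follows essentially the same route as the paper's own proof: homogenize via $A'=[A\mid(1,\dots,1)^T]$, absorb $\ell$ into the augmented directions $(y_j^T\mid -\ell^Ty_j)^T$ (your $\tilde{Y}$, written column-by-column in the paper), transfer the cost from $A'$ to $S'$ via the exact covariance identity of Theorem~\ref{theorem:cov} (invoked in the paper through Observation~\ref{obesrv:svd}, whose trace proof indeed never uses orthonormality, so your direct invocation is equally valid), and unpack $s'_i=(s_i^T\mid z_i)$ with $c_i=s_i/z_i$, $w_i=z_i^2$. Your added remark that $z_i>0$ --- because each row of $S'$ is a positively scaled row of $A'$ whose last entry is $1$, with positivity of the Caratheodory weights guaranteed by the zero-weight removal in Algorithm~\ref{fastCara} --- is a well-definedness point the paper leaves implicit.
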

\begin{proof}
See full proof of Observation~\ref{obesrv:pca2} in the Appendix.
\end{proof}

\section{Experimental Results} \label{sec:applications} \label{sec:ER}

\begin{table}
\centering
\begin{adjustbox}{width=\textwidth}
\begin{tabular}{|c|c|c|c|}
\hline & & &\\[-8pt]
\bf Solver  &  \bf Objective function &  \bf Python's Package & \bf Example Python's solver \\[2pt]
\hline\\[-8pt]
Linear regression~\cite{bjorck1967solving}  & $\displaystyle {\norm{Ax-b}_2^2}$ & \scipy & \lstsqq$(A,b)$ \\[10pt]
\hline & & &\\[-8pt]
Ridge regression~\cite{hoerl1970ridge}   & $\displaystyle{{\norm{Ax-b}_2^2}+\alpha\norm{x}_2^2}$& \sklearn &  \ridge($A,b,\alphas,m$)  \\[10pt]
\hline& & &\\[-8pt]
Lasso regression~\cite{tibshirani1996regression}  & $\displaystyle{\frac{1}{2n}{\norm{Ax-b}_2^2}+\alpha\norm{x}_1}$ &\sklearn  & \lasso($A,b,\alphas,m$)  \\[10pt]
\hline & & &\\[-8pt]
Elastic-Net regression~\cite{zou2005regularization}  &$\displaystyle\frac{1}{2n}{\norm{Ax-b}_2^2}+ \rho\alpha\norm{x}_2^2 +\frac{(1-\rho)}{2}\alpha\norm{x}_1$   & \sklearn &  \elastic($A,b,\alphas,\rho,m$)\\[10pt]
\hline
\end{tabular}
\end{adjustbox}
\caption{Four LMS solvers that were tested with Algorithm~\ref{LMSCoreset}. Each procedure gets a matrix $A\in\REAL^{n\times d}$, a vector $b\in\REAL^n$ and aims to compute $x\in\REAL^d$ that minimizes its objective function. Additional regularization parameters include $\alpha>0$ and $\rho\in [0,1]$. The Python's solvers use $m$-fold cross validation over every $\alpha$ in a given set $\alphas\subseteq[0,\infty)$.}\label{appstable}
\end{table}

In this section we apply our fast construction of the (Sparse) Carathoodory Set $S$ from the previous sections to boost the running time of common LMS solvers in Table~\ref{appstable} by a factor of tens to hundreds, or to improve their numerical accuracy by a similar factor to support, e.g., 32 bit floating point representation as in Fig.~\ref{fig:accuracy}. This is by running the given solver as a black box on the small matrix $C$ that is returned by Algorithms~\ref{LstsqOurAlg}--\ref{ElasticOurAlg} and Algorithms~\ref{LstsqOurAlgnew}--\ref{ElasticOurAlgnew}, which is based on $S$. That is, our algorithm does not compete with existing solvers but relies on them, which is why we called it a "booster". Open code for our algorithms is provided~\cite{opencode}.



\noindent\textbf{The experiments. }We applied our \calcCoreset{} and \calcCoresetnew{} coresets from Algorithms~\ref{LMSCoreset} and~\ref{LMSCoresetnew} on common Python's SKlearn LMS-solvers that are described in Table~\ref{appstable}.
Most of these experiments were repeated twice: using the default CPython distribution~\cite{wiki:cpython} and Intel's distribution~\cite{inteldist} of Python. All the experiments were conducted on a standard Lenovo Z70 laptop with an Intel i7-5500U CPU @ 2.40GHZ and 16GB RAM.
We used the $3$ following real-world datasets from the UCI Machine Learning Repository~\cite{Dua:2019}:
\begin{enumerate}[(i)]
\item 3D Road Network (North Jutland, Denmark)~\cite{kaul2013building}. It contains $n=434874$ records. We used the $d=2$ attributes: ``Longitude'' [Double] and ``Latitude'' [Double] to predict the attribute ``Height in meters'' [Double].
\item Individual household electric power consumption~\cite{dataset:power}. It contains $n=2075259$ records. We used the $d=2$ attributes: ``global active power'' [kilowatt - Double], ``global reactive power'' [kilowatt - Double]) to predict the attribute ``voltage'' [volt - Double].
\item House Sales in King County, USA~\cite{dataset:sales}. It contains $n=21,600$ records. We used the following $d=8$ attributes: ``bedrooms'' [integer], ``sqft living'' [integer], ``sqft lot'' [integer], ``floors'' [integer], ``waterfront'' [boolean], ``sqft above'' [integer], ``sqft basement'' [integer], ``year built'' [integer]) to predict the ``house price'' [integer] attribute.
\item Year Prediction Million Song Dataset~\cite{Bertin-Mahieux2011}. It contains $n = 515345$ records in $d=90$ dimensional space. We used the attributes $2$ till $90$ [Double] to predict the song release year [Integer] (first attribute).
\end{enumerate}

The synthetic data consists of an $n\times d$ matrix $A$ and vector $b$ of length $n$, both of uniform random entries in $[0,1000]$. As expected by the analysis, since our compression introduces no error to the computation accuracy, the actual values of the data had no affect on the results, unlike the size of the input which affects the computation time. Table~\ref{expstable} summarizes the experimental results.


\subsection{Competing methods}
We now present other sketches for improving the practical running time of LMS solvers; see discussion in Section~\ref{sec:discuss}.\\
\textbf{SKETCH + CHOLESKY} is a method which simply sums the $1$-rank matrices of outer products of rows in the input matrix $A'=(A\mid b)$ which yields its covariance matrix $B=A'^TA'$. The Cholesky decomposition $B=L^TL$ then returns a small matrix $L\in\REAL^{d\times d}$ that can be plugged to the solvers, similarly to our coreset.\\
\textbf{SKETCH + SVD} is a method which simply sums the $1$-rank matrices of outer products of rows in the input matrix $A'=(A\mid b)$, which yields its covariance matrix $B=A'^TA'$. The SVD decomposition $B=UDV^T$ is then applied to return a small matrix $\sqrt{D}V^T\in\REAL^{d\times d}$ that can be plugged to the solvers, similarly to our coreset.\\
\textbf{SKETCH + INVERSE} is applied in the special case of linear regression, where one can avoid applying the Cholesky decomposition and can compute the solution $(A^TA)^{-1}A^Tb$ directly after maintaining $A^TA$ and $A^Tb$ for the data seen so far.

\subsection{Discussion} \label{sec:discuss}
\paragraph{Practical parameter tuning. }As analyzed in Section~\ref{sec:fastCaraNew}, the theoretically optimal value for $k_2$ (for Algorithm~\ref{cordAlg}) would be $k_2 = d$. When considering Algorithms~\ref{RidgeOurAlgnew}--\ref{ElasticOurAlgnew}, where the dimension of the data to be compressed is $(d+1)^2$, it is straightforward that the optimal theoretical value is $k_2 = (d+1)^2$. However, in practice, this might not be the case due to the following tradeoff: a larger value of $k_2$ in practice means a larger number of calls to the subprocedure $\caraf$, though the dimension of the data in each call is smaller (i.e., smaller theoretical computational time), and vice versa. In our experiments we found that setting $k_2$ to be its maximum possible value ($(d+1)^2$) divided by some constant ($12$ in our case) yields the fastest running time; see Table~\ref{expstable}.

\paragraph{Running time. } Consider Algorithm~\ref{LMSCoreset}. The number of rows in the reduced matrix $C$ is $O(d^2)$, which is usually much smaller than the number $n$ of rows in the original matrix $A$. This also explains why some coresets (dashed red line) failed for small values of $n$ in Fig.~\ref{fig:lassoD},\ref{fig:elasticD},\ref{fig:lasso_intel} and~\ref{fig:elastic_intel}. The construction of $C$ takes $O(nd^2 +poly(d))$. 
Now consider the improved Algorithm~\ref{LMSCoresetnew}. The number of rows in the reduced matrix $C$ is only $O(d)$ and requires only $O(nd^2)$ time to compute for some tuning of the parameters as discussed in Section~\ref{sec:fastCaraNew}.
Solving linear regression takes the same time, with or without the coreset. However, the constants hidden in the $O$ notation are much smaller since the time for computing $C$ becomes neglectable for large values of $n$, as shown in Fig.~\ref{fig:lstsq}. We emphasize that, unlike common coresets, there is \emph{no accuracy loss} due to the use of our coreset, ignoring $\pm 10^{-15}$ additive errors/improvements. The improvement in running time due to our booster is in order of up to x10 compared to the algorithm's running time on the original data, for both small and large values of the dimension $d$, as shown in Fig.~\ref{fig:realData1}--\ref{fig:realData2_intel}, and~\ref{fig:realData1_new}--\ref{fig:realData1_intel_new}. The contribution of the coreset is significant, already for smaller values of $n$, when it boosts other solvers that use cross validation for parameter tuning as explained above. In this case, the time complexity reduces by a factor of $m\cdot |\alphas|$ since the coreset is computed only once for each of the $m$ folds, regardless of the size $|\alphas|$. In practice, the running time is improved by a factor of x10--x100 as shown for example in Fig.~\ref{fig:ridgeD}--~\ref{fig:elasticD} and Fig.~\ref{fig:ridgeD_new}--~\ref{fig:elasticD_new}.
As shown in the graphs, the computations via Intel's Python distribution reduced the running times by 15-40\% compared to the default CPython distribution, with or without the booster. This is probably due to its tailored implementation for our hardware.

Furthermore, as expected, the running time of Algorithm~\ref{LMSCoresetnew} was faster than of Algorithm~\ref{LMSCoreset} when tuned appropriately, without much increase in the numerical error.

\paragraph{Numerical stability. }The SKETCH + CHOLESKY and SKETCH + SVD methods are simple and accurate in theory, and there is no hope to improve their running time via our much more involved booster. However, they are numerically unstable in practice for the reasons that are explained in Section~\ref{sec:relatedWork}. In fact, on most of our experiments we could not even apply the SKETCH + CHOLESKY technique at all using 32-bit floating point representation. This is because the resulting approximation to $A'^TA'$ was not a positive definite matrix as required by the Cholesky Decomposition, and we could not compute the matrix $L$ at all. In case of success, the running time of our algorithms was slower by at most a factor of $2$ but even in these cases numerical accuracy was improved up to orders of magnitude; See Fig.~\ref{fig:accuracy} and~\ref{fig:accuracy_new} for histogram of errors using such 32-bit float representation which is especially common in GPUs for saving memory, running time and power~\cite{wiki:tablecomp}. 
This is not surprising, even when considering our (potentially) less numerically accurate algorithm (Algorithm~\ref{LMSCoresetnew}). During its cumputation, Algorithm~\ref{LMSCoresetnew} simply sums over only $O(d^2)$ terms, where each is a sparse matrix, and then applies SVD, while the most numerically stable competing method \textbf{SKETCH + SVD} sums over $n$ non-sparse matrices and then applies SVD, which makes it less accurate, since the numerical error usually accumulates as we sum over more terms. 

For the special case of linear regression, we can apply SKETCH + INVERSE, which still has large numerical issues compared to our coreset computation as shown in Fig.~\ref{fig:accuracy} and~\ref{fig:accuracy_new}.

\begin{table}
\centering
\begin{adjustbox}{width=\textwidth}
\begin{tabular}{|c|c|l|c|c|l|l|}
\hline
Figure  & \makecell{Algorithm's\\number} & x/y Axes labels &  \makecell{Python Distribution} & Dataset & Input Parameter \\
\hline
\ref{fig:ridgeD},\ref{fig:lassoD},\ref{fig:elasticD} & \ref{RidgeOurAlg}--\ref{ElasticOurAlg} & Size/Time for various $d$ & CPython & Synthetic & $m=3$, $|\alphas|=100$\\
\hline
\ref{fig:ridge_alpha},\ref{fig:lasso_alpha},\ref{fig:elastic_alpha} & \ref{RidgeOurAlg}--\ref{ElasticOurAlg} & Size/Time for various $|\alphas|$ & CPython & Synthetic & $m=3, d=7$\\
\hline
\ref{fig:ridge_intel},\ref{fig:lasso_intel},\ref{fig:elastic_intel} & \ref{RidgeOurAlg}--\ref{ElasticOurAlg} & Size/Time for various $d$ & Intel's & Synthetic & $m=3, |\alphas|=100$\\
\hline
\ref{fig:ridge_alpha_intel},\ref{fig:lasso_alpha_intel},\ref{fig:elastic_alpha_intel} & \ref{RidgeOurAlg}--\ref{ElasticOurAlg} &Size/Time for various $|\alphas|$ & Intel's & Synthetic & $m=3, d=7$\\
\hline
\ref{fig:realData1},\ref{fig:realData2} & \ref{RidgeOurAlg}--\ref{ElasticOurAlg} & $|\alphas|$/Time & CPython & Datasets (i),(ii) & $m=3$\\
\hline
\ref{fig:realData1_intel},\ref{fig:realData2_intel} & \ref{RidgeOurAlg}--\ref{ElasticOurAlg} & $|\alphas|$/Time & Intel's & Datasets (i),(ii) & $m=3$\\
\hline
\ref{fig:3d_python_alphas},\ref{fig:house_python_alphas} & \ref{RidgeOurAlg}--\ref{ElasticOurAlg} & Time/maximal $|\alphas|$ that is feasible & CPython & Datasets (i),(ii) & $m=3$\\
\hline
\ref{fig:3d_intel_alphas},\ref{fig:house_intel_alphas} & \ref{RidgeOurAlg}--\ref{ElasticOurAlg} & Time/maximal $|\alphas|$ that is feasible & Intel's & Datasets (i),(ii) & $m=3$\\
\hline
\ref{fig:lstsq} & \ref{LstsqOurAlg} & Size/Time for various Distributions & CPython, Intel's & Synthetic & $m=64$, $d=15$\\
\hline
\ref{fig:accuracy} & \ref{LstsqOurAlg} & Error/Count Histogram + Size/Error & CPython & Datasets (i),(iii) & $m=1$\\
\hline
\hline
\ref{fig:ridgeD_new},\ref{fig:lassoD_new},\ref{fig:elasticD_new} & \ref{RidgeOurAlgnew}--\ref{ElasticOurAlgnew} & Size/Time for various $d$ & CPython & Synthetic & $m=3$, $|\alphas|=100$, $d'=12$\\
\hline
\ref{fig:ridge_alpha_new},\ref{fig:lasso_alpha_new},\ref{fig:elastic_alpha_new} & \ref{RidgeOurAlgnew}--\ref{ElasticOurAlgnew} & Size/Time for various $|\alphas|$ & CPython & Synthetic & $m=3, d=35$, $d'=12$\\
\hline
\ref{fig:ridge_intel_new},\ref{fig:lasso_intel_new},\ref{fig:elastic_intel_new} & \ref{RidgeOurAlgnew}--\ref{ElasticOurAlgnew} & Size/Time for various $d$ & Intel's & Synthetic & $m=3$, $|\alphas|=100$, $d'=12$\\
\hline
\ref{fig:ridge_alpha_intel_new},\ref{fig:lasso_alpha_intel_new},\ref{fig:elastic_alpha_intel_new} & \ref{RidgeOurAlgnew}--\ref{ElasticOurAlgnew} & Size/Time for various $|\alphas|$ & Intel's & Synthetic & $m=3, d=35$, $d'=12$\\
\hline
\ref{fig:realData1_new} & \ref{RidgeOurAlgnew}--\ref{ElasticOurAlgnew} & $|\alphas|$/Time & CPython & Dataset (iv) & $m=3$, $d'=17$\\
\hline
\ref{fig:realData1_intel_new} & \ref{RidgeOurAlgnew}--\ref{ElasticOurAlgnew} & $|\alphas|$/Time & Intel's & Dataset (iv) & $m=3$, $d'=17$\\
\hline
\ref{fig:accuracy_new} & \ref{LstsqOurAlg},\ref{LstsqOurAlgnew} & Error/Count Histogram + Size/Error & CPython & Datasets (iii) & $m=1$, $d'=12$\\
\hline
\end{tabular}
\end{adjustbox}
\caption{\textbf{Summary of experimental results}. CPython \protect\cite{wiki:cpython} and Intel's \protect\cite{inteldist} distributions were used. The input: $A\in\REAL^{n\times d}$ and $b\in\REAL^n$, where $n$ is ``Data size''. CV used $m$ folds for evaluating each parameter in $\alphas$. The chosen number of clusters in Algorithm~\ref{LMSCoreset} is $k=2(d+1)^2+2$.
The chosen parameters in Algorithm~\ref{LMSCoresetnew} were set to $k_2 = \ceil{(d+1)^2/d'}$ and $k_1 = 2d'+2$, where $d'$ is specified in the table.
The parameters $\rho=0.5$ was used for Algorithms~\ref{ElasticOurAlg} and~\ref{ElasticOurAlgnew}. 
Computation time includes the computation of the reduced input $(C,y)$; See Sections~\ref{sec:fastCara} and~\ref{sec:fastCaraNew}. The histograms consist of bins along with the number of errors that fall in each bin.} \label{expstable}
\end{table}

\newcommand\s{0.24}
\begin{figure*}[t!]
\centering
    \begin{subfigure}[t]{\s\textwidth}
		\centering
		\includegraphics[width = \textwidth]{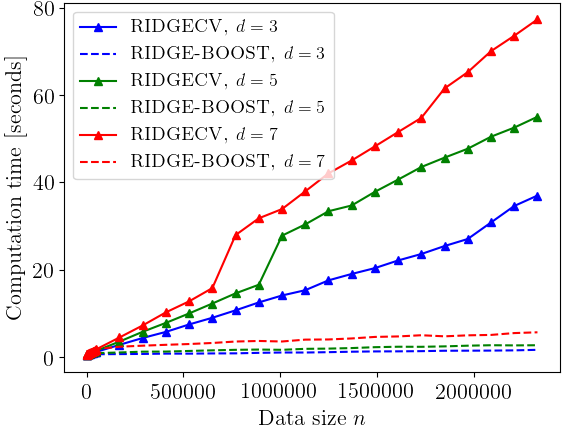}
        \caption{}
        \label{fig:ridgeD}
	\end{subfigure}
    \begin{subfigure}[t]{\s\textwidth}
		\centering
		\includegraphics[width = \textwidth]{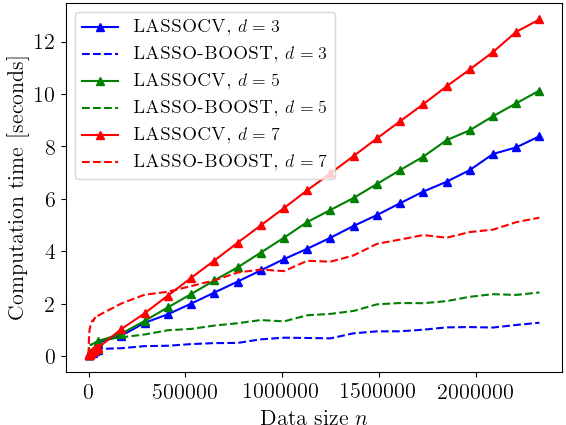}
        \caption{}
        \label{fig:lassoD}
	\end{subfigure}
    \begin{subfigure}[t]{\s\textwidth}
		\centering
		\includegraphics[width = \textwidth]{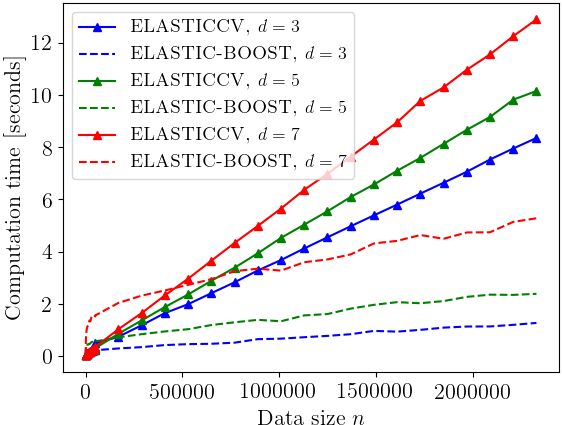}
        \caption{}
        \label{fig:elasticD}
	\end{subfigure}
    \begin{subfigure}[t]{\s\textwidth}
		\centering		
		\includegraphics[width = \textwidth]{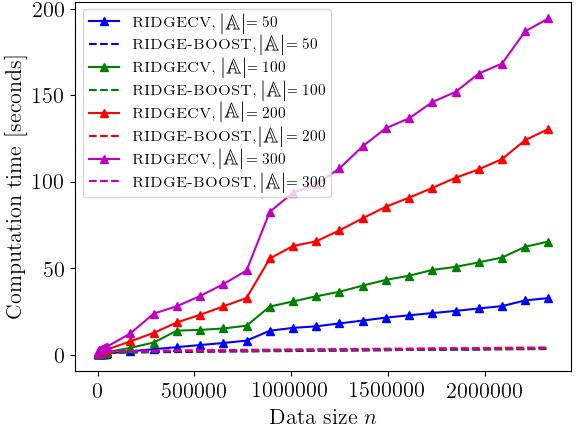}
        \caption{}
        \label{fig:ridge_alpha}
	\end{subfigure}
    \begin{subfigure}[t]{\s\textwidth}
		\centering		
		\includegraphics[width = \textwidth]{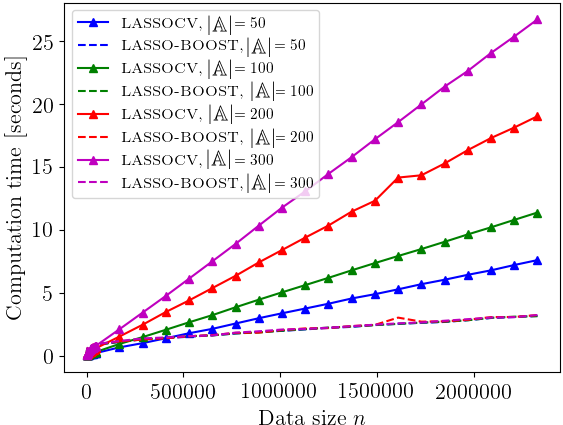}
        \caption{}
        \label{fig:lasso_alpha}
	\end{subfigure}
    \begin{subfigure}[t]{\s\textwidth}
		\centering		
		\includegraphics[width = \textwidth]{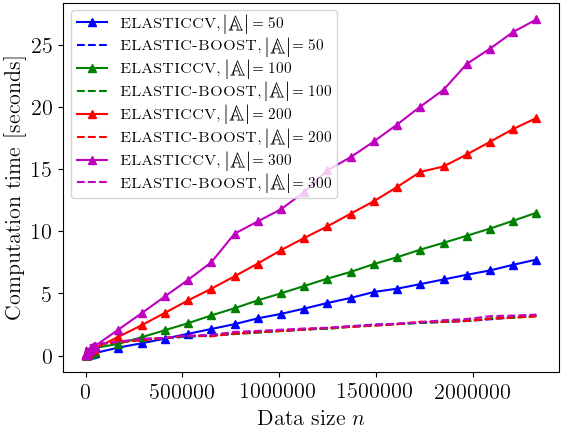}
        \caption{}
        \label{fig:elastic_alpha}
	\end{subfigure}
    \begin{subfigure}[t]{\s\textwidth}
		\centering		
		\includegraphics[width = \textwidth]{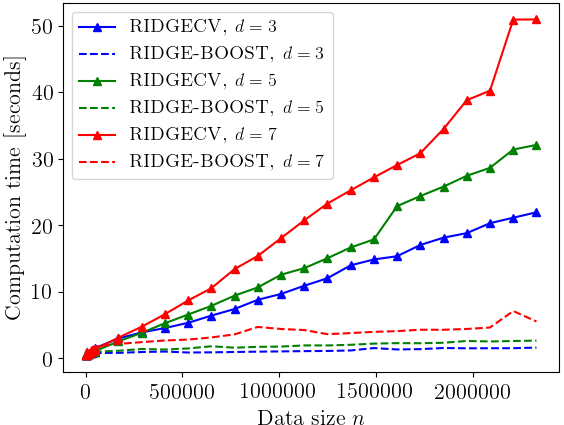}
        \caption{}
        \label{fig:ridge_intel}
	\end{subfigure}
    \begin{subfigure}[t]{\s\textwidth}
		\centering		
		\includegraphics[width = \textwidth]{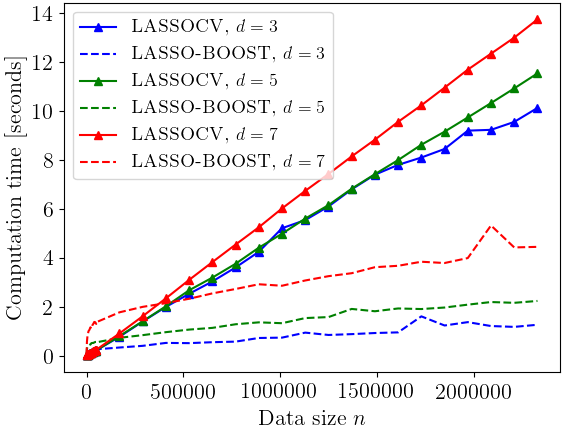}
        \caption{}
        \label{fig:lasso_intel}
	\end{subfigure}
    \begin{subfigure}[t]{\s\textwidth}
		\centering		
		\includegraphics[width = \textwidth]{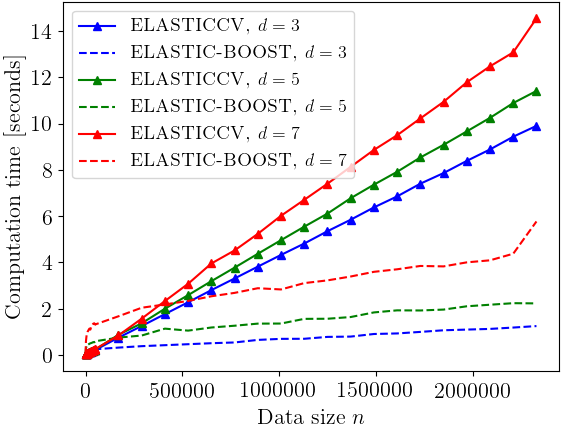}
        \caption{}
        \label{fig:elastic_intel}
	\end{subfigure}
    \begin{subfigure}[t]{\s\textwidth}
		\centering		
		\includegraphics[width = \textwidth]{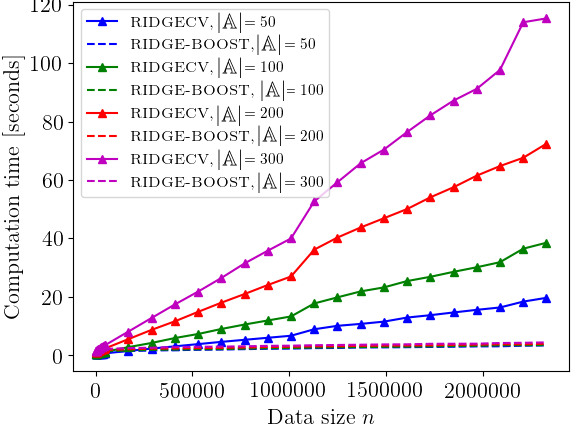}
        \caption{}
        \label{fig:ridge_alpha_intel}
	\end{subfigure}
    \begin{subfigure}[t]{\s\textwidth}
		\centering		
		\includegraphics[width = \textwidth]{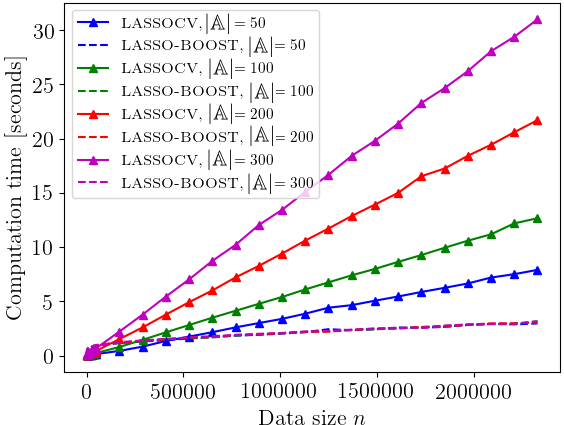}
        \caption{}
        \label{fig:lasso_alpha_intel}
	\end{subfigure}
    \begin{subfigure}[t]{\s\textwidth}
		\centering		
		\includegraphics[width = \textwidth]{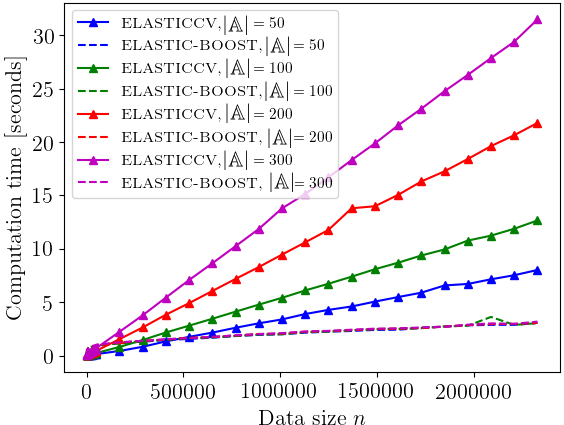}
        \caption{}
        \label{fig:elastic_alpha_intel}
	\end{subfigure}
    \begin{subfigure}[t]{\s\textwidth}
		\includegraphics[width = \textwidth]{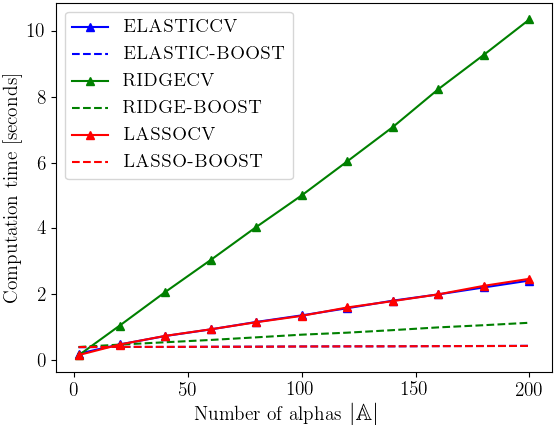}
        \caption{}
        \label{fig:realData1}
	\end{subfigure}
    \begin{subfigure}[t]{\s\textwidth}
		\includegraphics[width = \textwidth]{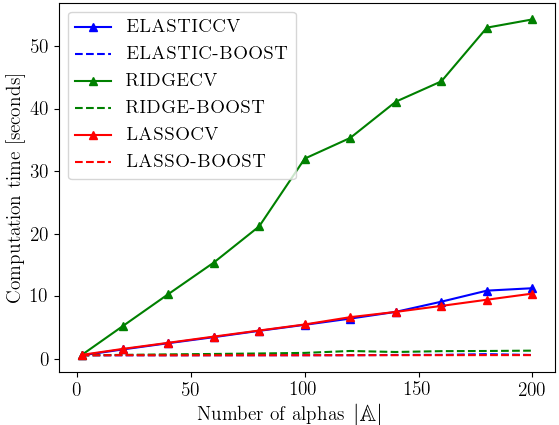}
        \caption{}
        \label{fig:realData2}
	\end{subfigure}
    \begin{subfigure}[t]{\s\textwidth}		
		\includegraphics[width = \textwidth]{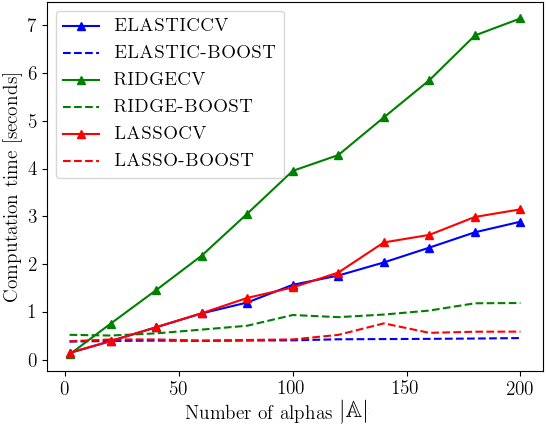}
        \caption{}
        \label{fig:realData1_intel}
	\end{subfigure}
    \begin{subfigure}[t]{\s\textwidth}	
		\includegraphics[width = \textwidth]{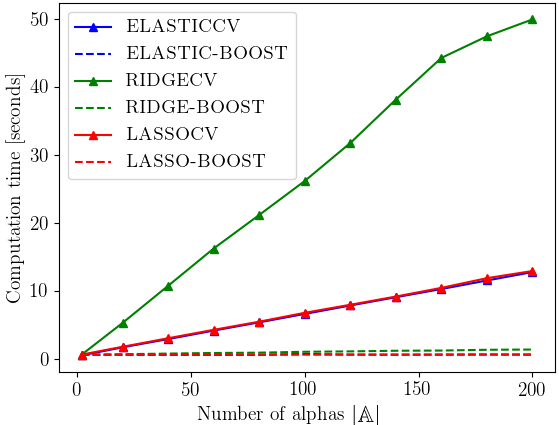}
        \caption{}
        \label{fig:realData2_intel}
	\end{subfigure}
    \begin{subfigure}[t]{\s\textwidth}		
		\includegraphics[width = \textwidth]{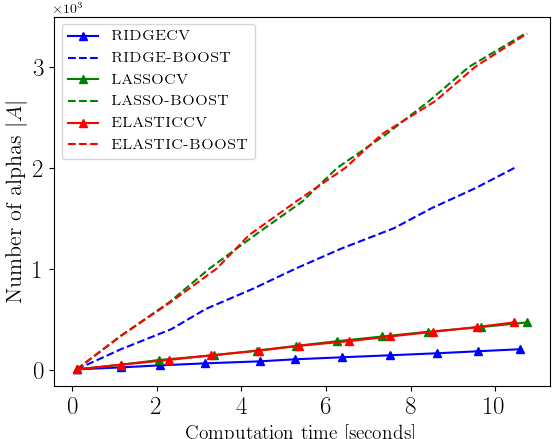}
        \caption{}
        \label{fig:3d_python_alphas}
	\end{subfigure}
    \begin{subfigure}[t]{\s\textwidth}	
		\includegraphics[width = \textwidth]{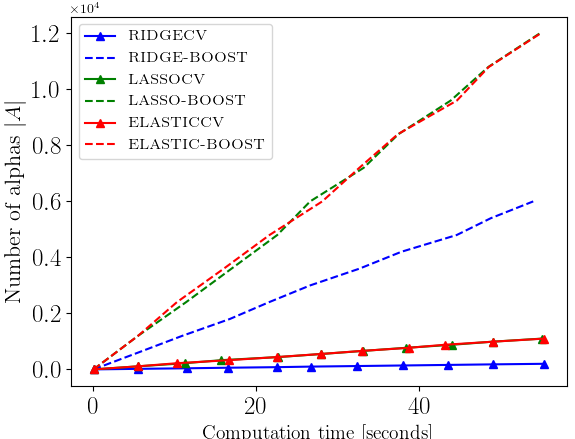}
        \caption{}
        \label{fig:house_python_alphas}
	\end{subfigure}
    \begin{subfigure}[t]{\s\textwidth}		
		\includegraphics[width = \textwidth]{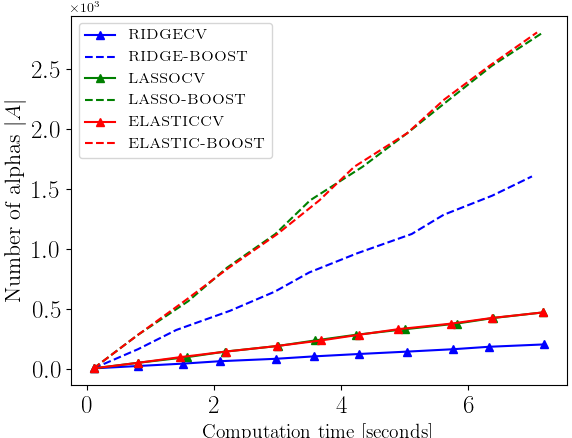}
        \caption{}
        \label{fig:3d_intel_alphas}
	\end{subfigure}
    \begin{subfigure}[t]{\s\textwidth}	
		\includegraphics[width = \textwidth]{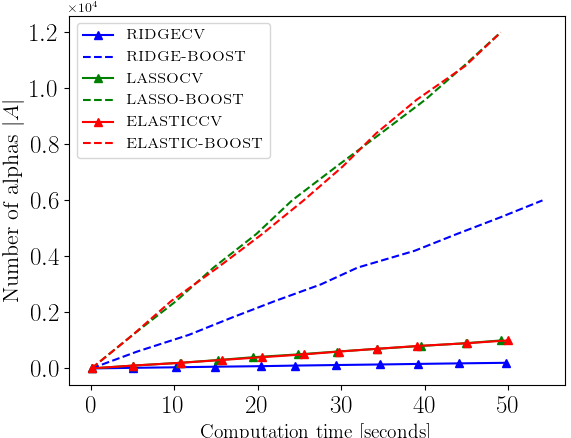}
        \caption{}
        \label{fig:house_intel_alphas}
	\end{subfigure}
    \begin{subfigure}[t]{\s\textwidth}	
		\includegraphics[width=\textwidth]{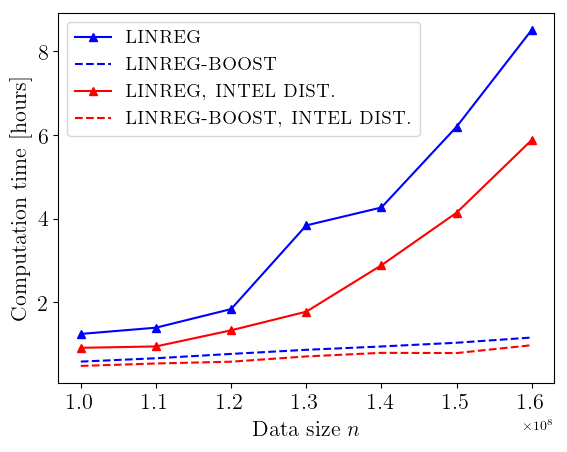}
        \caption{}
        \label{fig:lstsq}
	\end{subfigure}
    \begin{subfigure}[t]{0.74\textwidth}
		\centering	
		\includegraphics[width = 0.48\textwidth]{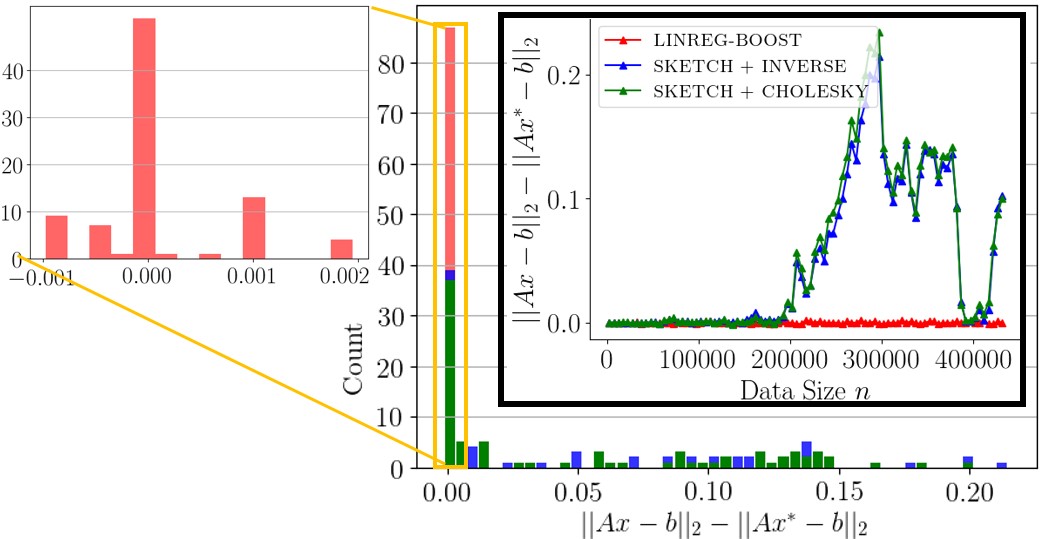}
		\includegraphics[width = 0.48\textwidth]{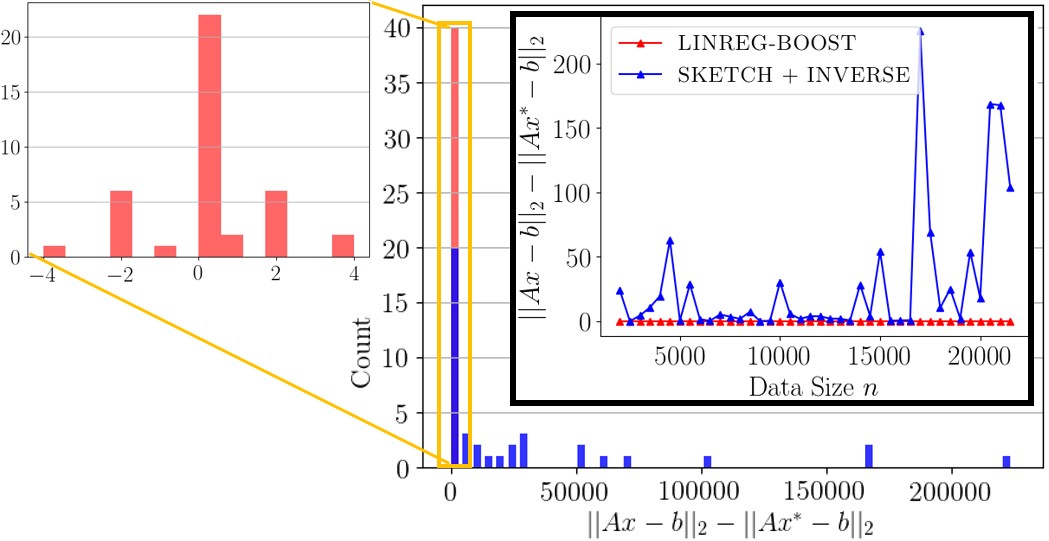}
        \caption{Accuracy comparison. (left): Dataset (i), (right): Dataset (ii). $x^* = \lstsqq(A,b)$. $x$ was computed using the methods specified in the legend; see Section~\ref{sec:discuss}.}
        \label{fig:accuracy}
	\end{subfigure}
    \caption{Experimental results; see Table~\ref{expstable}.}
    \label{fig:2}
\end{figure*}

\begin{figure*}[t!]
\centering
    \begin{subfigure}[t]{\s\textwidth}
		\centering
		\includegraphics[width = \textwidth]{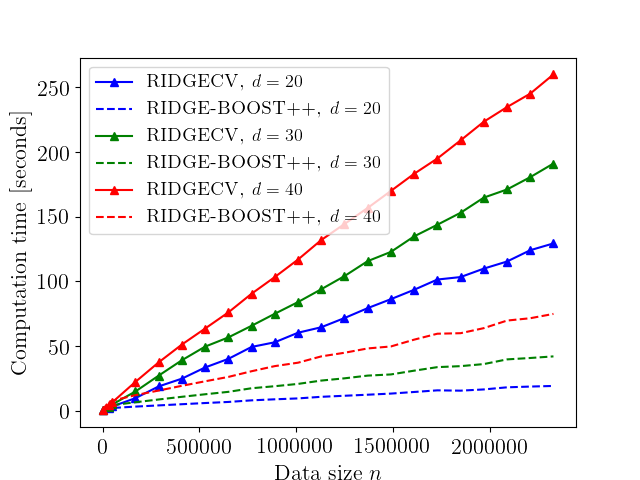}
        \caption{}
        \label{fig:ridgeD_new}
	\end{subfigure}
    \begin{subfigure}[t]{\s\textwidth}
		\centering
		\includegraphics[width = \textwidth]{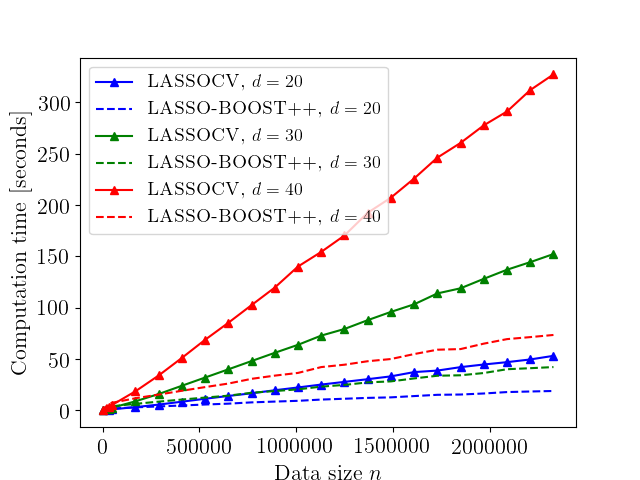}
        \caption{}
        \label{fig:lassoD_new}
	\end{subfigure}
    \begin{subfigure}[t]{\s\textwidth}
		\centering
		\includegraphics[width = \textwidth]{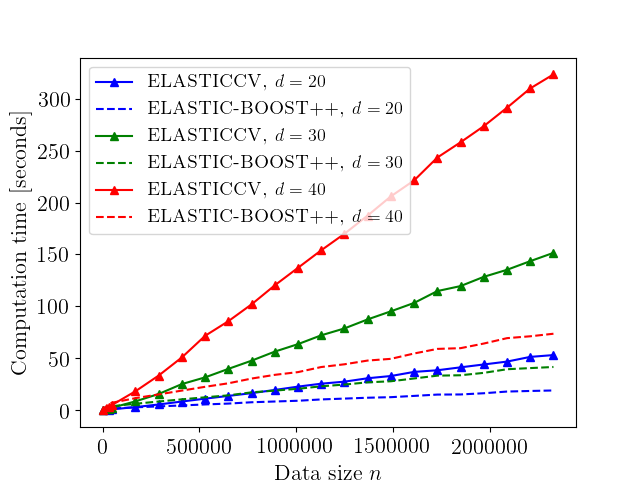}
        \caption{}
        \label{fig:elasticD_new}
	\end{subfigure}
    \begin{subfigure}[t]{\s\textwidth}
		\centering		
		\includegraphics[width = \textwidth]{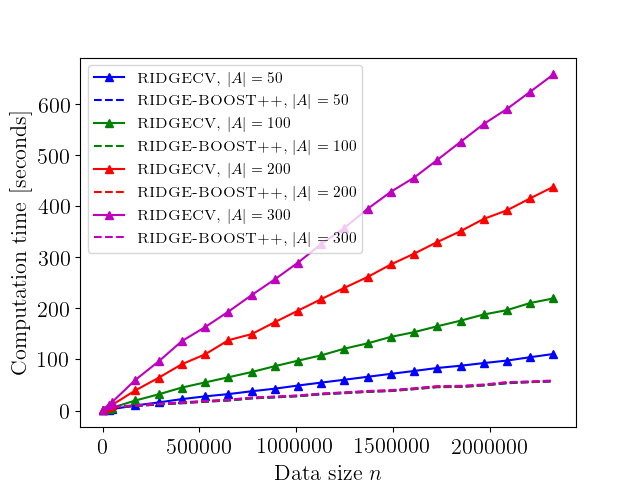}
        \caption{}
        \label{fig:ridge_alpha_new}
	\end{subfigure}
    \begin{subfigure}[t]{\s\textwidth}
		\centering		
		\includegraphics[width = \textwidth]{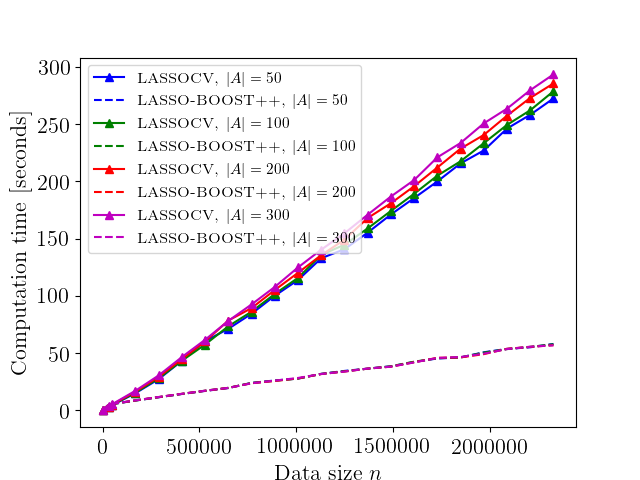}
        \caption{}
        \label{fig:lasso_alpha_new}
	\end{subfigure}
    \begin{subfigure}[t]{\s\textwidth}
		\centering		
		\includegraphics[width = \textwidth]{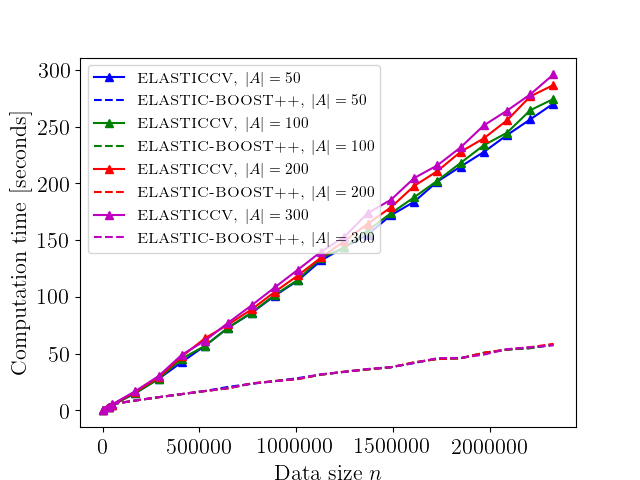}
        \caption{}
        \label{fig:elastic_alpha_new}
	\end{subfigure}
    \begin{subfigure}[t]{\s\textwidth}
		\centering		
		\includegraphics[width = \textwidth]{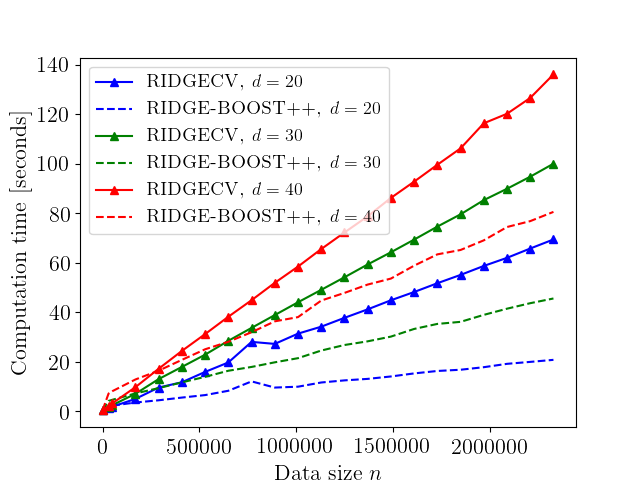}
        \caption{}
        \label{fig:ridge_intel_new}
	\end{subfigure}
    \begin{subfigure}[t]{\s\textwidth}
		\centering		
		\includegraphics[width = \textwidth]{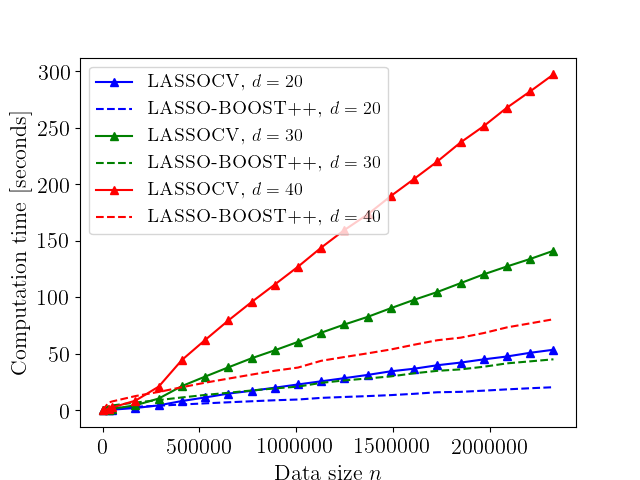}
        \caption{}
        \label{fig:lasso_intel_new}
	\end{subfigure}
    \begin{subfigure}[t]{\s\textwidth}
		\centering		
		\includegraphics[width = \textwidth]{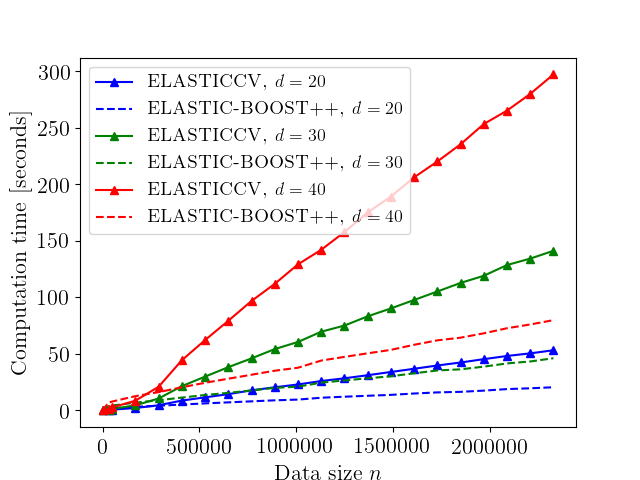}
        \caption{}
        \label{fig:elastic_intel_new}
	\end{subfigure}
    \begin{subfigure}[t]{\s\textwidth}
		\centering		
		\includegraphics[width = \textwidth]{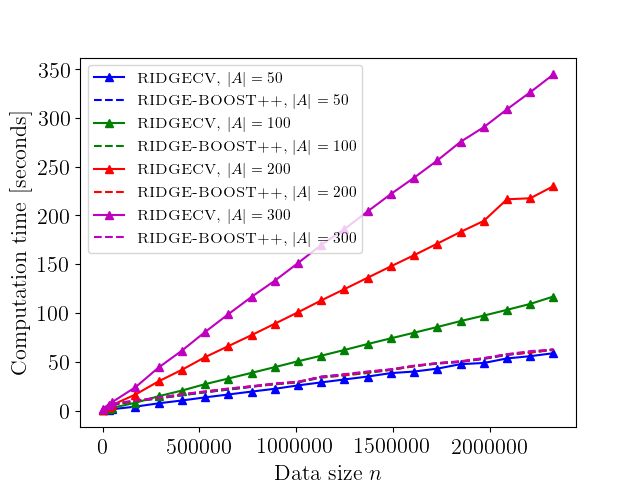}
        \caption{}
        \label{fig:ridge_alpha_intel_new}
	\end{subfigure}
    \begin{subfigure}[t]{\s\textwidth}
		\centering		
		\includegraphics[width = \textwidth]{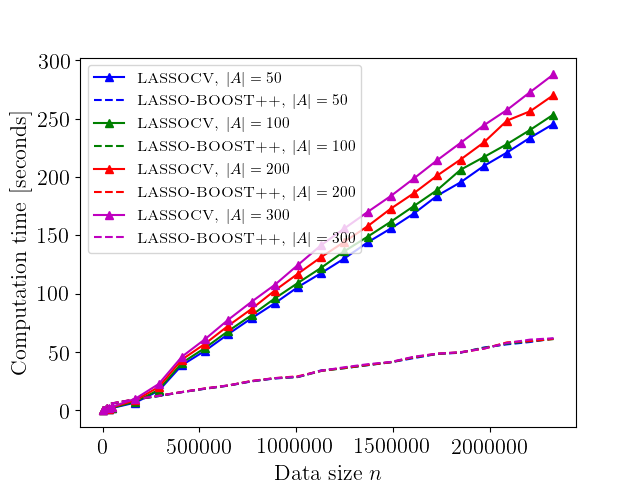}
        \caption{}
        \label{fig:lasso_alpha_intel_new}
	\end{subfigure}
    \begin{subfigure}[t]{\s\textwidth}
		\centering		
		\includegraphics[width = \textwidth]{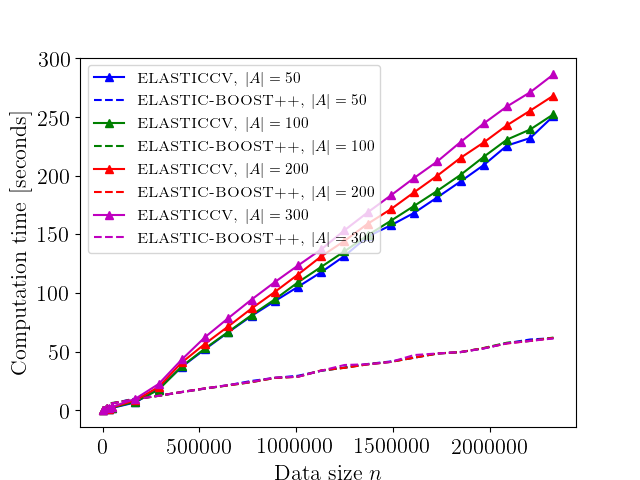}
        \caption{}
        \label{fig:elastic_alpha_intel_new}
	\end{subfigure}
    \begin{subfigure}[t]{\s\textwidth}
		\includegraphics[width = \textwidth]{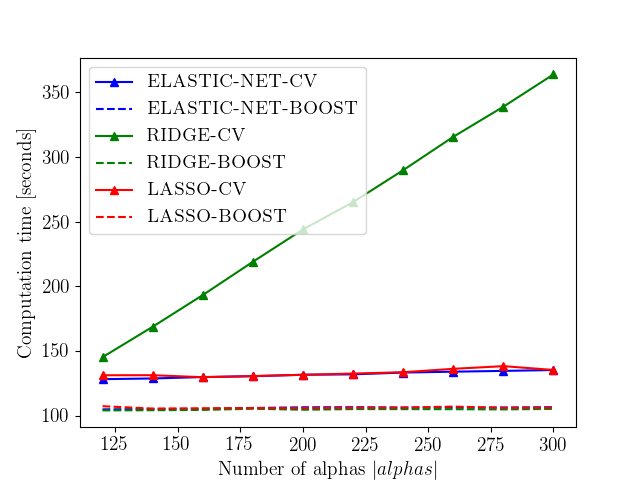}
        \caption{}
        \label{fig:realData1_new}
	\end{subfigure}
    \begin{subfigure}[t]{\s\textwidth}		
		\includegraphics[width = \textwidth]{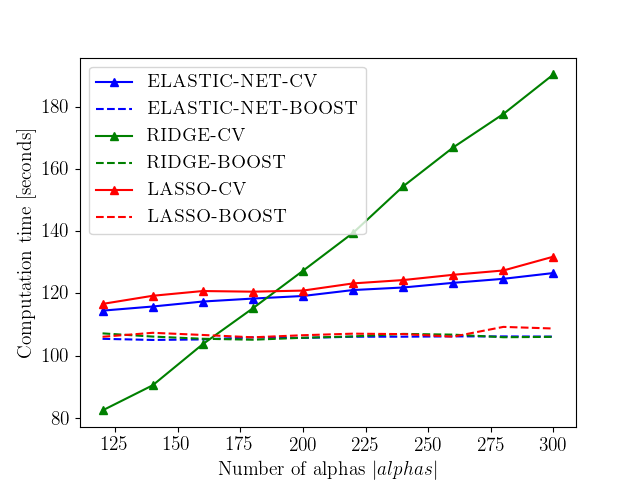}
        \caption{}
        \label{fig:realData1_intel_new}
	\end{subfigure}
    \begin{subfigure}[t]{0.5\textwidth}
		\centering
		\includegraphics[width = 0.5\textwidth]{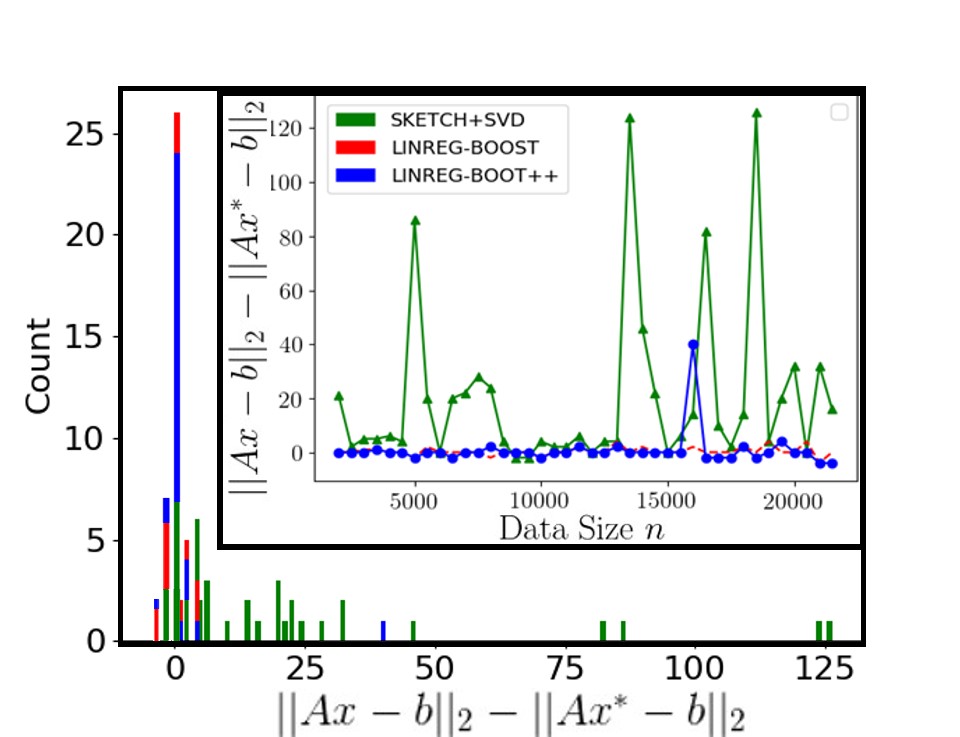}
        \caption{Accuracy comparison. (left): Dataset (i), (right): Dataset (ii). $x^* = \lstsqq(A,b)$. $x$ was computed using the methods specified in the legend; see Section~\ref{sec:discuss}.}
        \label{fig:accuracy_new}
	\end{subfigure}
    \caption{Experimental results; see Table~\ref{expstable}.}
    \label{fig:3}
\end{figure*}

\section{Conclusion and Future Work} \label{sec:conclude}
We presented a novel framework that combines sketches and coresets.
As an example application, we proved that the set from the Caratheodory Theorem can be computed in $O(nd)$ overall time for sufficiently large $n$ instead of the $O(n^2d^2)$ time as in the original theorem. We then generalized the result for a matrix $S$ whose rows are a weighted subset of the input matrix and their covariance matrix is the same. Our experimental results section shows how to significantly boost the numerical stability or running time of existing LMS solvers by applying them on $S$.
Future work includes: (a) applications of our framework to combine other sketch-coreset pairs e.g. as listed in~\cite{phillips2016coresets}, (b) Experiments for streaming/distributed/GPU data, and (c) generalization of our approach for more complicated models and applications, e.g., deep learning, decision trees, and many more.

\clearpage

\vskip 0.2in
\bibliography{main_ref}

\newpage
\appendix

\section{Slow Caratheodory Implementation} \label{sec:CaraAlg}

\setcounter{AlgoLine}{0}
\begin{algorithm}[th]
\caption{$\caras(P,u)$}\label{caraAlg}
\SetKwInOut{Input}{Input}
\SetKwInOut{Output}{Output}
\Input{A weighted set $(P,u)$ of $n$ points in $\REAL^{d}$.}
\Output{A Caratheodory set $(S,w)$ for $(P,u)$ in $O(n^2d^2)$ time.}

\If{$n\leq d+1$}
{\Return $(P,u)$}

Identify $P = \br{p_1,\cdots,p_n}$

\For {every $i\in\br{2,\cdots,n}$}
{
$a_i := p_i - p_1$
}

$A := ( a_2 \mid \cdots \mid a_{n})$ \tcp{$A\in \REAL^{d\times (n-1)}$}

Compute $v=(v_2,\cdots,v_{n})^T\neq 0$ such that $Av=0$. \label{l11}\\

$\displaystyle v_1 := -\sum_{i=2}^{n}  v_i$\label{u1}\\

$\displaystyle \alpha := \min\br{\frac{u_i}{v_i } \mid i\in \br{1,\cdots,n} \text{ and }  v_i> 0}$ \label{alp}\\

$w_i := u_i-\alpha v_i$ for every $i\in\br{1,\cdots,n}$. \label{Sdef}\\

$S := \br{p_i\mid w_i>0 \text{ and } i\in\br{1,\cdots,n}}$\label{Sdeff}\\
\If {$|S|>d+1$}{
$(S,w):= \caras(S,w)$ \label{eight}}
\Return $(S,w)$
\end{algorithm}

\textbf{Overview of Algorithm~\ref{caraAlg} and its correctness.}
The input is a weighted set $(P,u)$ whose points are denoted by $P=\br{p_1,\cdots,p_n}$. We assume $n>d+1$, otherwise $(S,w)=(P,u)$ is the desired coreset. Hence, the $n-1>d$ points $p_2-p_1$, $p_3-p_1,\ldots,p_n-p_1 \in \REAL^d$ must be linearly dependent. This implies that there are reals $v_2,\cdots,v_n$, which are not all zeros, such that
\begin{equation}\label{eq00}
\sum_{i=2}^n v_i (p_i-p_1)=0.
\end{equation}
These reals are computed in Line~\ref{l11} by solving system of linear equations. This step dominates the running time of the algorithm and takes $O(nd^2)$ time using e.g. SVD. The definition
\begin{equation}\label{uudef}
v_1=-\sum_{i=2}^n v_i
\end{equation}
in Line~\ref{u1}, guarantees that
\begin{equation} \label{negativev}
v_j < 0 \text{ for some }j\in [n],
\end{equation}
and that
\begin{equation}\label{eqq}
\begin{split}
\sum_{i=1}^n v_i p_i&=v_1p_1+\sum_{i=2}^n v_i p_i=\left(-\sum_{i=2}^n v_i\right) p_1+\sum_{i=2}^n v_i p_i
=\sum_{i=2}^n v_i (p_i-p_1)=0,
\end{split}
\end{equation}
where the second equality is by~\eqref{uudef}, and the last is by~\eqref{eq00}.
Hence, for every $\alpha\in\REAL$, the weighted mean of $P$ is
\begin{equation}\label{sum}
\sum_{i=1}^n u_ip_i=\sum_{i=1}^n u_ip_i-\alpha\sum_{i=1}^n v_i p_i=  \sum_{i=1}^n \left(u_i-\alpha v_i\right) p_i,
\end{equation}
where the first equality holds since $\sum_{i=1}^n v_i p_i = 0$ by~\eqref{eqq}.
The definition of $\alpha$ in Line~\ref{alp} guarantees that $\alpha v_{i^*}=u_{i^*}$ for some $i^*\in[n]$, and that $u_i-\alpha v_i\geq 0$ for every $i\in[n]$. Hence, the set $S$ that is defined in Line~\ref{Sdeff} contains at most $n-1$ points, and its set of weights $\br{u_i-\alpha v_i}$ is non-negative. Notice that if $\alpha = 0$, we have that $w_j = u_j > 0$ for some $j\in [n]$. Otherwise, if $\alpha > 0$, by~\eqref{negativev} there is $j\in [n]$ such that $v_j < 0$, which yields that $w_j = u_j -\alpha v_j > 0$. Hence, in both cases there is $w_j > 0$ for some $j\in [n]$. Therefore, $|S| \neq \emptyset$.

The sum of the positive weights is thus the total sum of weights,
\[
\sum_{p_i\in S}^n w_i=\sum_{i=1}^n (u_i-\alpha v_i)=\sum_{i=1}^nu_i-\alpha\cdot \sum_{i=1}^n v_i=1,
\]
where the last equality hold by~\eqref{uudef}, and since $u$ sums to $1$. This and~\eqref{sum} proves that $(S,w)$ is a Caratheodory set of size $n-1$ for $(P,u)$; see Definition~\ref{def:caraSet}. In Line~\ref{eight} we repeat this process recursively until there are at most $d+1$ points left in $S$. For $O(n)$ iterations, the overall time is thus $O(n^2d^2)$.

\section{Faster Caratheodory Set}

\begin{theorem}[Theorem~\ref{theorem:fastCara}] \label{theorem:fastCara_proof}
Let $(P,u)$ be a weighted set of $n$ points in $\REAL^d$ such that $\sum_{p\in P}u(p)=1$, and $k\geq d+2$ be an integer.
Let $(C,w)$ be the output of a call to $\caraf(P,u,k)$; See Algorithm~\ref{fastCara}.
Let $\tcara(k,d)$ be the time it takes to compute a Caratheodory Set for $k$ points in $\REAL^d$, as in Theorem~\ref{caraTheorem}. Then $(C,w)$ is a Caratheodory set of $(P,u)$ that is computed in time
\[
O\left(nd+\tcara(k,d)\cdot \frac{\log n}{\log (k/ d)}\right).
\]
\end{theorem}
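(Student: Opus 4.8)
The plan is to prove the two assertions --- that $(C,w)$ is a Caratheodory set and that it is produced within the stated time --- separately, both by induction on the recursion depth. For correctness I would first note that Line~\ref{Line:removeZero} alters neither the weighted mean $\sum_{p\in P}u(p)p$ nor the total weight $\sum_{p\in P}u(p)=1$, and that the base case in Lines~\ref{stoprule}--\ref{returnInput} is trivially correct. The whole argument then reduces to showing that the single ``contract once'' map $(P,u)\mapsto(C,w)$ of Lines~\ref{compPartition}--\ref{compW} preserves the subset property, the total weight, and the weighted mean; the recursive call $\caraf(C,w,k)$ then finishes by the inductive hypothesis, using transitivity (a Caratheodory set of $(C,w)$ automatically has the same mean and total weight as $(P,u)$, and is a subset of $C\subseteq P$, so it is a Caratheodory set of $(P,u)$). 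The size bound $|C|\le d+1$ needs no separate work: the recursion returns only through the base case.

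The heart of correctness is a short calculation. Because $\sum_i u'(\mu_i)=\sum_i\sum_{p\in P_i}u(p)=\sum_{p\in P}u(p)=1$, the slow routine $\caras$ (Theorem~\ref{caraTheorem}) applies legitimately to the $k$ means and returns $(\tilde\mu,\tilde w)$ with $|\tilde\mu|\le d+1$, with $\sum_{\mu_i\in\tilde\mu}\tilde w(\mu_i)=1$, and with $\sum_{\mu_i\in\tilde\mu}\tilde w(\mu_i)\mu_i=\sum_i u'(\mu_i)\mu_i$. Substituting the weight from Line~\ref{compW} I would verify
\[
\sum_{p\in C}w(p)=\sum_{\mu_i\in\tilde\mu}\sum_{p\in P_i}\frac{\tilde w(\mu_i)\,u(p)}{\sum_{q\in P_i}u(q)}=\sum_{\mu_i\in\tilde\mu}\tilde w(\mu_i)=1,
\]
and, using $\sum_{p\in P_i}u(p)\,p=\big(\sum_{q\in P_i}u(q)\big)\mu_i$ from the definition of $\mu_i$ in Line~\ref{compMui},
\[
\sum_{p\in C}w(p)\,p=\sum_{\mu_i\in\tilde\mu}\tilde w(\mu_i)\mu_i=\sum_i u'(\mu_i)\mu_i=\sum_{p\in P}u(p)\,p.
\]
Non-negativity of $w$ and $C\subseteq P$ are immediate: after Line~\ref{Line:removeZero} every retained point has positive weight, so each selected cluster has $\sum_{q\in P_i}u(q)>0$ and the division is well defined, while $C=\bigcup_{\mu_i\in\tilde\mu}P_i$ is a union of subsets of $P$.

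For the running time I would charge each recursive call $O(n'd+\tcara(k,d))$, where $n'$ is its current input size: forming the partition and the $k$ weighted means is $O(n'd)$, the single $\caras$ call is $\tcara(k,d)$, and the weight redistribution of Line~\ref{compW} is $O(n')$. The crux is counting the calls. Since $\caras$ selects at most $d+1$ of the $k$ means and each cluster holds at most $\ceil{n'/k}$ points, the next size obeys $n''\le (d+1)\ceil{n'/k}$; for $n'$ comfortably larger than $k$ this is shrinkage by a factor of $\Theta(d/k)$, so after $O\!\big(\tfrac{\log n}{\log(k/d)}\big)$ calls the size drops below $k$, after which every cluster is a single point and one more call reaches the base case. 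Summing the charges, the $\tcara(k,d)$ terms contribute $\tcara(k,d)\cdot O\!\big(\tfrac{\log n}{\log(k/d)}\big)$, and the $O(n'd)$ terms form a geometric series dominated by its first term $O(nd)$, yielding the claimed $O\!\big(nd+\tcara(k,d)\tfrac{\log n}{\log(k/d)}\big)$.

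The step I expect to be the main obstacle is making the shrinkage bound rigorous uniformly over all admissible $k\ge d+2$. The clean rate $\Theta(d/k)$ comes from $\ceil{n'/k}\approx n'/k$, but the ceiling spoils this near termination: in the window $k<n'<2k$ one has $\ceil{n'/k}=2$, and the coarse estimate $n''\le 2(d+1)$ need not beat $n'$ when $k$ is as small as $d+2$. There I would instead bound $|C|$ by the sum of the $d+1$ largest (balanced) cluster sizes and argue that removing the remaining $k-(d+1)\ge1$ nonempty clusters forces $n''\le n'-1$, so progress never stalls; then I would check that this slow final phase costs only $O(k)=O(d)$ extra calls, which is absorbed by the budget $O(\tfrac{\log n}{\log(k/d)})$ precisely because that quantity is $\Omega(d)$ when $k/d$ is as close to $1$ as $k=d+2$. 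Verifying that the geometric sum of the $O(n'd)$ charges really collapses to $O(nd)$ (rather than a larger $\tfrac{k}{k-d}\cdot nd$) for the relevant choices $k=\Theta(d)$ with $k/d$ bounded away from $1$, such as the tuning $k=ed$ discussed after the theorem, is the last point I would pin down.
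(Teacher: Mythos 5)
Your proposal is correct and follows essentially the same route as the paper's proof: the identical one-step calculation showing that the contraction of Lines 4--11 preserves the subset property, the total weight, and the weighted mean (via the Caratheodory guarantees for the $k$ cluster means), followed by induction through the recursive call, a recursion-depth count of $\log_{k/(d+1)} n$, and a geometric-series bound on the $O(n'd)$ per-call charges plus $\tcara(k,d)$ per call. The obstacle you flag at the end is real but is shared by the paper itself, whose proof silently assumes the sizes decay as $nd/2^{i-1}$ --- justified only when $k$ is at least roughly $2(d+1)$, not for every admissible $k\geq d+2$ --- so your more careful treatment of the near-critical regime $k/d\to 1$ and of the ceiling near termination is, if anything, a refinement of the published argument rather than a deviation from it.
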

\begin{proof}
We use the notation and variable names as defined in Algorithm~\ref{fastCara} from Section~\ref{sec:fastCara}.

First, at Line~\ref{Line:removeZero} we remove all the points in $P$ which have zero weight, since they do not contribute to the weighted sum. Therefore, we now assume that $u(p) >0$ for every $p\in P$ and that $|P| = n$.
Identify the input set $P =\br{p_1,\cdots,p_n}$ and the set $C$ that is computed at Line~\ref{compC} of Algorithm~\ref{fastCara} as $C = \br{c_1,\cdots,c_{|C|}}$.
We will first prove that the weighted set $(C,w)$ that is computed in Lines~\ref{compC}--\ref{compW} at an arbitrary iteration is a Caratheodory set for $(P,u)$, i.e., $C \subseteq P$, $\sum_{p\in P}u(p)\cdot p=\sum_{p\in C}w(p)\cdot p$, $\sum_{p\in P}u(p)=\sum_{p\in C}w(p)$ and $|C| \leq (d+1)\cdot\ceil{\frac{n}{k}}$.

Let $(\tilde{\mu},\tilde{w})$ be the pair that is computed during the execution the current iteration at Line~\ref{compSlowCara}.
By Theorem~\ref{caraTheorem} and Algorithm~\ref{caraAlg}, the pair $(\tilde{\mu},\tilde{w})$ is a Caratheodory set of the weighted set $(\br{\mu_1,\cdots,\mu_k},u')$. Hence,
\begin{equation}\label{eqProps}
\sum_{\mu_i \in \tilde{\mu}} \tilde{w}(\mu_i) = 1, \quad \sum_{\mu_i \in \tilde{\mu}} \tilde{w}(\mu_i) \mu_i = \sum_{i=1}^{k} u'(\mu_i) \cdot \mu_i, \quad \tilde{\mu}\subseteq \br{\mu_1,\cdots,\mu_k} \text{ and } \quad |\tilde{\mu}| \leq d+1.
\end{equation}

By the definition of $\mu_i$, for every $i \in \br{1,\cdots,k}$
\begin{equation}\label{eqDefMui}
\sum_{i=1}^{k} u'(\mu_i)\cdot \mu_i = \sum_{i=1}^{k} u'(\mu_i)\cdot \left(\frac{1}{u'(\mu_i)} \cdot \sum_{p\in P_i}u(p)\cdot p\right) = \sum_{i=1}^{k} \sum_{p\in P_i} u(p)p = \sum_{p\in P} u(p) p.
\end{equation}
By Line~\ref{compC} we have that
\begin{equation} \label{eq:CisSubset}
C \subseteq P.
\end{equation}
We also have that
\begin{equation} \label{sameWeightedSum}
\begin{split}
\sum_{p\in C} w(p)p &
= \sum_{\mu_i \in \tilde{\mu}} \sum_{p\in P_i}\frac{\tilde{w}(\mu_i)u(p)}{u'(\mu_i)} \cdot  p
= \sum_{\mu_i \in \tilde{\mu}}\tilde{w}(\mu_i) \sum_{p\in P_i} \frac{u(p)}{u'(\mu_i)} p
= \sum_{\mu_i \in \tilde{\mu}} \tilde{w}(\mu_i) \mu_i\\
& = \sum_{i=1}^{k} u'(\mu_i)\cdot \mu_i
= \sum_{p\in P} u(p) p,
\end{split}
\end{equation}
where the first equality holds by the definitions of $C$ and $w$, the third equality holds by the definition of $\mu_i$ at Line~\ref{compMui}, the fourth equality is by~\eqref{eqProps}, and the last equality is by~\eqref{eqDefMui}.

The new sum of weights is equal to
\begin{equation}\label{eqSumNewWeights}
\sum_{p\in C} w(p) = \sum_{\mu_i \in \tilde{\mu}} \sum_{p\in P_i} \frac{\tilde{w}(\mu_i)u(p)}{u'(\mu_i)} = \sum_{\mu_i \in \tilde{\mu}} \frac{\tilde{w}(\mu_i)}{u'(\mu_i)} \cdot \sum_{p\in P_i} u(p) = \sum_{\mu_i \in \tilde{\mu}} \frac{\tilde{w}(\mu_i)}{u'(\mu_i)} \cdot u'(\mu_i) = \sum_{\mu_i \in \tilde{\mu}} \tilde{w}(\mu_i) = 1,
\end{equation}
where the last equality is by~\eqref{eqProps}.

Combining~\eqref{eq:CisSubset},~\eqref{sameWeightedSum} and~\eqref{eqSumNewWeights} yields that the weighted $(C,w)$ computed before the recursive call at Line~\ref{recursiveCall} of the algorithm is a Caratheodory set for the weighted input set $(P,u)$.
Since at each iteration we either return such a Caratheodory set $(C,w)$ at Line~\ref{recursiveCall} or return the input weighted set $(P,u)$ itself at Line~\ref{returnInput}, by induction we conclude that the output weighted set of a call to $\caraf(P,u,k)$ is a Caratheodory set for the original input $(P,u)$.

By~\eqref{eqProps} we have that $C$ contains at most $(d+1)$ clusters from $P$ and at most $|C| \leq (d+1)\cdot\ceil{\frac{n}{k}} $ points. Hence, there are at most $\log_{\frac{k}{d+1}}(n)$ recursive calls before the stopping condition in line~\ref{stoprule} is satisfied. The time complexity of each iteration is $n'+\tcara(k,d)$ where $n' = |P|\cdot d$ is the number of points in the current iteration. Thus the total running time of Algorithm~\ref{fastCara} is
\[
\sum_{i=1}^{\log_{\frac{k}{d+1}}(n)} \left(\frac{nd}{2^{i-1}}+ \tcara(k,d)\right) \leq 2nd + \log_{\frac{k}{d+1}}(n) \cdot \tcara(k,d) \in O\left(nd+ \frac{\log{n}}{\log(k/(d+1))} \cdot \tcara(k,d)\right).
\]
\end{proof}

\begin{theorem} [Theorem~\ref{theorem:cov}] \label{theorem:cov_proof}
Let $A \in \REAL^{n\times d}$ be a matrix, and $k \geq d^2+2$ be an integer. Let $S \in \REAL^{(d^2+1)\times d}$ be the output of a call to $\cova(A,k)$; see Algorithm~\ref{covAlg}. Let $\tcara(k,d)$ be the computation time of \caras{} given $k$ point in $\REAL^{d^2}$. Then $S$ satisfies that $A^TA = S^TS$. Furthermore, $S$ can be computed in $O(nd^2+\tcara(k,d^2)\cdot\frac{\log{n}}{\log{(k/d^2))}})$ time.
\end{theorem}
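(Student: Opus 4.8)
The plan is to reduce the statement to an application of Theorem~\ref{theorem:fastCara} with the ambient dimension taken to be $d^2$ rather than $d$, together with a short bookkeeping argument showing that the vectorization map used in Algorithm~\ref{covAlg} turns preservation of a weighted mean in $\REAL^{d^2}$ into preservation of a weighted sum of rank-one matrices in $\REAL^{d\times d}$.

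First I would record the elementary identity $A^TA=\sum_{i=1}^n a_ia_i^T$, where $a_i^T$ is the $i$th row of $A$. The loop in Algorithm~\ref{covAlg} sets $p_i\in\REAL^{d^2}$ to be the flattening of $a_ia_i^T$ under a single fixed ordering of the $d^2$ entries, and assigns $u(p_i)=1/n$. Since there are $n$ such points, $\sum_{p\in P}u(p)=1$, so $(P,u)$ is a legal input to $\caraf$. I would then invoke Theorem~\ref{theorem:fastCara} (with $d$ replaced by $d^2$ and using $k\ge d^2+2$) to conclude that the returned $(C,w)$ is a Caratheodory set of $(P,u)$: that is, $C\subseteq P$, $|C|\le d^2+1$, $\sum_{p\in C}w(p)=1$, and $\sum_{p\in C}w(p)\cdot p=\sum_{p\in P}u(p)\cdot p$ (if one insists on exactly $d^2+1$ rows one pads with zero-weight rows, which changes nothing below).

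The key step is to translate the last equality from $\REAL^{d^2}$ back to $\REAL^{d\times d}$. Let $\tomat:\REAL^{d^2}\to\REAL^{d\times d}$ denote the inverse of the flattening, so that $\tomat(p_i)=a_ia_i^T$. Because $\tomat$ merely reindexes coordinates it is linear, so applying it to both sides of the preserved weighted mean gives $\sum_{p_i\in C}w(p_i)\,a_ia_i^T=\frac1n\sum_{i=1}^n a_ia_i^T=\frac1n A^TA$, and multiplying through by $n$ yields $\sum_{p_i\in C} n\,w(p_i)\,a_ia_i^T=A^TA$. Finally, since the $i$th row of $S$ is $\sqrt{n\,w(p_i)}\,a_i^T$ for $p_i\in C$, I would compute $S^TS=\sum_{p_i\in C}\bigl(\sqrt{n\,w(p_i)}\,a_i\bigr)\bigl(\sqrt{n\,w(p_i)}\,a_i^T\bigr)=\sum_{p_i\in C} n\,w(p_i)\,a_ia_i^T=A^TA$, establishing the first claim.

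For the running time, forming the $n$ points $p_i$ costs $O(nd^2)$ (each outer product $a_ia_i^T$ is built in $O(d^2)$ time), the call $\caraf(P,u,k)$ on $n$ points in $\REAL^{d^2}$ costs $O\!\left(nd^2+\tcara(k,d^2)\cdot\frac{\log n}{\log(k/d^2)}\right)$ by Theorem~\ref{theorem:fastCara}, and assembling the $(d^2+1)\times d$ matrix $S$ costs $O(d^3)$, which is absorbed by the $O(nd^2)$ term since $n\ge k>d^2>d$. Summing these gives the stated bound. The only genuinely delicate point is the bookkeeping in the key step: I must ensure the flattening defining the $p_i$ uses the same coordinate ordering for every $i$, so that $\tomat$ is one well-defined linear map and the coordinatewise equality in $\REAL^{d^2}$ really is the matrix identity $\sum_{p_i\in C}w(p_i)\,a_ia_i^T=\frac1n A^TA$; everything else is a direct substitution into Theorem~\ref{theorem:fastCara}.
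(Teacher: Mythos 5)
Your proposal is correct and follows essentially the same route as the paper's proof: invoke Theorem~\ref{theorem:fastCara} in ambient dimension $d^2$ on the flattened outer products with uniform weights $1/n$, translate the preserved weighted mean back to the matrix identity $\sum_{p_i\in C} n\,w(p_i)\,a_ia_i^T = A^TA = S^TS$, and account for the $O(nd^2)$ point-construction cost plus the $\caraf$ call, with the $O(d^3)$ assembly of $S$ absorbed. Your explicit remarks about the fixed coordinate ordering making the un-flattening a single linear map, and about padding to exactly $d^2+1$ rows, are small careful refinements of details the paper treats implicitly, not a different argument.
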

\begin{proof}
We use the notation and variable names as defined in Algorithm~\ref{covAlg} from Section~\ref{sec:fastCara}.

Since $(C,w)$ at Line~\ref{compCU} of Algorithm~\ref{covAlg} is the output of a call to $\caraf(P,u,k)$, by Theorem~\ref{theorem:fastCara} we have that: (i) the weighted means of $(C,w)$ and $(P,u)$ are equal, i.e.,
\begin{equation} \label{eqSameWeightedMean}
\sum_{p\in P}u(p)\cdot p=\sum_{p\in C}w(p)\cdot p,
\end{equation}
(ii) $|C| \leq d^2+1$ since $P \subseteq \REAL^{(d^2)}$, and (iii) $C$ is computed in $O(nd^2+\log_{\frac{k}{d^2+1}}(n)\cdot \tcara(k,d^2))$ time.

Combining~\eqref{eqSameWeightedMean} with the fact that $p_i$ is simply the concatenation of the entries of $a_ia_i^T$, we have that
\begin{equation}\label{eqSameCov}
\sum_{p_i \in P} u(p_i) a_ia_i^T = \sum_{p_i \in C} w(p_i)\cdot a_ia_i^T.
\end{equation}
By the definition of $S$ in Line~\ref{compS}, we have that
\begin{equation}\label{eqSTS}
S^TS = \sum_{p_i \in C} (\sqrt{n\cdot w(p_i)}\cdot a_i)(\sqrt{n\cdot w(p_i)}\cdot a_i)^T = n\cdot \sum_{p_i \in C} w(p_i)\cdot a_ia_i^T.
\end{equation}
We also have that
\begin{equation}\label{eqATA}
A^TA = \sum_{i=1}^n a_ia_i^T = n \cdot \sum_{p_i \in P}(1/n) a_ia_i^T = n\cdot \sum_{p_i \in P} u(p_i) a_ia_i^T,
\end{equation}
where the second derivation holds since $u \equiv 1/n$. Theorem~\ref{theorem:cov} now holds by combining~\eqref{eqSameCov},~\eqref{eqSTS} and~\eqref{eqATA} as
\[
S^TS = n\cdot \sum_{p_i \in C} w(p_i)\cdot a_ia_i^T = n \cdot \sum_{p_i \in P} u(p_i) a_ia_i^T  = A^TA.
\]

\noindent\textbf{Running time: } Computing the weighted set $(P,u)$ at Lines~\ref{forPcreate}--~\ref{forPcreateend} takes $O(nd^2)$ time, since it takes $O(d^2)$ time to compute each of the $n$ points in $P$.

By Theorem~\ref{theorem:fastCara}, Line~\ref{compCU} takes $O(nd^2+\tcara(k,d^2)\cdot\frac{\log{n}}{\log{(k/d^2)}})$ to compute a \caras{} for the the weighted set $(P,u)$, and finally Line~\ref{compS} takes $O(d^3)$ for building the matrix $S$. Hence, the overall running time of Algorithm~\ref{covAlg} is $O(nd^2+\tcara(k,d^2)\cdot\frac{\log{n}}{\log{(k/d^2)}})$.
\end{proof}

\section{Sparsified Caratheodory}
\begin{theorem}\label{theorem:cordCara2}
Let $(P,u)$ be a weighted set of $n$ points in $\REAL^d$ such that $\sum_{p\in P}u(p)=1$, and $k_1,k_2,d'$ be three integers such that $k_2 \in \br{1,\cdots,d}$, $d' = \ceil{\frac{d}{k_2}}$, and $k_1 \in \br{d'+2,\cdots,n}$.
Let $(C,w)$ be the output of a call to $\caracord(P,u,k_1,k_2)$; See Algorithm~\ref{cordAlg}.
Let $\tcara(k_1,d')$ be the time it takes to compute a Caratheodory Set for $k_1$ points in $\REAL^{d'}$, as in Theorem~\ref{caraTheorem}. Then $(C,w)$ is a $d'$-Sparse Caratheodory set of $(P,u)$ that is computed in time $O\left(nd+\tcara(k_1,d')\cdot \frac{k_2\log n}{\log (k_1/d')}\right)$.
\end{theorem}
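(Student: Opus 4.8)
The plan is to show that the weighted set $(C,w)$ returned by $\caracord$ satisfies the four requirements of Definition~\ref{def:cordCara} for the sparsity parameter $d'=\ceil{d/k_2}$, and then to bound the running time by summing the costs of the $k_2$ calls to $\caraf$. The guiding idea is that the algorithm treats the $d$ coordinates as $k_2$ disjoint index blocks $I_1,\dots,I_{k_2}$ (Line~\ref{Line:partitionCord}) and runs $\caraf$ independently on the projection of $P$ onto each block. Because these blocks partition $\br{1,\dots,d}$, the per-block guarantees of Theorem~\ref{theorem:fastCara} should glue into a single global statement about $(C,w)$. I would first record that, by the padding in Line~\ref{Line:padVecs}, every $\hat c\in\hat C^j$ is obtained from some $p\in P$ by zeroing out every coordinate outside $I_j$, and $|I_j|\le d'$; this immediately yields property (i), since $\hat c$ has at most $d'$ nonzero entries.

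The heart of the proof is property (iii), preservation of the weighted mean. For each block $j$, let $D_{I_j}\in\br{0,1}^{d\times d}$ be the diagonal projection onto the coordinates in $I_j$, so that the padding map of Line~\ref{Line:padVecs} sends $p^j$ to $D_{I_j}p$. Applying Theorem~\ref{theorem:fastCara} to the call $(C^j,w^j):=\caraf(P^j,u^j,k_1)$ in Line~\ref{Line:callBoost}, together with the definition of $u^j$ in Line~\ref{Line:compuj}, gives
\[
\sum_{c\in C^j}w^j(c)\cdot c=\sum_{p\in P}u^j(p^j)\cdot p^j=\sum_{p\in P}u(p)\cdot p^j .
\]
Padding both sides back into $\REAL^d$ and using the weights assigned in Line~\ref{Line:compw}, this reads $\sum_{\hat c\in\hat C^j}w(\hat c)\,\hat c=D_{I_j}\sum_{p\in P}u(p)p$. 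Summing over $j\in[k_2]$ and using that the blocks partition the coordinates, so that $\sum_{j=1}^{k_2}D_{I_j}=I_d$, yields
\[
\sum_{c\in C}w(c)\cdot c=\sum_{j=1}^{k_2}D_{I_j}\sum_{p\in P}u(p)p=\sum_{p\in P}u(p)\cdot p ,
\]
as required. I expect this coordinate-decoupling step to be the main obstacle, in the sense that it is the only place where correctness genuinely relies on the $I_j$ forming a partition rather than an arbitrary cover; the partition-of-identity $\sum_j D_{I_j}=I_d$ is exactly what makes the ``dimension-streaming'' approach lossless.

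Properties (ii) and (iv) then follow by summation over blocks. Each call returns a Caratheodory set for points in $\REAL^{|I_j|}$ with $|I_j|\le d'$, so by Theorem~\ref{theorem:fastCara} we have $|C^j|\le d'+1$ and $\sum_{c\in C^j}w^j(c)=1$; hence $|C|=\sum_j|\hat C^j|\le k_2(d'+1)$ and $\sum_{c\in C}w(c)=k_2$, matching the size and weight bounds of Definition~\ref{def:cordCara} (which coincide with the stated $\ceil{d/d'}(d'+1)$ and $\ceil{d/d'}$ in the balanced case $k_2\mid d$).

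For the running time, building each projected set $P^j$ and its weights in Lines~\ref{Line:compPj}--\ref{Line:compuj} costs $O(n\,|I_j|)$, which sums to $O(nd)$ across all blocks, and the same bound covers the padding steps. The dominant term is the $k_2$ invocations of $\caraf$ on $n$ points in dimension $d'$, each costing $O\!\left(nd'+\tcara(k_1,d')\cdot\frac{\log n}{\log(k_1/d')}\right)$ by Theorem~\ref{theorem:fastCara}. Summing over the $k_2$ blocks gives $O\!\left(nd'k_2+\tcara(k_1,d')\cdot\frac{k_2\log n}{\log(k_1/d')}\right)$, and since $d'k_2=\ceil{d/k_2}\,k_2<d+k_2\le 2d$, the first term is absorbed into $O(nd)$, producing the claimed bound $O\!\left(nd+\tcara(k_1,d')\cdot\frac{k_2\log n}{\log(k_1/d')}\right)$.
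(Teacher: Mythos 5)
Your proof is correct and takes essentially the same route as the paper's: apply Theorem~\ref{theorem:fastCara} to each coordinate block $(P^j,u^j)$, pad the resulting Caratheodory sets back into $\REAL^d$, and sum over the $k_2$ blocks to obtain the mean, size, weight, and running-time guarantees, with the $O(nd'k_2)\subseteq O(nd)$ absorption handling the per-block costs. Your projection-matrix identity $\sum_{j}D_{I_j}=I_d$ is just a tidier phrasing of the paper's gluing step, and your parenthetical caveat about the unbalanced case is in fact more careful than the paper, which asserts $k_2(d'+1)\le\ceil{d/d'}(d'+1)$ without noting that $k_2$ can exceed $\ceil{d/d'}$ when $k_2\nmid d$.
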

\begin{proof}
We consider the variables from Algorithm~\ref{cordAlg}. 
At Line~\ref{Line:partitionCord} we define a partition $I_1,\cdots,I_{k_2}$ of the coordinates (indices) into $k_2$ (almost) equal sized subsets, each of size at most $\ceil{\frac{d}{k_2}}$. 

Put $j\in [k_2]$. At Line~\ref{Line:compPj}, we compute the set $P^j$ that contains the entire input points, where each point is restricted to only a subset of its coordinates whose indices are in $I_j$. Each new point $p^j \in P^j \subseteq \REAL^{|I_j|}$, that contains a subset of the coordinates of some original point $p \in P$, is assigned a weight $u^j(p^j)$ that is equal to the original weight $u(p)$ of $p$ at Line~\ref{Line:compuj}.
In other words, the weighted set $(P^j,u^j)$ is basically a restriction of the input $(P,u)$ to a subset of the coordinates.

By Theorem~\ref{theorem:fastCara}, the weighted set $(C^j,w^j) := \caraf(P^j,u^j,k_1)$ computed at Line~\ref{Line:callBoost} via a call to Algorithm~\ref{fastCara} is thus a Caratheodory set of $(P^j,u^j)$, where $|C^j| \leq |I_j|+1 = d'+1$. Therefore,
\begin{equation} \label{eq:sumWc}
\sum_{c \in \hat{C}^j} w(c) = \sum_{c \in C^j} w^j(c) = \sum_{p\in P^j}u^j(p) = \sum_{p\in P}u(p) = 1,
\end{equation}
and
\begin{equation} \label{eq:equalSumCords}
\sum_{c \in C^j} w^j(c) c = \sum_{p\in P^j}u^j(p) p.
\end{equation}

Then, at Lines~\ref{Line:padVecs}--\ref{Line:compw}, we plug every $c\in C^j$ into a $d$-dimensional zeros vector $\hat{c}$ in the coordinates contained in $I_j$, and assign this new vector the same weight $w(\hat{c}) = w^j(c)$ of $c$. Combining that the weighted sum of $(P^j,u^j)$, which is a subset of the coordinates of $P$, is equal to the weighted sum of $(C^j,w^j)$ (by~\eqref{eq:equalSumCords}) and the definition of $\hat{C}^j$ to be the set of padded vectors in $C^j$, we obtain that
\begin{equation} \label{eq:mainSum}
\sum_{j\in [k_2]} \sum_{c \in \hat{C}^j}w(c)c = \sum_{p \in P} u(p) p.
\end{equation}

The output weighted set $(C,w)$ is then simply the union over all the padded vectors in $\hat{C}^1,\cdots,\hat{C}^{k_2}$ and their weights. Therefore,
\[
\sum_{c\in C}w(c) = \sum_{j \in [k_2]} \sum_{c \in \hat{C}^j}w(c) = \sum_{j \in [k_2]} 1 = k_2,
\]
where the second derivation is by~\eqref{eq:sumWc},
\[
\sum_{c\in C}w(c)c = \sum_{j \in [k_2]} \sum_{c \in \hat{C}^j}w(c)c =\sum_{p \in P} u(p) P,
\]
where the second equality is by~\eqref{eq:mainSum}, and
\[
|C| = \sum_{j \in [k_2]} |C^j| \leq \sum_{j \in [k_2]} (d'+1) = k_2 \cdot (d' + 1) \leq \ceil{\frac{d}{d'}}(d'+1).
\]
Furthermore, each vector in $C$ is a padded vector of $C^j \subseteq P^j$ for some $j \in [k_2]$, i.e., for every $c\in C$ there is $p\in P$ such that $c$ is a subset of the coordinates of $p$. Hence, $(C,w)$ is a $\ceil{\frac{d}{k_2}}$-Sparse Caratheodory set of $(P,u)$.

The computation time of $(C,w)$ is dominated by the loop at Line~\ref{line:dimIter}. Each iteration among the $k_2$ iterations of the loop is dominated by the call $\caraf(P^j,u^j,k_1)$ at Line~\ref{Line:callBoost}. By Theorem~\ref{theorem:fastCara}, since $P_j$ is of dimension at most $d' = \ceil{d/k_2}$ by its construction, this call takes $O\left(nd' + t(k_1,d')\cdot \frac{\log{n}}{\log{k_1/d'}}\right)$ time. The total running time is therefore $O\left(nd+\tcara(k_1,d')\cdot \frac{k_2\log n}{\log (k_1/d')}\right)$ as required.
\end{proof}

\section{Sparsified Caratheodory Matrix}

\begin{theorem} \label{theorem:cordCov2}
Let $A \in \REAL^{n\times d}$ be a matrix, and $k_1,k_2,d'$ be three integers such that $k_2 \in \br{1,\cdots,d^2}$, $d' = \ceil{\frac{d^2}{k_2}}$, and $k_1 \in \br{d'+2,\cdots,n}$. Let $S \in \REAL^{d\times d}$ be the output of a call to $\cordCova(A,k_1,k_2)$; see Algorithm~\ref{cordCovAlg}. Let $\tcara(k_1,d')$ be the time it takes to compute a Caratheodory Set for $k_1$ points in $\REAL^{d'}$, as in Theorem~\ref{caraTheorem}. Then $A^TA = S^TS$. Furthermore, $S$ is computed in $O\left(nd^2+\tcara(k_1,d')\cdot \frac{k_2\log n}{\log (k_1/d')}\right)$ time.
\end{theorem}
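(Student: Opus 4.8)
The plan is to mirror the two-stage argument of Theorem~\ref{theorem:cov_proof}, replacing the exact Caratheodory guarantee by the \emph{sparse} one from Theorem~\ref{theorem:cordCara} and then verifying that the final SVD post-processing does not disturb the covariance. First I would unwind the reduction to a weighted-sum identity. The algorithm builds the weighted set $(P,u)$ whose $i$th point $p_i\in\REAL^{d^2}$ is the column-stacking of $a_ia_i^T$ with uniform weight $u(p_i)=1/n$. By Theorem~\ref{theorem:cordCara}, the pair $(C,w)=\caracord(P,u,k_1,k_2)$ is a $d'$-Sparse Caratheodory set of $(P,u)$, so in particular its weighted mean agrees with that of $(P,u)$, i.e. $\sum_{c\in C}w(c)\,c=\sum_{p\in P}u(p)\,p$. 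Multiplying by $n$,
\[
c'=n\sum_{c\in C}w(c)\,c=n\sum_{i=1}^n \tfrac1n\,p_i=\sum_{i=1}^n p_i .
\]
Since column-stacking is linear and each $p_i$ stacks $a_ia_i^T$, the vector $c'$ is exactly the column-stacking of $\sum_{i=1}^n a_ia_i^T=A^TA$, so after the inverse stacking the reshaped matrix satisfies $C'=A^TA$ exactly.

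The second, and genuinely delicate, step is to check that the SVD in Line~\ref{Line:compS} preserves this identity, i.e. that $S^TS=C'$. Here I would use that $C'=A^TA$ is symmetric and positive semidefinite. Writing the thin SVD $C'=UDV^T$ and $S=\sqrt D\,V^T$, one gets $S^TS=V D V^T$, so it remains to show $VDV^T=C'$. I would argue this via uniqueness of the PSD square root: since $C'$ is symmetric, $C'^TC'=C'^2$, while from the SVD $C'^TC'=VD^2V^T$; both $C'$ and $VDV^T$ are symmetric positive semidefinite and square to the common matrix $C'^2=VD^2V^T$, so by uniqueness of the positive semidefinite square root they coincide, giving $S^TS=VDV^T=C'=A^TA$. (Equivalently, for a symmetric PSD matrix the SVD coincides with the eigendecomposition, so one may take $U=V$ and read off $C'=VDV^T$ directly.) This is the step I expect to be the main obstacle, since it is the only place the argument differs from Theorem~\ref{theorem:cov_proof} and one must be careful not to assume $U=V$ for free for a general thin SVD.

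For the running time I would account for each stage. Building $(P,u)$ costs $O(nd^2)$, as each of the $n$ points has $d^2$ entries. The dominant term is the call $\caracord(P,u,k_1,k_2)$ on $n$ points in $\REAL^{d^2}$; invoking Theorem~\ref{theorem:cordCara} with ambient dimension $d^2$ (so that $d'=\ceil{d^2/k_2}$) bounds it by $O\!\left(nd^2+\tcara(k_1,d')\cdot\frac{k_2\log n}{\log(k_1/d')}\right)$. Forming $c'$ in Line~\ref{Line:compctag} costs $O(|C|\,d^2)$, and since $|C|\in O(d^2+k_2)$ this is absorbed; the reshaping in Line~\ref{Line:reshapectag} is $O(d^2)$, and the thin SVD of the $d\times d$ matrix $C'$ in Line~\ref{Line:compS} costs $O(d^3)$, both dominated by the terms already present. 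Summing yields the claimed $O\!\left(nd^2+\tcara(k_1,d')\cdot\frac{k_2\log n}{\log(k_1/d')}\right)$ bound.
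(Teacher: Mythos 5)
Your proposal is correct and follows essentially the same argument as the paper's proof: the same reduction to the weighted-mean identity via Theorem~\ref{theorem:cordCara}, the same reshaping step giving $C'=A^TA$, and the same running-time accounting. The only local difference is at the SVD step, where the paper simply observes $U=V$ for the thin SVD of the symmetric matrix $C'$, while your uniqueness-of-the-PSD-square-root argument is in fact slightly more careful, since symmetry alone does not force $U=V$ (a negative eigenvalue would break it) and positive semidefiniteness of $C'=A^TA$ is exactly what is needed.
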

\begin{proof}
We consider the variables from Algorithm~\ref{cordCovAlg}.

First, note that the covariance matrix is equal to $A^TA =\sum_{i=1}^n a_ia_i^T$. 
We wish to maintain this sum using a set of only $d$ vectors.
To this end, the for loop at Line~\ref{forPcreate} computes and flattens the $d\times d$ matrix $a_ia_i^T \in \REAL^{d\times d}$ for every $i \in [n]$ into a vector $p_i \in \REAL^{t^2}$, and assigns it a weight of $1/n$.

The call $\caracord(P,u,k_1,k_2)$ at Line~\ref{compCU} returns a weighted set $(C,w)$ that is a $\ceil{d^2/k_2}$-Sparse Caratheodory set for $(P,u)$; see Theorem~\ref{theorem:cordCara}. 
Therefore,
\[
\sum_{c \in C}w(c)c = \sum_{i=1}^n u(p_i) p_i = \frac{1}{n}\sum_{i=1}^np_i,
\]
and $|C| \in O(d^2+k_2)$.
To this end, $c'$ which is computed at Line~\ref{Line:compctag} satisfies that
\[
c' = n \sum_{c \in C}w(c)c = \sum_{i=1}^n p_i.
\]
Combining that $C' \in \REAL^{d\times d}$ at Line~\ref{Line:reshapectag} is a reshaped form of $c'$, with the similar fact that $a_ia_i^T \in \REAL^{d\times d}$ is a reshaped form of $p_i$, we have that
\[
C' = \sum_{i=1}^n a_ia_i^T = A^TA.
\]
Let $C' = UDV^T$ be the thin Singular Value Decomposition of $C'$. Observe that $U = V$ since $C' = A^TA$ is a symmetric matrix.
By setting $S = \sqrt{D}V^T \in \REAL^{d\times d}$ at Line~\ref{Line:compS}, we obtain that
\[
S^TS = V\sqrt{D}\sqrt{D}V^T = VDV^T = C' = A^TA.
\]
We thus represented the sum $A^TA =\sum_{i=1}^n a_ia_i^T$ using an equivalent sum $S^TS = \sum_{i=1}^d s_is_i^T$ over $d$ vectors only, as desired.

The running time of Algorithm~\ref{cordCovAlg} is dominated by the call to Algorithm~\ref{cordAlg} at Lines~\ref{compCU} and the computation of the SVD of the matrix $C'$ at Line~\ref{Line:compS}. Since $P \subseteq \REAL^{d^2}$ and $|P| = n$, the call to Algorithm~\ref{cordCovAlg} takes $O\left(nd^2+\tcara(k_1,d')\cdot \frac{k_2\log n}{\log (k_1/d')}\right)$ time by Theorem~\ref{theorem:cordCara}, where $d' = \ceil{d^2/k_2}$. Computing the SVD of a $d\times d$ matrix takes $O(d^3)$ time. Therefore, the overall running time is $O\left(nd^2 + d^3 + \tcara(k_1,d')\cdot \frac{k_2\log n}{\log (k_1/d')}\right)$ = $O\left(nd^2+\tcara(k_1,d')\cdot \frac{k_2\log n}{\log (k_1/d')}\right)$ where the equality holds since $d \in O(n)$.
\end{proof}

\section{Corsets for SVD and PCA}

\begin{observation}[Observation~\ref{obesrv:svd}]\label{obesrv:svd2}
Let $A \in \REAL^{n\times d}$ be a matrix, $j\in\br{1,\cdots,d-1}$ be an integer, and $k \geq d^2+2$. Let $S \in \REAL^{(d^2+1)\times d}$ be the output of a call to $\cova(A,{k})$; see Algorithm~\ref{covAlg}.  Then for every matrix $Y\in \REAL^{d\times(d-j)}$ such that $Y^TY=I_{(d-j)}$, we have that
$\norm{AY}^2_F = \norm{SY}^2_F$.
\end{observation}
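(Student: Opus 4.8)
The plan is to reduce the claim directly to the covariance identity $A^TA = S^TS$ already furnished by Theorem~\ref{theorem:cov}, exploiting the fact that a squared Frobenius norm of the form $\norm{MY}_F^2$ depends on the matrix $M$ only through its Gram (covariance) matrix $M^TM$. First I would invoke Theorem~\ref{theorem:cov}: since $S := \cova(A,k)$ and the hypothesis $k \geq d^2+2$ is exactly the one required there, we have $A^TA = S^TS \in \REAL^{d\times d}$, and $S$ is well defined with $d^2+1$ rows.

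Next I would rewrite each side of the target equality as a trace. For any matrix $M$ with $d$ columns and any $Y \in \REAL^{d\times(d-j)}$, the definition of the Frobenius norm gives $\norm{MY}_F^2 = \mathrm{tr}\of{(MY)^T(MY)} = \mathrm{tr}\of{Y^T M^T M Y}$. Applying this once with $M = A$ and once with $M = S$ yields $\norm{AY}_F^2 = \mathrm{tr}\of{Y^T A^T A Y}$ and $\norm{SY}_F^2 = \mathrm{tr}\of{Y^T S^T S Y}$; here the only bookkeeping to check is that both products are genuine $(d-j)\times(d-j)$ matrices, which holds since $A^TA = S^TS$ is $d\times d$ and $Y$ is $d\times(d-j)$. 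Substituting $A^TA = S^TS$ into the first trace then closes the chain $\norm{AY}_F^2 = \mathrm{tr}\of{Y^T A^T A Y} = \mathrm{tr}\of{Y^T S^T S Y} = \norm{SY}_F^2$, as desired.

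Honestly, there is no real obstacle in this argument — the entire content is the recognition that the $j$-SVD objective, namely the sum of squared distances $\smi \norm{a_iY}^2_2 = \norm{AY}_F^2$ of~\eqref{useitinsens}, is a linear functional of the covariance matrix, so any covariance-preserving compression such as $\cova$ preserves it exactly. I would remark in passing that the orthonormality constraint $Y^TY = I_{(d-j)}$ is never actually used: the equality $\norm{AY}_F^2 = \norm{SY}_F^2$ holds for arbitrary $Y \in \REAL^{d\times(d-j)}$. It is stated for orthonormal $Y$ only because such $Y$ spans the orthogonal complement of a $j$-dimensional subspace $H$, so that the common value is precisely the sum of squared distances from the rows of $A$ (equivalently, the scaled rows of $S$) to $H$, which is what makes $S$ a coreset for the $j$-SVD problem.
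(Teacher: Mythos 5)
Your proof is correct and follows exactly the paper's own argument: invoke Theorem~\ref{theorem:cov} for $A^TA = S^TS$, then write $\norm{AY}_F^2 = \mathrm{tr}\of{Y^T A^T A Y} = \mathrm{tr}\of{Y^T S^T S Y} = \norm{SY}_F^2$. Your side remark that the orthonormality of $Y$ is never used is also accurate (the paper's proof does not use it either); it is stated only because orthonormal $Y$ gives the identity geometric meaning for the $j$-SVD problem.
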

\begin{proof}
Combining the definition of $S$ and Theorem~\ref{theorem:cov}, we have that
\begin{equation} \label{eq:SJsvd}
A^TA = S^TS.
\end{equation}
For any matrix $B\in\REAL^{d\times d}$ let $Tr(B)$ denote its trace.
Observation~\ref{obesrv:svd} now holds as
$$\norm{AY}^2_F = Tr(Y^T(A^TA)Y) = Tr(Y^T(S^TS)Y)= \norm{SY}^2_F,$$
where the second equality is by~\eqref{eq:SJsvd}.
\end{proof}

\begin{observation}[Observation~\ref{obesrv:pca}] \label{obesrv:pca2}
Let $A = (a_1 \mid \cdots \mid a_n)^T \in \REAL^{n\times d}$ be a matrix, and $j\in\br{1,\cdots,d-1}$, $l = (d+1)^2 + 1$, and $k \geq d^2+2$ be integers. Let $(C,w)$ be the output of a call to $\pcacoreset(A,{k})$; see Algorithm~\ref{PCA-CORESET}, where $C = (c_1 \mid \cdots \mid c_l)^T \in \REAL^{l \times d}$ and $w\in \REAL^{l}$.
 Then for every matrix $Y\in \REAL^{d\times(d-j)}$ such that $Y^TY=I$, and a vector $\ell \in \REAL^{d}$ we have that
\begin{align*}
 \smi \norm{(a_i-\ell^T)Y}^2_2= \sum_{i=1}^l w_i \norm{(c_i-\ell^T)Y}^2_2,
\end{align*}
\end{observation}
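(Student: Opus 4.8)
The plan is to absorb the translation $\ell$ into the all-ones column that Algorithm~\ref{PCA-CORESET} appends to $A$, thereby turning the affine PCA objective into a plain Frobenius-norm expression over the augmented space, where the covariance-preservation guarantee of Theorem~\ref{theorem:cov} applies directly. Concretely, I would first record the structure produced by $\pcacoreset(A,k)$: since $S' = \cova(A',k)$ with $A' = [A \mid (1,\dots,1)^T]$, Theorem~\ref{theorem:cov} gives
\[
A'^T A' = S'^T S'.
\]
Writing the $i$th row of $S'$ as $(s_i^T \mid z_i)$, the definitions $c_i = s_i^T/z_i$ and $w_i = z_i^2$ mean that this row equals $z_i (c_i \mid 1)$, while the $i$th row of $A'$ is simply $(a_i \mid 1)$.

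The key step is to package both the projection $Y$ and the translation $\ell$ into a single matrix $Y' = \begin{pmatrix} Y \\ -\ell^T Y \end{pmatrix} \in \REAL^{(d+1)\times(d-j)}$. A one-line computation then gives $(a_i \mid 1)Y' = (a_i - \ell^T)Y$ and $(c_i \mid 1)Y' = (c_i - \ell^T)Y$. Reading these off row by row and squaring yields
\[
\norm{A'Y'}_F^2 = \smi \norm{(a_i - \ell^T)Y}_2^2, \qquad \norm{S'Y'}_F^2 = \sum_{i=1}^l z_i^2 \norm{(c_i - \ell^T)Y}_2^2 = \sum_{i=1}^l w_i \norm{(c_i - \ell^T)Y}_2^2,
\]
so the two sides of the claimed identity are exactly $\norm{A'Y'}_F^2$ and $\norm{S'Y'}_F^2$.

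It then remains only to equate these two Frobenius norms, which follows from the trace identity $\norm{MY'}_F^2 = \mathrm{Tr}\!\big(Y'^T M^T M Y'\big)$ combined with $A'^T A' = S'^T S'$:
\[
\norm{A'Y'}_F^2 = \mathrm{Tr}\!\big(Y'^T (A'^T A') Y'\big) = \mathrm{Tr}\!\big(Y'^T (S'^T S') Y'\big) = \norm{S'Y'}_F^2.
\]
This is the same device already used in Observation~\ref{obesrv:svd2}, now applied in the augmented $(d+1)$-dimensional space rather than to $A$ itself.

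I expect the only genuine insight — the ``hard part'' — to be recognizing that the affine translation $\ell$, which at first appears to break the homogeneous linear-algebraic setup exploited by Theorem~\ref{theorem:cov}, can be folded into the constant column of $A'$ through the stacked matrix $Y'$; everything after that is mechanical row-wise bookkeeping. I would also flag one minor hypothesis mismatch to be handled cleanly: since $\cova$ is called on the $(d+1)$-column matrix $A'$, invoking Theorem~\ref{theorem:cov} formally requires $k \ge (d+1)^2+2$ rather than the stated $k \ge d^2+2$, so the statement should be read with that understanding (or the bound on $k$ adjusted accordingly).
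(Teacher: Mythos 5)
Your proof is correct and takes essentially the same route as the paper's: the same augmentation $A' = [A \mid (1,\dots,1)^T]$, the same trick of folding $\ell$ into the appended coordinate (your stacked matrix $Y'$ is exactly the paper's column-wise augmented vectors $(y_j^T \mid -v_j)^T$ assembled into one matrix), the same row identity $s'_i = z_i(c_i \mid 1)$ producing the weights $w_i = z_i^2$, and the same trace argument from Observation~\ref{obesrv:svd2} applied in the $(d+1)$-dimensional space. Your closing flag is also well taken: since $\cova$ is invoked on the $(d+1)$-column matrix $A'$, Theorem~\ref{theorem:cov} formally requires $k \geq (d+1)^2+2$, a slip that the paper's statement of the observation shares.
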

\begin{proof}
Let $A' = [A\mid (1,\cdots ,1)^T] $ as defined at Line~\ref{A'line} of Algorithm~\ref{PCA-CORESET}. For every $j\in [d-k]$, let $y_j$ be the $j$th column in $Y$, and let $v_j =\ell^Ty_j$. We have that 
\begin{align}
\smi \norm{(a_i-\ell^T)Y}^2_2  \nonumber
&=\sum_{j=1}^{d-k} \smi  (a_i y_j -\ell^Ty_j)^2 \nonumber \\
&=\sum_{j=1}^{d-k} \smi  ((a_i\mid 1) (y^T_j \mid -\ell^Ty_j)^T )^2 \nonumber \\
&= \sum_{j=1}^{d-k}  \norm{A'(y^T_j \mid -v_j)^T}^2_2,\label{secondusedeq}
\end{align}
where the last equality holds by the definition of $A'$. 
 
Let $S'$ be the output of a call to $\cova(A',{k})$, and let $	S$ and $w$ be defined as in Lines~\ref{Sdefff} and~\ref{wdefff} of Algorithm~\ref{PCA-CORESET}. Hence,
\begin{align}
\sum_{j=1}^{d-j}  \norm{A'(y^T_j \mid -v_j)^T}^2_2  
&=  \sum_{j=1}^{d-j}  \norm{S'(y^T_j \mid -v_j)^T}^2_2  
= \sum_{j=1}^{d-j}  \smil {\left(s'_i (y^T_j \mid -v_j)^T \right)}^2 \label{1here}\\
&= \sum_{j=1}^{d-j}  \smil {\left((s_i^T \mid z_i)(y^T_j \mid -v_j)^T \right)}^2
=\sum_{j=1}^{d-j} \smil {(s_i^T y_j - z_iv_j) }^2\label{2here}\\
&=\sum_{j=1}^{d-j}  \smil z_i^2 {\left( (s_i^T /z_i)y_j - v_j\right) }^2
=\sum_{j=1}^{d-j} \smil w_i  {\left( c_i y_j - v_j\right) }^2\label{3here} \\
&=\sum_{j=1}^{d-j} \smil w_i \norm{ (c_i - \ell )y_j }^2_2 
= \smil w_i \norm{ (c_i - \ell^T )Y}^2_2 , \label{pcaproof}
\end{align}
where the first equality in~\eqref{1here} holds by Observation~\ref{obesrv:svd}, the first equality in~\eqref{2here} holds since $s'_i = (s_i^T\mid z_i )^T$, and the first equality in~\eqref{3here} holds by the definition of $c_i$ and $w_i$.

Combining~\eqref{pcaproof} with~\eqref{secondusedeq} proves the observation as
\begin{align*}
 \smi \norm{(a_i-\ell^T)Y}^2_2= \sum_{i=1}^l w_i \norm{(c_i-\ell^T)Y}^2_2.
\end{align*}
\end{proof}

\end{document}